\begin{document}

\title{Fairness Under Composition} 
\date{}
\author{Cynthia Dwork\thanks{Harvard John A. Paulson School of Engineering and Applied Science, Harvard University and Radcliffe Institute for Advanced Study. This work was supported in part by Microsoft Research and the Sloan Foundation.} ~ and 
Christina Ilvento\thanks{Harvard John A. Paulson School of Engineering and Applied Science, Harvard University. This work was supported in part by the Smith Family Fellowship and Microsoft Research.}}

\renewcommand\Authands{ and }

\maketitle
\begin{abstract}

Algorithmic fairness, and in particular the fairness of scoring and classification algorithms, has become a topic of increasing social concern and has recently witnessed an explosion of research in theoretical computer science, machine learning, statistics, the social sciences, and law.  
Much of the literature considers the case of a single classifier (or scoring function) used once, in isolation.  In this work, we initiate the study of the fairness properties of systems composed of algorithms that are fair in isolation; that is, we study {\em fairness under composition}.
We identify pitfalls of na\"{\i}ve composition and give general constructions for fair composition, demonstrating both that classifiers that are fair in isolation do not necessarily compose into fair systems and also that seemingly unfair components may be carefully combined to construct fair systems.
We focus primarily on the individual fairness setting proposed in [Dwork, Hardt, Pitassi, Reingold, Zemel, 2011], but also extend our results 
to a large class of group fairness definitions 
popular in the recent literature, exhibiting
 several cases in which group fairness definitions give misleading signals under composition.
 \end{abstract}

\section{Introduction}
As automated decision-making extends its reach ever more deeply into our lives, there is increasing concern that such decisions be fair. The rigorous theoretical study of fairness in algorithmic classification was initiated by Dwork \textit{et al} in \cite{dwork2012fairness} and subsequent works investigating alternative definitions, fair representations, and impossibility results have proliferated in the machine learning, economics and theoretical computer science literatures.\footnote{See also \cite{pedreshi2008discrimination} \cite{kamiran2009classifying} and \cite{kamishima2011fairness}, which predate \cite{dwork2012fairness} and are motivated by similar concerns.}  The notions of fairness broadly divide into {\em individual fairness}, requiring that individuals who are similar with respect to a given classification task (as measured by a task-specific similarity metric) have similar probability distributions on classification outcomes; and {\em group fairness}, which requires that different demographic groups experience the same treatment in some average sense.

In a bit more detail, a {\em classification task} is the problem of mapping {\em individuals} to {\em outcomes}; for example, a decision task may map individuals to outcomes in $\{0,1\}$.  A classifier is a possibly randomized algorithm solving a classification task. 
In this work we initiate the study {\em fairness under composition}: what are the fairness properties of systems built from classifiers that are fair in isolation?  Under what circumstances can we ensure fairness, and how can we do so? A running example in this work is online advertising. If a set of advertisers compete for the attention of users, say one for tech jobs and one for a grocery delivery service, and each chooses fairly whether to bid (or not), is it the case that the advertising system (including budget handling and tie-breaking) will also be fair?

We identify and examine several types of composition and draw conclusions about auditing systems for fairness, constructing fair systems, and definitions of fairness for systems.  In the remainder of this section we summarize our results and discuss related work.  A full version of this paper, containing complete proofs of all our results, appears in the Appendix.

\paragraph{Task-Competitive Compositions} (Section \ref{section:absmultipletask}). We first consider the problem of two or more tasks competing for individuals, motivated by the online advertising setting described above. 
We prove that two advertisers for different tasks, each behaving fairly (when considered independently), will not necessarily produce fair outcomes when they compete. Intuitively (and empirically observed by \cite{lambrecht2016algorithmic}), the attention of individuals similarly qualified for a job may effectively have different costs due to these individuals' respective desirability for other advertising tasks, like household goods purchases. That is, individuals claimed by the household goods advertiser will not see the jobs ad, regardless of their job qualification.  These results are not specific to an auction setting and are robust to choice of ``tie-breaking'' functions that select among multiple competing tasks (advertisers). 
Nonetheless, we give a simple mechanism, \randomizeandclassifynospace, that solves the fair task-competitive classification problem 
using classifiers for the competing tasks each of which is fair in isolation, in a black-box fashion and without modification. In the full paper in Section \ref{section:positiveweights} we give a second technique for modifying the fair classifier of the lower bidder (loser of the tie-breaking function) in order to achieve fairness. 

\paragraph{Functional Compositions} (Section \ref{section:abssametask}).  When can we build fair classifiers by computing on values that were fairly obtained?  Here we must understand what is the salient outcome of the computation. For example, when reasoning about whether the college admissions system is fair, the salient outcome may be whether a student is accepted to at least one college, and not whether the student is accepted to a specific college\footnote{In this simple example, we assume that all colleges are equally desirable, but it is not difficult to extend the logic to different sets of comparable colleges.}. Even if each college uses a fair classifier, the question is whether the ``OR'' of the colleges' decisions is fair.  Furthermore, an acceptance to college may not be meaningful without sufficient accompanying financial aid. Thus in practice, we must reason about the OR of ANDs of acceptance and financial aid across many colleges. We show that although in general there are no guarantees on the fairness of functional compositions of fair components, there are some cases where fairness in ORs can be satisfied. Such reasoning can be used in many applications where long-term and short-term measures of fairness must be balanced. In the case of feedback loops, where prior positive outcomes can improve the chances of future positive outcomes, functional composition provides a valuable tool for determining at which point(s) fairness must be maintained and determining whether the existing set of decision procedures will adhere to these requirements.


\paragraph{Dependent Compositions} (Section \ref{section:abscohort}).  
There are many settings in which each individual's classifications are dependent on the classifications of others. For example, if a company is interviewing a set of job candidates in a particular order, accepting a candidate near the beginning of the list precludes any subsequent candidates from even being considered. Even if each candidate is considered fairly in isolation, dependence between candidates can result in highly unfair outcomes. For example, individuals who are socially connected to the company through friends or family are likely to hear about job openings first and thus be considered for a position before candidates without connections. We show that selecting a cohort of people -- online or offline -- requires care to prevent dependencies from impacting an independently fair selection mechanism. We address this in the offline case with two randomized constructions, \permutethenclassify and \weightedsamplingnospace.  These algorithms can be applied in the online case, even under adversarial ordering, provided the size of the universe of individuals is known; when this is not known there is no solution.


\paragraph{Nuances of group-based definitions}(Section \ref{section:absgroup}). 
Many fairness definitions in the literature seek to provide fairness guarantees based on group-level statistical properties. For example, Equal Opportunity~\cite{hardt2016equality} requires that, conditioned on qualification, the probability of a positive outcome is independent of protected attributes such as race or gender. Group Fairness definitions have practical appeal in that they are possible to measure and enforce empirically without reference to a task-specific similarity metric. We extend our results to group fairness definitions and we also show that these definitions do not always yield consistent signals under composition. 
In particular, we show that the intersectional subgroup concerns (which motivate \cite{kearns2017gerrymandering,hebert2017calibration}) are exacerbated by composition. For example, an employer who uses group fairness definitions to ensure parity with respect to race and gender may fail to identify that ``parents'' of particular race and gender combinations are not treated fairly. Task-competitive composition exacerbates this problem, as the employers may be prohibited from even collecting parental status information, but their hiring processes may be composed with other systems which legitimately differentiate based on parental status. 

Finally, we also show how na\"{i}ve strategies to mitigate these issues in composition may result in learning a nominally fair solution that is clearly discriminating against a socially meaningful subgroup not officially called out as ``protected,'' from which we conclude that understanding the behavior of fairness definitions under composition is critical for choosing which definition is meaningful in a given setting.

\paragraph{Implications of Our Results.} 
Our composition results have several practical implications. First, testing individual components without understanding of the whole system will be insufficient to safely draw either positive or negative conclusions about the fairness of the system.  
Second, composition properties are an important point of evaluation for any definitions of fairness or fairness requirements imposed by law or otherwise. Failing to take composition into account when specifying a group-based fairness definition may result in a meaningless signal under composition, or worse may lead to ingraining poor outcomes for certain subgroups while still nominally satisfying fairness requirements.
Third, understanding of the salient outcomes on which to measure and enforce fairness is critical to building meaningfully fair systems. 
Finally, we conclude that there is significant potential for improvement in the mechanisms proposed for fair composition and many settings in which new mechanisms could be proposed.

\subsection{Related Work}\label{section:relatedwork}
Fairness retained under post-processing in the single-task one-shot setting is central in \cite{zemel2013learning,madras2018learning,dwork2012fairness}. 
The definition of individual fairness we build upon in this work was introduced by Dwork \textit{et al} in \cite{dwork2012fairness}. 
Learning with oracle access to the fairness metric is considered by \cite{gillen2018online,kim2018fairness}. 
A number of group-based fairness definitions have been proposed, and Ritov \textit{et al} provide a combined discussion of the parity-based definitions in \cite{ritov2017conditional}. 
In particular, their work includes discussion of Hardt \textit{et al}'s Equality of Opportunity and Equal Odds definitions and Kilbertus \textit{et al}'s Counterfactual Fairness \cite{hardt2016equality,kilbertus2017avoiding}. 
Kleinberg \textit{et al} and Chouldechova independently described several impossibility results related to simultaneously satisfying multiple group fairness conditions in single classification settings \cite{DBLP:journals/corr/KleinbergMR16},\cite{chouldechova2017fair}.

Two concurrent lines of work aiming to bridge the gap between individual and group consider ensuring fairness properties for large numbers of large groups and their (sufficiently large) intersections~\cite{kearns2017gerrymandering,hebert2017calibration}.
While these works consider the one-shot, single-task setting,
we will see that group intersection properties are of particular importance under composition. 
Two subsequent works in this general vein explore approximating individual fairness with the help of an oracle that knows the task-specific metric \cite{kim2018fairness,gillen2018online}.  
Several works also consider how feedback loops can influence fair classification \cite{DBLP:journals/corr/HuC17,liu2018delayed}.

There are several empirical or observational studies which document the effects of multiple task composition. For example, Lambrecht and Tucker study how intended gender-neutral advertising can result in uneven delivery due to high demand for the attention of certain demographics \cite{lambrecht2016algorithmic}. Datta \textit{et al} also document differences in advertising based on gender, although they are agnostic as to whether the cause is due to multiple task composition or discriminatory behavior on the part of the advertisers or platform \cite{datta2015automated}.
Whether it is truly ``fair'' that, say, home goods advertisers bid more highly for the attention of women than for the attention of men, may be debatable, although there are clearly instances in which differential targeting is justified, such as maternity clothes.  This {\em actuarial fairness} is the industry practice, so we pose a number of examples in this framework and analyze the implications of composition.

\section{Preliminary Definitions and Assumptions}\label{section:absabspreliminaries}
\subsection{General Terminology}
We refer to classifiers as being ``fair in isolation'' or ``independently fair'' to indicate that with no composition, the classifier satisfies a particular fairness definition. In such cases  expectation and probability are taken over the randomness of the classification procedure and, for group fairness, selection of elements from the universe. We denote the universe of individuals relevant for a task as $U$, and we generally use $u,v,w \in U$ to refer to universe elements. 
We generally consider binary classifiers in this work, and use $p_w$ to denote the probability of assigning the positive outcome (or simply 1) to the element $w$ for a particular classifier. We generally write $C: U \times \{0,1\}^* \rightarrow \{0,1\}$, where $ \{0,1\}^*$ represents the random bits of the classifier. This allows us to comfortably express the probability of positive classification $\E_r[C(u)]$ as well as the output of the classifier under particular randomness $C(u,r)$. In this notation, $p_u = \E_r[C(u)]$. When considering the distribution on outputs of a classifier $C$, we use $\dist{C}:U \rightarrow \Delta(\{0,1\})$. 
When two or more classifiers or tasks are compared, we either use a subscript $_i$ to indicate the $i^{th}$ classifier or task, or a prime $(')$ to indicate the second classifier or task. For example $\{C,C'\}$, $\{C_i | i \in [k]\}$, $\{T,T'\}$, $\{T_i| i \in [k]\}$.

\subsection{Individual Fairness}
Throughout this work, our primary focus is on {\em individual fairness}, proposed by Dwork \textit{et al} in \cite{dwork2012fairness}.
As noted above, a {\em classification task} is the problem of mapping {\em individuals} in a universe to {\em outcomes}.

\begin{definition}[Individual Fairness  \cite{dwork2012fairness}]\label{abs:def:individualfairness}Let $d:\Delta(O) \times \Delta(O)\rightarrow [0,1]$ denote the total variation distance on distributions over $O$\footnote{\cite{dwork2012fairness} also considered other notions of distributional distance.}. Given a universe of individuals $U$, and a task-specific metric $\D$ for a classification task $T$ with outcome set $O$, a randomized classifier $C:U\times \{0,1\}^* \rightarrow O$, such that $\dist{C}: U \rightarrow \Delta(O)$, 
is \textit{individually fair} 
if and only if for all $u,v \in U$, $\D(u,v) \geq d(\dist{C}(u),\dist{C}(v))$.
\end{definition}

Note that when $|O|=2$ we have $d(\dist{C}(u),\dist{C}(v)) = |\E_r[C(u)] - \E_r[C(v)]|=|p_u-p_v|$.
In several proofs we will rely on the fact that it is possible to construct individually fair classifiers with particular distance properties (see Lemma \ref{lemma:fairadditions} and corollaries in Section \ref{section:preliminaries}).

\subsection{Group Fairness}\label{section:absgroupdefs}

In principle, all our individual fairness results extend to group fairness definitions; however, there are a number of technicalities and issues unique to group fairness definitions, which we discuss in Section \ref{section:absgroup}.
Group fairness is often framed in terms of {\em protected attributes} $\A$, such as sex, race, or socio-economic status, while allowing for differing treatment based on a set of \textit{qualifications} $\mathcal{Z}$, such as, in the case of advertising, the willingness to buy an item.
Conditional Parity, a general framework proposed in \cite{ritov2017conditional} for discussing these definitions, conveniently captures many of the popular group fairness definitions popular in the literature including Equal Odds and Equal Opportunity \cite{hardt2016equality}, and Counterfactual Fairness \cite{kusner2017counterfactual}. 
\begin{definition}[Conditional Parity \cite{ritov2017conditional}]\label{abs:def:conditionalparity} 
A random variable $\mathbf{x}$ satisfies parity with respect to $\mathbf{a}$ conditioned on $\mathbf{z}=z$ if the distribution of $\mathbf{x}$ $|$ $(\mathbf{a},\{\mathbf{z}=z\})$ is constant in $\mathbf{a}$:\\
$\Pr[\mathbf{x}=x \text{ }|\text{ }( \mathbf{a}=a,\mathbf{z}=z)] = \Pr[\mathbf{x}=x \text{ }|\text{ }( \mathbf{a}=a',\mathbf{z}=z)]$
for any $a,a' \in \mathcal{A}$.  Similarly, $\mathbf{x}$ satisfies parity with respect to $\mathbf{a}$ conditioned on $\mathbf{z}$ (without specifying a value of $\mathbf{z}$) if it satisfies parity with respect to $\mathbf{a}$ conditioned on $\mathbf{z}=z$ for all $z \in \mathcal{Z}$. All probabilities are over the randomness of the prediction procedure and the selection of elements from the universe.
\end{definition}

\section{Multiple-Task Composition}\label{section:absmultipletask}
First, we consider the problem of composition of classifiers for multiple tasks where the outcome for more than one task is decided. 
Multiple Task Fairness, defined next, requires fairness to be enforced independently and simultaneously for each task.

\begin{definition}[Multiple Task Fairness]\label{abs:def:multipletaskfairness} For a set $\mathcal{T}$ of $k$ tasks with metrics $\mathcal{D}_1, \ldots, \mathcal{D}_k$,  a (possibly randomized) system $\mathcal{S}: U \times r \rightarrow \{0,1\}^k$, which assigns outputs for task $i$ in the $i^{th}$ coordinate of the output, satisfies multiple task fairness if for all $i \in [k]$ and all $u,v \in U$
$\mathcal{D}_i(u,v) \geq |\E[\mathcal{S}_i(u)]-\E[\mathcal{S}_i(v)]|$ 
where $\E[\mathcal{S}_i(u)]$ is the expected outcome for the $i^{th}$ task in the system $\mathcal{S}$ and where the expectation is over the randomness of the system and all its components. 
\end{definition}

\subsection{Task-Competitive Composition}
We now pose the relevant problem for multiple task fairness: competitive composition.

\begin{definition}[Single Slot Composition Problem]\label{abs:def:singleslotproblem}
A (possibly randomized) system $\mathcal{S}$ is said to be a solution to the single slot composition problem 
for a set of $k$ tasks $\mathcal{T}$ with metrics $\mathcal{D}_1, \ldots, \mathcal{D}_k$, 
if $\forall u \in U$, $\mathcal{S}$ assigns outputs  for each task $\{x_{u,1}, \ldots, x_{u,k}\} \in \{0,1\}^k$ such that
$\sum_{i \in [k]}x_{u,i} \leq 1$, 
and $\forall i \in [k]$, and $\forall$ $u,v \in U$,
$\mathcal{D}_i(u,v) \geq |\E[x_{u,i}]-\E[x_{v,i}]|$.
\end{definition}

The single slot composition problem captures the scenario in which an advertising platform may have a single slot to show an ad but need not show any ad. Imagine that this advertising system only has two types of ads: those for jobs and those for household goods. If a person is qualified for jobs and wants to purchase household goods, the system must pick at most one of the ads to show. In this scenario, it may be unlikely that the advertising system would choose to show no ads, but the problem specification does not require that any positive outcome is chosen. 

To solve the single-slot composition problem we must build a system which chooses at most one of the possible tasks so that fairness is preserved for each task across all elements in the universe. Clearly if classifiers for each task may \textit{independently} and fairly assign outputs, the system as a whole satisfies multiple task fairness.
However, most systems will require trade-offs between tasks. 
Consider a na\"{i}ve solution to the single-slot problem for ads: each advertiser chooses to bid on each person with some probability, and if both advertisers bid for the same person, the advertiser with the higher bid gets to show her ad. Formally, we define a \tiebreakingfunction~and Task-Competitive Composition:
\begin{definition}[Tie-breaking Function]\label{abs:def:tiebreaking} A (possibly randomized) \textit{\tiebreakingfunction} $\Tiebreak: U \times \{0,1\}^* \times \{0,1\}^k \rightarrow [k] \cup \{0\}$ takes as input an individual $w\in U$ and a $k-$bit string $x_w$ and outputs the index of a ``1" in $x_w$ if such an index exists and 0 otherwise. 
\end{definition}

\begin{definition}[Task-Competitive Composition]\label{abs:def:choicecomp}
Consider a set $\T$ of $k$ tasks, and a \tiebreakingfunction~as defined above.
Given a set $\mathcal{C}$ of classifiers for the set of tasks, define $y_w = \{y_{w,1}, \ldots, y_{w,k}\}$ where $y_{w,i}=C_i(w)$. The \competitivecomposition~of the set $\mathcal{C}$ is defined as
$y_{w}^* = \Tiebreak(w,y_w)$ 
for all $w\in U$.
\end{definition}

Definition~\ref{abs:def:choicecomp} yields a system~$S$ defined by
$S(w) = 0^k$ if $y_w = 0^k$ and $S(w) = e_{\Tiebreak(w,y_w)}$ (the $\Tiebreak(w,y_w)$ basis vector of dimension~$k$) if $y_w \not= 0^k$.  We evaluate its fairness by examining the Lipschitz requirement $|\Pr[y^*_u = i]-\Pr[y^*_v = i]| \le \mathcal{D}_i$ for all $u,v \in U$ and $i\in [k]$.

Task-competitive composition can reflect many scenarios other than advertising, which are discussed in greater detail in the full paper. Note that the \tiebreakingfunction~need not encode the same logic for all individuals and may be randomized.

\begin{theorem}\label{abs:theorem:choiceunfairness}
For any two tasks $T$ and $T'$ with nontrivial metrics $\mathcal{D}$ and $\mathcal{D'}$ respectively, there exists a set $\C$ of classifiers which are individually fair in isolation but when combined with \competitivecomposition~violate multiple task fairness for any \tiebreakingfunction.
\end{theorem}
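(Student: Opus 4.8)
\emph{Proof idea.} For any such $T,T'$ we construct classifiers $C$ (for $T$) and $C'$ (for $T'$) that are individually fair in isolation but whose \competitivecomposition~fails multiple task fairness no matter which \tiebreakingfunction~is used. The intuition is the ``household goods outbids jobs'' phenomenon from the introduction: since at most one bidder can win the slot, a competing task can effectively ``steal'' the positive outcome from an individual attractive to it, so a task's \emph{effective} outcome distribution may pull apart on a pair $u,v$ that that task's own classifier treated as (nearly) interchangeable. A \tiebreakingfunction~has genuine freedom on an individual $w$ only when both $C$ and $C'$ bid on $w$; so the plan is to pick a pair $u,v$ that $\mathcal{D}$ and $\mathcal{D}'$ both deem close, and to design $C,C'$ (e.g.\ by positively correlating $C$ and $C'$ on one of the two and negatively correlating them on the other) so that the ``both bid'' event essentially never occurs on one of $u,v$---making that individual's outcome distribution immune to the tie-breaking---while it occurs often on the other, so that individual's split between the tasks is dictated by the tie-breaking. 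Nontriviality of $\mathcal{D}$ and $\mathcal{D}'$ is used to secure a pair $u,v$ with enough distance slack, and individually fair classifiers realizing the target bid probabilities come from the constructions of Section~\ref{section:preliminaries} (in particular Lemma~\ref{lemma:fairadditions} and its corollaries).

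Carrying this out: (i) use nontriviality to fix $u,v$ and target bid probabilities $p_u,p_v$ (within $\mathcal{D}(u,v)$ of each other) and $p'_u,p'_v$ (within $\mathcal{D}'(u,v)$), and realize them by individually fair $C,C'$ with the prescribed correlation structure; (ii) for an arbitrary \tiebreakingfunction, note that for two tasks its only relevant parameter at $w$ is the probability $q_w$ with which it resolves a ``both bid'' event in favor of $T$, so each $\Pr[y^{*}_w=i]$ is the isolated bid probability \emph{dampened} by a $q_w$-dependent amount bounded by $p_w p'_w$---a quantity independent of the $q$'s on the immune individual and monotone in the relevant $q$ on the other; (iii) case-split on the tie-breaking behavior at the non-immune individual. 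When ties there resolve predominantly toward $T$, the effective $T'$-outcomes of $u$ and $v$ will differ by more than $\mathcal{D}'(u,v)$; when they resolve predominantly toward $T'$, the effective $T$-outcomes will differ by more than $\mathcal{D}(u,v)$; and choosing $p_u,p_v,p'_u,p'_v$ appropriately relative to $\mathcal{D}(u,v)$ and $\mathcal{D}'(u,v)$ makes these two ranges of tie-breaking behavior overlap, so that every \tiebreakingfunction~lands in one of them and thus violates the Lipschitz requirement of Definition~\ref{abs:def:multipletaskfairness} for some task on the pair $\{u,v\}$.

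I expect step (iii) to be the main obstacle: the quantifier ranges over \emph{all} tie-breaking functions, which may be randomized and may act independently on different individuals, so a careless construction is easily neutralized---for instance, two \emph{independent} individually fair classifiers supported on a single pair $\{u,v\}$ leave the tie-break enough slack (the two numbers $q_u,q_v$) to keep both output coordinates Lipschitz simultaneously, so no violation occurs. Defeating every tie-breaking function therefore forces the construction so that the set of tie-breaking behaviors that are unfair for $T$ and the set that are unfair for $T'$ leave no gap between them; pinning down the quantitative relationship between the bid probabilities and the two distances $\mathcal{D}(u,v),\mathcal{D}'(u,v)$---and choosing $u,v$ so that nontriviality of \emph{both} metrics furnishes the needed slack (which, absent correlation, may require a third individual)---is where the work concentrates. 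Checking that the constructed $C,C'$ are individually fair in isolation and that the dampened-probability expressions are correct is then routine given the preliminaries.
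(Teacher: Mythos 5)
Your proposal diverges from the paper's proof in a way that matters: you read the theorem as asserting a single set $\C$ that defeats \emph{every} \tiebreakingfunction~simultaneously ($\exists\,\C\ \forall\,\Tiebreak$), whereas the paper's argument (Lemma \ref{lemma:funnelunfairness} together with the proof of Theorem \ref{theorem:choiceunfairness}) establishes the other quantifier order. It fixes a pair $u,v$ with $\D(u,v)>0$, then performs a case analysis on the \emph{given} \tiebreakingfunction~(strict preference for one task everywhere; $\Tiebreak_u(T)=\Tiebreak_v(T)\neq 1$; $\Tiebreak_u(T)\neq\Tiebreak_v(T)$) and in each case chooses $C$ and $C'$ \emph{after} seeing the tie-break, using Corollary \ref{cor:maxdist} to set $p_u-p_v=\D(u,v)$ and Corollary \ref{cor:ratio} to make the ratio $p_u'/p_v'$ whatever the inequality $p_vp_v'(1-\Tiebreak_v(T))>p_up_u'(1-\Tiebreak_u(T))$ demands. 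Because the classifiers are tailored to the tie-break, the paper never needs the ``unfair regions'' of tie-break space to cover all tie-breaks --- which is precisely the step you identify as the main obstacle and leave unresolved.

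Second, the mechanism you propose for achieving uniformity over tie-breaks --- correlating the random bits of $C$ and $C'$ so that the ``both bid'' event is suppressed at one individual and forced at the other --- is not available in the paper's model: throughout, including in the proof sketch of this very theorem, joint event probabilities are computed as products of marginals, e.g.\ $\Pr[\mathcal{S}(u)_{T'}=1]=(1-p_u)p_u'$, so the two classifiers draw independent randomness. Your own (correct) observation that two \emph{independent} individually fair classifiers whose construction is only ``active'' on a single pair can always be neutralized by a suitable choice of $q_u,q_v$ then cuts against you: under the paper's independence convention, step (iii) does not merely concentrate the work, it fails for any single-pair construction, and the multi-element or correlated alternative is not carried out. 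As written, the proposal therefore neither reproduces the paper's argument nor completes the stronger uniform statement it aims at; the decisive step is acknowledged but left open.
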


\begin{proof} (Sketch) We sketch the proof for a simpler setting in which
the tie-breaking function strictly prefers task $T$, that is whenever the classifiers for $T$ an $T'$ both return 1, task~$T$ is chosen, and there exists a pair $u,v \in U$ such that $\D(u,v) \neq 0$ and $\D'(u,v)=0$\footnote{See Section \ref{section:multipletask} for a complete treatment of competitive composition.}.

Our strategy is to construct $C$ and $C'$ such that the distance between a pair of individuals is stretched for the `second' task.

Let $p_u$ denote the probability that $C$ assigns 1 to $u$, and analogously $p_v,p_u',p_v'$. 
The probabilities that $u$ and $v$ are assigned 1 for the task $T'$ are
$\Pr[\mathcal{S}(u)_{T'} = 1] = (1-p_u)p_u'$ and
$\Pr[\mathcal{S}(v)_{T'} = 1] = (1-p_v)p_v'$. 
The difference between them is 
\[\Pr[\mathcal{S}(u)_{T'} = 1] - \Pr[\mathcal{S}(v)_{T'} = 1] = (1-p_u)p_u' - (1-p_v)p_v'
= p_u'  - p_v' + p_vp_v' - p_up_u'\]
By assumption $\mathcal{D'}(u,v)=0$, so for any choice of $p_u'=p_v'>0$ and for any choice of $p_u \neq p_v$, this quantity is not zero, giving the desired contradiction.

\end{proof}
The intuition for unfairness in such a strictly ordered composition is that each task inflicts its preferences on subsequent tasks, and this intuition extends to more complicated tie-breaking functions and individuals with positive distances in both tasks.

Our intuition suggests that the situation in Theorem~\ref{abs:theorem:choiceunfairness} is not contrived and occurs often in practice, and moreover that small relaxations will not be sufficient to alleviate this problem, as the phenomenon has been observed empirically \cite{datta2015automated,lambrecht2016algorithmic,kuhn2012gender}. 
We include a small simulated example in Appendix \ref{section:empirical} to illustrate the potential magnitude and frequency of such fairness violations.

\subsection{Simple Fair Multiple-task Composition}
Fortunately, there is a general purpose mechanism for the single slot composition problem which requires no additional information in learning each classifier and no additional coordination between the classifiers.\footnote{See section \ref{section:positiveweights} for another mechanism which requires coordination between the classifiers.} The rough procedure for \randomizeandclassify (specified in detail in Section \ref{section:fairmultipletask} Algorithm \ref{alg:randomizethenclassify}) is to fix a fair classifier for each task, fix a probability distribution over the tasks, sample a task from the distribution, and then run the fair classifier for that task. 
\randomizeandclassify has several nice properties: it requires no coordination in the training of the classifiers, it preserves the ordering and relative distance of elements by each classifier, and it can be implemented by a platform or other third party, rather than requiring the explicit cooperation of all classifiers.
The primary downside of \randomizeandclassify is that it reduces allocation (the total number of positive classifications) for classifiers trained with the expectation of being run independently.

\section{Functional Composition}\label{section:abssametask}

In {\em Functional Composition}, the outputs of multiple classifiers are combined through logical operations to produce a single output for a single task. A significant consideration in functional composition is determining which outcomes are relevant for fairness and at which point(s) fairness should be measured.
For example, (possibly different) classifiers for admitting students to different colleges are composed to determine whether the student is accepted to at least one college.  In this case,  the function is ``OR," the classifiers are for the same task, and hence conform to the same metric, and this is the same metric one might use for defining fairness of the system as a whole.  Alternatively, the system may compose the classifier for admission with the classifier for determining financial aid.  In this case the function is ``AND," the classifiers are for different tasks, with different metrics, and we may use scholastic ability or some other appropriate output metric for evaluating overall fairness of the system.

\subsection{Same-task Functional Composition}
In this section, we consider the motivating example of college admissions. When secondary school students apply for college admission, they usually apply to more than one institution to increase their odds of admission to at least one college. 
Consider a universe of students $U$ applying to college in a particular year, each with intrinsic qualification $q_u \in [0,1]$, $\forall u \in U$. We define $\D(u,v)=|q_u-q_v|$ $\forall u,v \in U.$ 
$\mathcal{C}$ is the set of colleges and assume each college $C_i \in \mathcal{C}$ admits students fairly with respect to $\D$. The system of schools is considered OR-fair if the indicator variable $x_u$ which indicates whether or not student $u$ is admitted to at least one school satisfies individual fairness under this same metric. More formally, 

\begin{definition}[OR Fairness]\label{abs:def:orfairness}
Given a (universe, task) pair with metric $\mathcal{D}$, and a set of classifiers $\mathcal{C}$
we define the indicator
\[x_u = \begin{cases}1 \text{ if } \sum_{C_i \in \mathcal{C}}C_i(x) \geq 1\\ 0 \text{ otherwise}\end{cases}\]
which indicates whether at least one positive classification occurred.
Define $\dist{x}_u=\Pr[x_u=1] = 1-\prod_{C_i \in \mathcal{C}} (1-\Pr[C_i(u)=1])$. Then the composition of the set of classifiers $\mathcal{C}$
satisfies \textit{OR Fairness} if 
$\mathcal{D}(u,v) \geq d(\dist{x}_u,\dist{x}_v)$ for all $u,v \in U$.
\end{definition}

The OR Fairness setting matches well to tasks where individuals primarily \textit{benefit} from one positive classification for a particular task.\footnote{We may conversely define NOR Fairness to take $\neg x_u$, and this setting more naturally corresponds to cases where not being classified as positive is desirable.}
As mentioned above, examples of such tasks include gaining access to credit or a home loan, admission to university, access to qualified legal representation, access to employment, {\it etc}\footnote{\cite{DBLP:journals/corr/BowerKNSVV17} considers what boils down to AND-fairness for Equal Opportunity and presents an excellent collection of evocative example scenarios.}. Although in some cases more than one acceptance may have positive impact, for example a person with more than one job offer may use the second offer to negotiate a better salary, the core problem is (arguably) whether or not at least one job is acquired.

Returning to the example of college admissions, even with the strong assumption that each college fairly evaluates its applicants, there are still several potential sources of unfairness in the resulting system. In particular, if students apply to different numbers of colleges or colleges with different admission rates, we would expect that their probabilities of acceptance to at least one college will be different. The more subtle scenario from the perspective of composition is when students apply to the \textit{same} set of colleges.

Even in this restricted setting, it is still possible for a set of classifiers for the same task to violate OR fairness.
The key observation is that for elements with positive distance, the difference in their expectation of acceptance by at least one classifier does not diverge linearly in the number of classifiers included in the composition. 
As the number of classifiers increases, the probabilities of positive classification by at least one classifier for any pair eventually converge.
However, in practice, we expect students to apply to perhaps five or 10 colleges, so it is desirable to characterize when small systems are robust to such composition.

\begin{theorem}\label{abs:theorem:orsame}
For any (universe, task) pair with a non-trivial metric $\mathcal{D}$,
there exists a set of individually fair classifiers $\mathcal{C}$ which do not satisfy OR Fairness, even if each element in $U$ is classified by all $C_i \in \mathcal{C}$.
\end{theorem}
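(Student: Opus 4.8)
The plan is to exhibit, for any pair $u,v$ witnessing non-triviality of $\mathcal{D}$ — write $\mathcal{D}(u,v) = \delta$ with $0 < \delta$ (and, as discussed below, $\delta < 1$) — a single individually fair classifier $C$, and to take $\mathcal{C} = \{C_1,\dots,C_k\}$ to be $k$ independent copies of $C$; OR Fairness then fails once $k$ is large. Concretely I would invoke the construction machinery referenced earlier (Lemma~\ref{lemma:fairadditions} and its corollaries), or simply use the map $w \mapsto \tfrac12\mathcal{D}(w,v)$, which is $1$-Lipschitz in $w$ by the triangle inequality (hence individually fair) and satisfies $p_v = 0$ and $p_u = \delta/2 > 0$; any value $p_u \in (0,\delta]$ works equally well. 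Note that this $C$ is defined on all of $U$, so in the composition every element is classified by every $C_i$: the hypothesis ``even if each element in $U$ is classified by all $C_i$'' is met, and the obvious source of OR-unfairness (applicants facing different numbers or selectivities of colleges) is thereby explicitly ruled out.

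Next I would compute the OR-acceptance probabilities under the independence built into Definition~\ref{abs:def:orfairness}: $\dist{x}_v = 1-(1-p_v)^k = 0$ while $\dist{x}_u = 1-(1-p_u)^k$. Since $0 < p_u \le \delta$, we have $\dist{x}_u \to 1$ as $k \to \infty$, so any $k$ with $(1-p_u)^k < 1-\delta$ — i.e. $k > \log(1-\delta)/\log(1-p_u)$ — gives $d(\dist{x}_u,\dist{x}_v) = \dist{x}_u > \delta = \mathcal{D}(u,v)$, violating OR Fairness. This also makes precise the remark in the paragraph preceding the theorem: the per-classifier gap can be held exactly at the metric bound $\delta$, yet because $v$'s acceptance probability is pinned at $0$ while $u$'s is driven toward $1$, the OR gap grows until it exceeds $\delta$ (it does not ``diverge linearly,'' but it need not — a single step past $\delta$ suffices). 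If one insists the classifiers in $\mathcal{C}$ be distinct rather than identical copies, I would perturb each $C_i$ by an arbitrarily small amount, keeping $p^{(i)}_v = 0$ and $p^{(i)}_u$ positive; this moves $\dist{x}_u$ negligibly and preserves the strict inequality.

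The only genuine obstacle is bookkeeping at the boundary: the argument needs the witnessing pair to have $\mathcal{D}(u,v) < 1$, since if the distance were exactly $1$ then $d(\dist{x}_u,\dist{x}_v) \le 1 = \mathcal{D}(u,v)$ always and no OR violation is possible. I would either absorb this into the meaning of ``non-trivial metric,'' or observe that in the college-admissions setting the metric is $\mathcal{D}(u,v) = |q_u - q_v|$ with $q \in [0,1]$, so one simply selects two qualification values lying in the open interval. Everything else — $1$-Lipschitz-ness of the metric-based classifier, the product formula for $\dist{x}$, and the limit $\dist{x}_u \to 1$ — is routine, so no further technical difficulty is expected.
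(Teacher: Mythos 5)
Your proposal is correct and follows essentially the same route as the paper: both construct a single individually fair classifier with a prescribed gap between a witnessing pair (using the machinery of Lemma~\ref{lemma:fairadditions}), compose identical copies of it, and show the OR stretches that gap beyond $\mathcal{D}(u,v)$, relying on the paper's definition of a non-trivial metric to supply a pair with $0<\mathcal{D}(u,v)<1$ (so your boundary worry is already absorbed into the hypothesis, exactly as you suggest). The only difference is quantitative: by taking $p_u-p_v=\mathcal{D}(u,v)$ with $p_u+p_v<1$ the paper exhibits the violation with just two copies, whereas pinning $p_v=0$ and $p_u=\delta/2$ forces you to take $k$ large enough that $(1-p_u)^k<1-\delta$.
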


The proof of Theorem \ref{abs:theorem:orsame}  follows from a straightforward analysis of the difference in probability of at least one positive classification.\footnote{See Section \ref{section:sametask} for the complete proof.} 
The good news is that there exist non-trivial conditions for sets of small numbers of classifiers where OR Fairness is satisfied:  
\begin{lemma}\label{abs:lemma:sametaskgeneral}
Fix a set $\mathcal{C}$ of fair classifiers, and let $x_w$ for $w\in U$ be the indicator variable as defined in Theorem \ref{theorem:ordifferent}.
If $\E[x_w]\geq 1/2$ for all $w \in U$,  
then the set of classifiers $\mathcal{C} \cup \{C'\}$ satisfies OR fairness if $C'$ satisfies individual fairness under the same metric and $\Pr[C'(w)=1]\geq \frac{1}{2}$ for all $w \in U$.

\end{lemma}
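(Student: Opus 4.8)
The plan is to verify the Lipschitz condition of Definition~\ref{abs:def:orfairness} directly for the augmented family $\mathcal{C}\cup\{C'\}$, exploiting the multiplicative form of the OR-probability. Write $a=\dist{x}_u$ and $b=\dist{x}_v$ for the probabilities that at least one classifier in $\mathcal{C}$ outputs $1$ on $u$ and on $v$, and write $p=\Pr[C'(u)=1]$, $q=\Pr[C'(v)=1]$. Since the classifiers are run with independent randomness, the OR-indicator $x'_w$ for $\mathcal{C}\cup\{C'\}$ satisfies $1-\dist{x'}_w=(1-\dist{x}_w)\bigl(1-\Pr[C'(w)=1]\bigr)$, so $1-\dist{x'}_u=(1-a)(1-p)$ and $1-\dist{x'}_v=(1-b)(1-q)$, and therefore
\[\dist{x'}_u-\dist{x'}_v=(1-b)(1-q)-(1-a)(1-p).\]

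The key algebraic step is to split this difference by inserting the cross term $(1-a)(1-q)$, producing a ``$\mathcal{C}$-error'' term and a ``$C'$-error'' term:
\[\dist{x'}_u-\dist{x'}_v=(1-q)\,(a-b)\;+\;(1-a)\,(p-q).\]
Each factor is then easy to control. The hypothesis $\Pr[C'(v)=1]\geq\tfrac12$ gives $1-q\leq\tfrac12$, and the hypothesis $\E[x_u]\geq\tfrac12$ gives $1-a\leq\tfrac12$. For the differences, $|p-q|\leq\mathcal{D}(u,v)$ because $C'$ is individually fair under $\mathcal{D}$ (and $|O|=2$, so $d(\dist{C'}(u),\dist{C'}(v))=|p-q|$), and $|a-b|=|\dist{x}_u-\dist{x}_v|\leq\mathcal{D}(u,v)$ because $\mathcal{C}$ already satisfies OR fairness. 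In the way the lemma is meant to be used this last fact is supplied inductively: a single individually fair classifier is trivially OR-fair, and each application both preserves OR fairness (the present argument) and preserves the invariant $\E[x_w]\geq\tfrac12$, since $\E[x'_w]=1-(1-\E[x_w])(1-\Pr[C'(w)=1])\geq\E[x_w]$.

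It remains to combine the two terms, which needs only a short case split on their signs. If $a-b$ and $p-q$ have the same sign the contributions add in absolute value, so $|\dist{x'}_u-\dist{x'}_v|=(1-q)|a-b|+(1-a)|p-q|\leq\tfrac12\mathcal{D}(u,v)+\tfrac12\mathcal{D}(u,v)=\mathcal{D}(u,v)$; if they have opposite signs they partially cancel and $|\dist{x'}_u-\dist{x'}_v|\leq\max\{(1-q)|a-b|,\,(1-a)|p-q|\}\leq\tfrac12\mathcal{D}(u,v)$. Either way the Lipschitz bound holds for every pair $u,v\in U$, which is exactly OR fairness of $\mathcal{C}\cup\{C'\}$.

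I expect the heart of the matter to be conceptual rather than computational: the statement holds precisely because the two ``$\geq\tfrac12$'' hypotheses pin both amplification factors $1-q$ and $1-a$ at $\leq\tfrac12$, so the error already present in $\mathcal{C}$ and the new error contributed by $C'$ can each be charged at most half the metric budget; without that control, adjoining a fair classifier to an OR-fair family can overshoot $\mathcal{D}(u,v)$ (cf.\ Theorem~\ref{abs:theorem:orsame}). The point to be careful about is therefore the sign analysis that yields the exact constant $1$, together with making explicit that the argument consumes OR fairness of $\mathcal{C}$ itself — which is why the lemma is naturally read as the inductive step for assembling larger OR-fair families one classifier at a time.
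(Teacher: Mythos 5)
Your proof is correct and follows essentially the same route as the paper's: after expanding the OR probabilities, your decomposition $(1-q)(a-b)+(1-a)(p-q)$ is algebraically identical to the paper's expression $t(1-q)+t'(1-b)-tt'$, and both arguments then use the two ``$\geq\tfrac12$'' hypotheses to cap each coefficient at $\tfrac12$ and finish with the same sign case analysis. Your explicit remark that the argument consumes OR fairness of $\mathcal{C}$ itself (supplied inductively) matches the paper's implicit ``by assumption'' and is a fair reading of how the lemma is intended to be applied.
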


This lemma is useful for determining that a system is free from \sametaskdivergence, as it is possible to reason about an ``OR of ORs''. 

Functional composition can also be used to reason about settings where classification procedures for different tasks are used to determine the outcome for a single task. For example, in order to attend a particular college, a student must be admitted \textit{and} receive sufficient financial aid to afford tuition and living expenses. Financial need and academic qualification clearly have different metrics, and in such settings, a significant challenge is to understand how the input metrics relate to the relevant output metric. Without careful reasoning about the interaction between these tasks, it is very easy to end up with systems which violate individual fairness, even if they are constructed from individually fair components. (See Section \ref{section:multipletaskfunctional} Theorem \ref{theorem:andunfairness} for more details.)

\section{Dependent Composition}\label{section:abscohort}
Thus far, we have restricted our attention to the mode of operation in which classifiers act on the entire universe of individuals at once and each individual's outcome is decided independently. In practice, however, this is an unlikely scenario, as classifiers may be acting as a selection mechanism for a fixed number of elements, may operate on elements in arbitrary order, or may operate on only a subset of the universe.
In this section, we consider the case in which the classification outcomes received by individuals are not independent. Slightly abusing the term ``composition,'' these problems  
can be viewed as a composition of the classifications of elements of the universe.
We roughly divide these topics into Cohort Selection problems, when a set of exactly $n$ individuals must be selected from the universe, and Universe Subset problems, when only a subset of the relevant universe for the task is under the influence of the classifier we wish to analyze or construct.
Within these two problems we  consider several relevant settings:

\textbf{Online versus offline:} 
Advertising decisions for online ads must be made immediately upon impression and employers must render employment decisions quickly or risk losing out on potential employees or taking too long to fill a position.

\textbf{Random versus adversarial ordering:} 
The order in which individuals apply for an open job may be influenced by their social connections with existing employees, which impacts how quickly they hear about the job opening.

\textbf{Known versus unknown subset or universe size:}  
An advertiser may know the average number of interested individuals who visit a website on a particular day, but be uncertain on any particular day of the exact number. 

\textbf{Constrained versus unconstrained selection:} in many settings there are arbitrary constraints placed on selection of individuals for a task which are unrelated to the qualification or metric for that task. For example, to cover operating costs, a college may need at least $n/2$ of the $n$ students in a class to be able to pay full tuition.

In dependent composition problems, it is important, when computing distances between distributions over outcomes, to pay careful attention to the source of randomness. Taking inspiration from the experiment setup found in many cryptographic definitions, we  formally define two problems, Universe Subset Classification and Cohort Selection, in Section \ref{section:cohort}. In particular, it is important to understand the randomness used to decide an ordering or a subset, as once an ordering or subset is fixed, reasoning about fairness is impossible, as a particular individual may be arbitrarily included or excluded.

\subsection{Basic Offline Cohort Selection}
First we consider the simplest version of the cohort selection problem: choosing a cohort of $n$ individuals from the universe $U$ when the entire universe is known and decisions are made offline. 
A simple solution is to choose a permutation of the elements in $U$ uniformly at random, and then apply a fair classifier $C$ until $n$ are selected or selecting the last few elements from the end of the list if $n$ have not yet been selected. With some careful bookkeeping, we show that this mechanism is individually fair for any individually fair input classifier. (See Section \ref{section:cohort} Algorithms \ref{alg:permutethenclassify} and \ref{alg:weightedsampling}.)

\subsection{More complicated settings}
In this extended abstract, we omit a full discussion of the more complicated dependent composition scenarios, but briefly summarize several settings to build intuition.

\begin{theorem}\label{thm:adversarialunknownlength}
If the ordering of the stream is adversarial, but $|U|$ is unknown, then there exists no solution to the online cohort selection problem.
\end{theorem}
The intuition for the proof follows from imagining that a fair classification process exists for an ordering of size $n$ and realizing that this precludes fair classification of a list of size $n+1$, as the classification procedure cannot distinguish between the two cases.

\paragraph{Constrained cohort selection}
Next we consider the problem of selecting a cohort with an external requirement that some fraction of the selected set is from a particular subgroup. That is, given a universe $U$, and $p \in [0,1]$, and a subset $A \subset U$, select a cohort of $n$ elements such that at least a $p$ fraction of the elements selected are in $A$. This problem captures situations in which external requirements cannot be ignored. For example, if a certain budget must be met, and only some members of the universe contribute to the budget, or if legally a certain fraction of people selected must meet some criterion (as in, demographic parity). In the full version, we characterize a broad range of settings where the constrained cohort selection problem cannot be solved fairly.

To build intuition, suppose the universe $U$ is partitioned into sets $A$ and $B$, where $n/2=|A| = |B|/5$. Suppose further that the populations have the same distribution on ability, so that the set $B$ is a ``blown up" version of $A$, meaning that for each element $u \in A$ there are 5 corresponding elements $V_u = \{v_{u,1},...,v_{u,5}\}$ such that $\mathcal{D}(u,v_{u,i})= 0$, $1 \le i \le 5$, $\forall u,u' \in A~ V_u \cap V_{u'} = \emptyset$, and $B = \cup_{u \in A} V_u$. 
Let $p = \frac{1}{2}$.  The constraint requires all of $A$ to be selected; that is, each element of $A$ has probability 1 of selection. In contrast, the average probability of selection for an element of $B$ is $\frac{1}{5}$. Therefore, there exists $v\in B$ with selection probability at most $1/5$.  Letting $u\in A$ such that $v \in V_u$, we have $\mathcal{D}(u,v)=0$ but the difference in probability of selection is at least $\frac{4}{5}$. 
We give a more complete characterization of the problem and impossibilities in Section \ref{section:constrainedcohort}.

\section{Extensions to Group Fairness}\label{section:absgroup}
In general, the results discussed above for composition of individual fairness extend to group fairness definitions; however, there are several issues and technicalities unique to group fairness definitions which we now discuss.

\paragraph{Technicalities.} 
Consider the following simple universe: for a particular $z \in \mathcal{Z}$, group $B$ has only elements with medium qualification $q_m$, group $A$ has half of its elements with low qualification $q_l$ and half with high qualification $q_h$. Choosing $p_h=1$, $p_m=.75$, and $p_l=.5$ satisfies Conditional Parity for a single application. 
However, for the OR of two applications, the the squares diverge ($.9375\neq.875$), violating conditional parity (see Figure \ref{abs:fig:bimodal}).

Note, however, that all of 
 the individuals with $\mathbf{z}=z$ have been drawn closer together under composition, and none have been pulled further apart. 

This simple observation implies that in some cases we may observe failures under composition for conditional parity, even when individual fairness is satisfied.
In order to satisfy Conditional Parity under OR-composition, the classifier could sacrifice accuracy by treating all individuals with $\mathbf{z}=z$ equally. However, this necessarily discards useful information about the individuals in $A$ to satisfy a technicality.

\paragraph{Subgroup Subtleties.} There are many cases where failing to satisfy conditional parity under \competitivecomposition~is clearly a violation of our intuitive notion of group fairness. However, conditional parity is not always a reliable test for fairness at the subgroup level under composition.
In general, we expect conditional parity based definitions of group fairness to detect unfairness in multiple task compositions reasonably well when there is an obvious interaction between protected groups and task qualification, as observed empirically in \cite{lambrecht2016algorithmic} and \cite{datta2015automated}. For example, let's return to our advertising example where home-goods advertisers have no protected set, but high-paying jobs have gender as a protected attribute. Under composition, home-goods out-bidding high-paying jobs ads for women will clearly violate the conditional parity condition for the job ads (see Figure \ref{abs:fig:group_relatedabs}).

However, 
suppose that, in response to gender disparity caused by task-competitive composition, classifiers iteratively adjust their bids to try to achieve Conditional Parity. This may cause them to \textit{learn themselves} into a state that satisfies Conditional Parity with respect to gender, but behaves poorly for a socially meaningful subgroup (see Figure \ref{abs:fig:iterativebidsupdated}.) 
For example, if home goods advertisers aggressively advertise to women who are new parents (because their life-time value ($\mathcal{Z}$) to the advertiser is the highest of all universe elements), then a competing advertiser for jobs, noticing that its usual strategy of recruiting all people with skill level  $\mathbf{z'}=z'$ equally is failing to reach enough women, bids more aggressively  on women. 
By bidding more aggressively, the advertiser increases the probability of showing ads to women (for example by outbidding low-value competition), but not to women who are bid for by the home goods advertiser (a high-value competitor), resulting in a high concentration of ads for women who are {\em not} mothers, while still failing to reach women who {\em are} mothers. Furthermore, the systematic exclusion of mothers from job advertisements can, over time, be even more problematic,  
as it may contribute to the stalling of careers.
In this case, the system discriminates against mothers without necessarily discriminating against fathers.

Although problematic (large) subgroup semantics are part of the motivation for \cite{kearns2017gerrymandering,hebert2017calibration} and exclusion of subgroups is not only a composition problem, the danger of composition is that the features describing this subset may be missing from the feature set of the jobs classifier, rendering the protections proposed in \cite{kearns2017gerrymandering} and \cite{hebert2017calibration} ineffective.
In particular, we expect sensitive attributes like parental status are unlikely to appear (or are illegal to collect) in employment-related training or testing datasets, but may be legitimately targeted by other competing advertisers.

\begin{figure}[h]
\begin{center}
\centerline{\includegraphics[width=.45\textwidth]{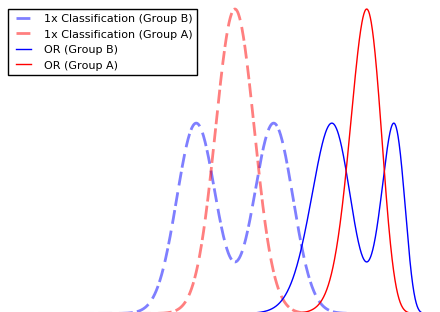}}
    \caption{An illustration of the shift in groups from a single classification to the OR of two applications of the same classifier. 
    Although the two groups originally had the same mean probability of positive classification, this breaks down under OR composition. } 
\label{abs:fig:bimodal}
\end{center}
\vskip -0.4in
\end{figure}

\begin{figure}
\begin{center}
\centerline{\includegraphics[width=\textwidth]{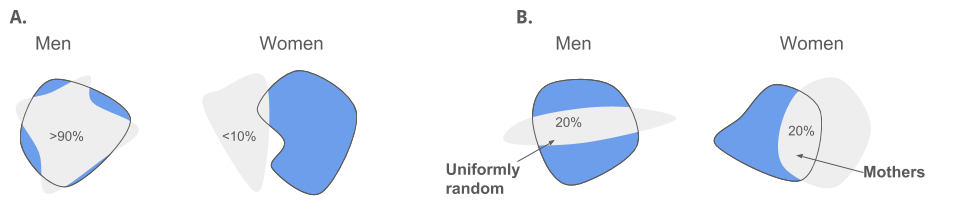}}
\caption{\textbf{A.} When the two tasks are related, one will `claim' a larger fraction of one gender than another, leading to a smaller fraction of men remaining for classification in the other task (shown in blue). Conditional parity will detect this unfairness. \textbf{B.} When the tasks are unrelated, one task may `claim' the same fraction of people in each group, but potentially select a socially meaningful subgroup, eg parents. Conditional parity will fail to detect this subgroup unfairness, unless subgroups, including any subgroups targeted by classifiers composed with, are explicitly accounted for.}
\label{abs:fig:group_relatedabs}
\end{center}
\vskip -0.4in
\end{figure}

\begin{figure*}
        \centering
        \begin{subfigure}[b]{0.475\textwidth}
            \centering
            \includegraphics[width=\textwidth]{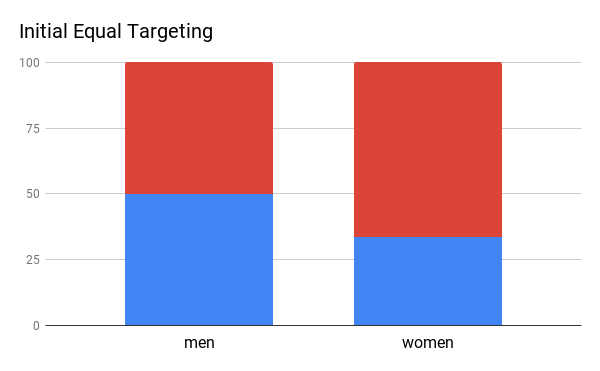}
            \caption{Initial equal targeting of qualified men and women results in violation of conditional parity, as there are unequal rates of ads shown (blue).}    
            \label{fig:initialsettingscoarse}
        \end{subfigure}
        \hfill
        \begin{subfigure}[b]{0.475\textwidth}  
            \centering 
            \includegraphics[width=\textwidth]{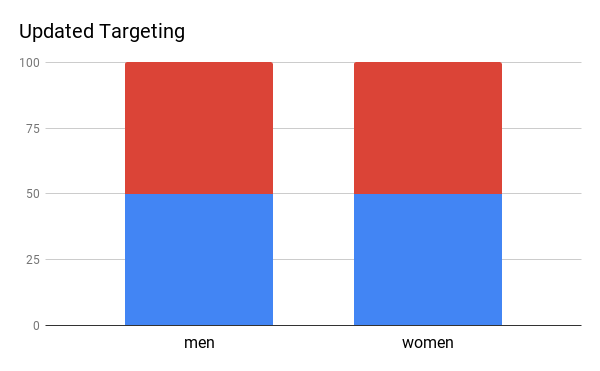}
            \caption
            {By increasing the targeting of women, the jobs advertiser ``fixes'' conditional parity at the coarse group level.}    
            \label{fig:updatedsettingscoarse}
        \end{subfigure}
        \vskip\baselineskip
        \begin{subfigure}[b]{0.475\textwidth}   
            \centering 
            \includegraphics[width=\textwidth]{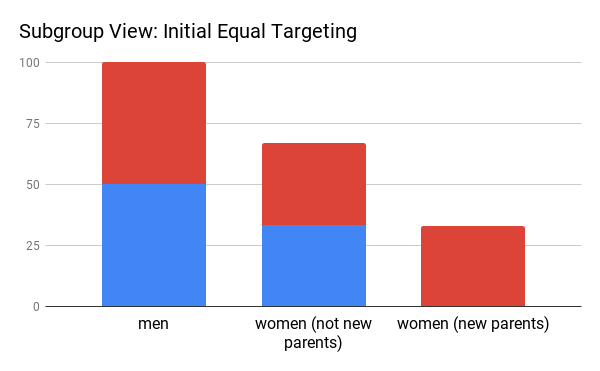}
            \caption{At the subgroup level, it's clear that the lack of conditional parity is due to ``losing'' all of the new parent women to the home-goods advertiser.}    
            \label{abs:fig:initialsettingssub}
        \end{subfigure}
        \quad
        \begin{subfigure}[b]{0.475\textwidth}   
            \centering 
            \includegraphics[width=\textwidth]{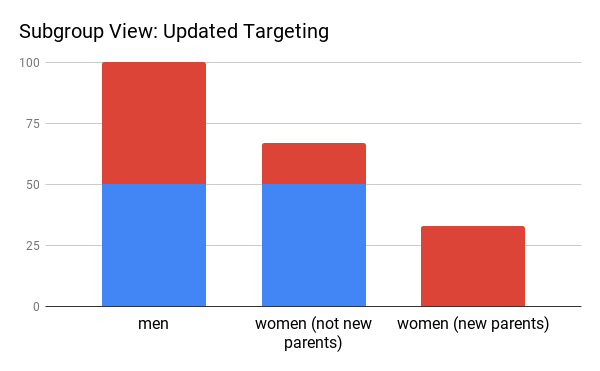}
            \caption{New targeting strategy increases ads shown to non new-parent women, but continues to exclude new parent women.}   
            \label{abs:fig:updatedsettingscoarse}
        \end{subfigure}
        \caption
        {Home-goods advertisers aggressively target mothers, out-bidding the jobs advertiser. When the jobs advertiser bids more aggressively on `women' (b) the overall rate of ads shown to `women' increases, but mothers may still be excluded (d), so $\Pr[\text{ad }| \text{qualified, woman}]> \Pr[\text{ad }| \text{ qualified, mother}]$.} 
        \label{abs:fig:iterativebidsupdated}
    \end{figure*}
    
\clearpage
\printbibliography
\appendix
\section{Full Paper}
\subsection{Introduction}
As automated decision-making extends its reach ever more deeply into our lives, there is increasing concern that such decisions be fair. The rigorous theoretical study of fairness in algorithmic classification was initiated by Dwork \textit{et al} in \cite{dwork2012fairness} and subsequent works investigating alternative definitions, fair representations, and impossibility results have proliferated in the machine learning, economics and theoretical computer science literatures.\footnote{See also \cite{pedreshi2008discrimination} \cite{kamiran2009classifying} and \cite{kamishima2011fairness}, which predate \cite{dwork2012fairness} and are motivated by similar concerns.}  The notions of fairness broadly divide into {\em individual fairness}, requiring that individuals who are similar with respect to a given classification task (as measured by a task-specific similarity metric) have similar probability distributions on classification outcomes; and {\em group fairness}, which requires that different demographic groups experience the same treatment in some average sense.

In a bit more detail, a {\em classification task} is the problem of mapping {\em individuals} to {\em outcomes}; for example, a decision task may map individuals to outcomes in $\{0,1\}$.  A classifier is a possibly randomized algorithm solving a classification task.  A running example throughout this work is online advertising.  In this case a task might be the problem of deciding whether or not to show a given job advertisement to an individual, and we may have an advertising system in which ads are shown repeatedly (or not), and many different advertisers, say, for a job, a grocery delivery service, and various items of clothing, may be competing in an auction for the attention of individual users.  In this latter case we have multiple competing advertising tasks.

In this work we initiate the study {\em fairness under composition}: what are the fairness properties of systems built from classifiers that are fair in isolation?  Under what circumstances can we ensure fairness, and how can we do so?
We identify and examine several types of composition and draw conclusions about auditing systems for fairness, constructing fair systems and definitions of fairness for systems.  In the remainder of this section we summarize our results and discuss related work.  

\paragraph{Task-Competitive Compositions} (Section \ref{section:multipletask}). We first consider the problem of two or more tasks competing for individuals, motivated by the online advertising setting described above. 
We prove that two advertisers for different tasks, each behaving fairly (when considered independently), will not necessarily produce fair outcomes when they compete. Intuitively (and empirically observed by \cite{lambrecht2016algorithmic}), the attention of individuals similarly qualified for a job may effectively have different costs due to these individuals' respective desirability for other advertising tasks, like household goods purchases. That is, individuals claimed by the household goods advertiser will not see the jobs ad, regardless of their qualification.  These results are not specific to an auction setting and are robust to choice of ``tie-breaking'' functions that select among multiple competing tasks (advertisers). 
Nonetheless, we give a simple mechanism, \randomizeandclassifynospace, that solves the fair task-competitive classification problem using classifiers for the competing tasks each of which is fair in isolation, in a black-box fashion and without modification. In the full paper in Section \ref{section:positiveweights} we give a second technique for modifying the fair classifier of the lower bidder (loser of the tie-breaking function) in order to achieve fairness. 

\paragraph{Functional Compositions} (Section \ref{section:sametask}).  When can we build fair classifiers by computing on values that were fairly obtained?  Here we must understand what is the salient outcome of the computation. For example, when reasoning about whether the college admissions system is fair, the salient outcome may be whether a student is accepted to at least one college, and not whether the student is accepted to a specific college\footnote{In this simple example, we assume that all colleges are equally desirable, but it is not difficult to extend the logic to different sets of comparable colleges.}. Even if each college uses a fair classifier, the question is whether the ``OR'' of the colleges decisions is fair.  Furthermore, an acceptance to college may not be meaningful without sufficient accompanying financial aid. Thus in practice, we must reason about the OR of ANDs of acceptance and financial aid across many colleges. We show that although in general there are no guarantees on the fairness of functional compositions of fair components, there are some cases where fairness in ORs can be satisfied. Such reasoning can be used in many applications where long-term and short-term measures of fairness must be balanced. In the case of feedback loops, where prior positive outcomes can improve the chances of future positive outcomes, functional composition provides a valuable tool for determining at which point(s) fairness must be maintained and determining whether the existing set of decision procedures will adhere to these requirements.

\paragraph{Dependent Compositions} (Section \ref{section:cohort}). 
There are many settings in which each individual's classifications are dependent on the classifications of others. For example, if a company is interviewing a set of job candidates in a particular order, accepting a candidate near the beginning of the list precludes any subsequent candidates from even being considered. Even if each candidate is considered fairly in isolation, dependence between candidates can result in highly unfair outcomes. For example, individuals who are socially connected to the company through friends or family are likely to hear about job openings first and thus be considered for a position before candidates without connections. We show that selecting a cohort of people -- online or offline -- requires care to prevent dependencies from impacting an independently fair selection mechanism. We address this in the offline case with two randomized constructions, \permutethenclassify and \weightedsamplingnospace.  These algorithms can be applied in the online case, even under adversarial ordering, provided the size of the universe of individuals is known; when this is not known there is no solution.

\paragraph{Nuances of group-based definitions}(Section \ref{section:group}). 
Many fairness definitions in the literature seek to provide fairness guarantees based on group-level statistical properties. For example, Equal Opportunity~\cite{hardt2016equality} requires that, conditioned on qualification, the probability of a positive outcome is independent of protected attributes such as race or gender. Group Fairness definitions have practical appeal in that they are possible to measure and enforce empirically without reference to a task-specific similarity metric. We extend our results to group fairness definitions and we also show that these definitions do not always yield consistent signals under composition. 
In particular, we show that the intersectional subgroup concerns (which motivate \cite{kearns2017gerrymandering,hebert2017calibration}) are exacerbated by composition. For example, an employer who uses group fairness definitions to ensure parity with respect to race and gender may fail to identify that ``parents'' of particular race and gender combinations are not treated fairly. Task-competitive composition exacerbates this problem, as the employers may be prohibited from even collecting parental status information, but their hiring processes may be composed with other systems which legitimately differentiate based on parental status. 

Finally, we also show how na\"{i}ve strategies to mitigate these issues in composition may result in learning a nominally fair solution that is clearly discriminating against a socially meaningful subgroup not officially called out as ``protected,'' from which we conclude that understanding the behavior of fairness definitions under composition is critical for choosing which definition is meaningful in a given setting.

\paragraph{Implications of Our Results.} 
Our composition results have several practical implications. First, testing individual components without understanding of the whole system will be insufficient to draw either positive or negative conclusions about the fairness of the system.  
Second, composition properties are an important point of evaluation for any definitions of fairness or fairness requirements imposed by law or otherwise. Failing to take composition into account when specifying a group-based fairness definition may result in a meaningless signal under composition, or worse may lead to ingraining poor outcomes for certain subgroups while still nominally satisfying fairness requirements.
Third, understanding of the salient outcomes on which to measure and enforce fairness is critical to building meaningfully fair systems. 
Finally, we conclude that there is significant potential for improvement in the mechanisms proposed for fair composition and many settings in which new mechanisms could be proposed.

\subsection{Related Work}\label{section:relatedwork}
Fairness retained under post-processing in the single-task one-shot setting is central in \cite{zemel2013learning,madras2018learning,dwork2012fairness}. 
The definition of individual fairness we build upon in this work was introduced by Dwork \textit{et al} in \cite{dwork2012fairness}. 
Learning with oracle access to the fairness metric is considered by \cite{gillen2018online,kim2018fairness}. 
A number of group-based fairness definitions have been proposed, and Ritov \textit{et al} provide a combined discussion of the parity-based definitions in \cite{ritov2017conditional}. 
In particular, their work includes discussion of Hardt \textit{et al}'s Equality of Opportunity and Equal Odds definitions and Kilbertus \textit{et al}'s Counterfactual Fairness \cite{hardt2016equality,kilbertus2017avoiding}. 
Kleinberg \textit{et al} and Chouldechova independently described several impossibility results related to simultaneously satisfying multiple group fairness conditions in single classification settings \cite{DBLP:journals/corr/KleinbergMR16},\cite{chouldechova2017fair}.

Two concurrent lines of work aiming to bridge the gap between individual and group consider ensuring fairness properties for large numbers of large groups and their (sufficiently large) intersections~\cite{kearns2017gerrymandering,hebert2017calibration}.
While these works consider the one-shot, single-task setting,
we will see that group intersection properties are of particular importance under composition. 
Two subsequent works in this general vein explore approximating individual fairness with the help of an oracle that knows the task-specific metric \cite{kim2018fairness,gillen2018online}.  
Several works also consider how feedback loops can influence fair classification \cite{DBLP:journals/corr/HuC17,liu2018delayed}.

There are several empirical or observational studies which document the effects of multiple task composition. For example, Lambrecht and Tucker study how intended gender-neutral advertising can result in uneven delivery due to high demand for the attention of certain demographics \cite{lambrecht2016algorithmic}. Datta \textit{et al} also document differences in advertising based on gender, although they are agnostic as to whether the cause is due to multiple task composition or discriminatory behavior on the part of the advertisers or platform \cite{datta2015automated}.
Whether it is truly ``fair'' that, say, home goods advertisers bid more highly for the attention of women than for the attention of men, may be debatable, although there are clearly instances in which differential targeting is justified, such as maternity clothes.  This {\em actuarial fairness} is the industry practice, so we pose a number of examples in this framework and analyze the implications of composition.


\subsection{Preliminary Definitions and Assumptions}\label{section:preliminaries}
\subsubsection{Shared Terminology}
We refer to classifiers as being ``fair in isolation'' to indicate that with no composition, the classifier satisfies a particular fairness definition. In such cases  expectation and probability are taken over the randomness of the classification procedure and, for group fairness, selection of elements from the universe. We denote the universe of individuals relevant for a task as $U$, and we generally use $u,v,w \in U$ to refer to universe elements. 
We generally consider binary classifiers in this work, and use $p_w$ to denote the probability of assigning the positive outcome (or simply 1) to the element $w$ for a particular classifier. We generally write $C: U \times \{0,1\}^* \rightarrow \{0,1\}$, where $ \{0,1\}^*$ represents the random bits of the classifier. This allows us to comfortably express the probability of positive classification $\E_r[C(u)]$ as well as the output of the classifier under particular randomness $C(u,r)$. In this notation, $p_u = \E_r[C(u)]$. When considering the distribution on outputs of a classifier $C$, we use $\dist{C}:U \rightarrow \Delta(\{0,1\})$. 

When two or more classifiers or tasks are compared, we either use a subscript $_i$ to indicate the $i^{th}$ classifier or task, or a prime $(')$ to indicate the second classifier or task. For example $\{C,C'\}$, $\{C_i | i \in [k]\}$, $\{T,T'\}$, $\{T_i| i \in [k]\}$.

\subsubsection{Individual Fairness}
Throughout this work, our primary focus is individual fairness, proposed by Dwork \textit{et al} in \cite{dwork2012fairness}.

\begin{definition}[Individual Fairness  \cite{dwork2012fairness}]\label{def:individualfairness}Let $d:\Delta(O) \times \Delta(O)\rightarrow [0,1]$ denote the total variation distance on distributions over $O$. Given a universe of individuals $U$, and a metric $\D$ for a classification task $T$ with outcome set $O$, a randomized classifier $C:U\times \{0,1\}^* \rightarrow O$, such that $\dist{C}: U \rightarrow \Delta(O)$, and a distance measure $d:\Delta(O) \times \Delta(O) \rightarrow \R$, $C$ is \textit{individually fair} 
if and only if for all $u,v \in U$, $\D(u,v) \geq d(\dist{C}(u),\dist{C}(v))$.\footnote{\cite{dwork2012fairness} also considered other measures such as max divergence.}
\end{definition}

Individual fairness is a very strong definition, as it requires that each individual's constraints be accounted for.
Although the guarantees of individual fairness are desirable, the main practical barrier for adoption in practice is the need for a task-specific similarity metric. For the purposes of our discussion of composition, we defer questions of how to find such a metric, and instead assume that we have access to a complete metric for each task and universe under consideration.
To keep our analyses intuitively simple, we will use total variation distance for $d$ unless otherwise specified. In the case $|O|=2$, this allows us to use simple differences in probability to determine distances between individuals in the outcome space. That is, $d(\dist{C}(u),\dist{C}(v)) = |\E_r[C(u)] - \E_r[C(v)]|=|p_u-p_v|$.

\paragraph{Trivial Metrics and Universes}
A trivial metric is a metric in which all individuals are either equal, or maximally distant. A trivial metric may still contain significant information regarding equivalent pairs, so there may still be some settings where a trivial metric can still provide meaningful fairness guarantees. 

\begin{definition}[Trivial Metric]\label{defn:trivialmetric}
A metric $\mathcal{D}$ is considered \textit{trivial} if for all $u,v\in U$ $\mathcal{D}(u,v) \in \{0,1\}$.
\end{definition}

Fairness with respect to a trivial metric requires that we 
treat all elements equally or satisfy something akin to perfect prediction - that is, we can perfectly separate the universe into two classes for prediction. In practice, such metrics are unlikely, and as such we primarily reason about settings with non-trivial metrics.

\paragraph{Construction of Individually Fair Classifiers}
In \cite{dwork2012fairness}, fair classifiers are constructed by solving a linear program to minimize the loss of an objective function subject to the distance constraints of the metric. There is always a solution to such a linear program, although the loss may be high: treat all elements of the universe equally. 
Throughout this work, we will frequently use the fact that individually fair classifiers with particular distance properties exist in proofs. We therefore include the following lemma and corollaries which allow us to construct classifiers with positive distance between elements, and reason about the maximum distance between a pair of elements for a fair classifier.

\begin{lemma}\label{lemma:fairadditions}
Let $V$ be a (possibly empty) subset of $U$. If there exists a classifier $C: V \times \{0,1\}^* \rightarrow \{0,1\}$ such that $\D(u,v) \geq d(\dist{C}(u),\dist{C}(v))$ for all $u,v \in V$, then for 
any $x \in U \backslash V$ 
there exists classifier $C': V \cup \{x\} \times \{0,1\}^* \rightarrow \{0,1\}$ such that $\D(u,v) \geq d(\dist{C}(u),\dist{C}(v))$  for all $u,v \in U$, which has identical behavior to $C$ on $V$.
\end{lemma}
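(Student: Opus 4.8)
The plan is to extend $C$ to $C'$ by leaving its behavior on $V$ completely untouched and choosing a suitable acceptance probability $p_x = \E_r[C'(x)]$ for the new element $x$. Since the classifiers here are binary, Definition \ref{def:individualfairness} together with the total-variation computation $d(\dist{C'}(u),\dist{C'}(v)) = |p_u - p_v|$ shows that individual fairness of $C'$ on $V \cup \{x\}$ amounts to two families of inequalities: those among pairs in $V$, which already hold because $C'$ will agree with $C$ there; and the new constraints $|p_x - p_v| \le \D(x,v)$ for every $v \in V$. The $v$-th new constraint says precisely that $p_x$ must lie in the closed interval $I_v = [\,p_v - \D(x,v),\; p_v + \D(x,v)\,]$, so it suffices to show that $\big(\bigcap_{v \in V} I_v\big) \cap [0,1] \neq \emptyset$, pick any $p_x$ in this set, and define $C'$ to run $C$ on inputs in $V$ and to output $1$ with probability $p_x$ (using fresh randomness) on input $x$.

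The heart of the argument is nonemptiness of that intersection. If $V = \emptyset$ the statement is vacuous, so take $C'$ to be, e.g., the constant-$0$ classifier; otherwise set $a = \sup_{v \in V}(p_v - \D(x,v))$ and $b = \inf_{v \in V}(p_v + \D(x,v))$. For any $v,w \in V$, using that $C$ is fair on $V$ and that $\D$ is a metric,
\[ \big(p_v - \D(x,v)\big) - \big(p_w + \D(x,w)\big) \;\le\; |p_v - p_w| - \D(x,v) - \D(x,w) \;\le\; \D(v,w) - \D(x,v) - \D(x,w) \;\le\; 0, \]
the last step by the triangle inequality $\D(v,w) \le \D(x,v) + \D(x,w)$. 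Taking sup over $v$ and inf over $w$ gives $a \le b$ — the one-dimensional fact that pairwise-overlapping intervals on a line have a common point. Moreover each $p_v - \D(x,v) \le p_v \le 1$ and each $p_v + \D(x,v) \ge p_v \ge 0$, so $a \le 1$ and $b \ge 0$; combined with $a \le b$ this yields $\max(a,0) \le \min(b,1)$, so the intersection with $[0,1]$ is nonempty, and we choose $p_x$ there.

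To finish I would just check the conclusion: by construction $\dist{C'}(v) = \dist{C}(v)$ for all $v \in V$, so $C'$ has identical behavior to $C$ on $V$ and inherits all of $C$'s fairness constraints on $V$; and for the new pairs $d(\dist{C'}(x),\dist{C'}(v)) = |p_x - p_v| \le \D(x,v)$ by $p_x \in I_v$. Hence $C'$ is individually fair on $V \cup \{x\}$ with the claimed agreement property. The one point needing a little care — and the only obstacle, such as it is — is passing from ``every pair of the $I_v$ meets'' to ``all of them meet,'' i.e. from $a \le b$ to an actual common point; in the finite-universe regime this is immediate, and in general it follows because each $I_v \cap [0,1]$ is compact and every finite subfamily has common point (by the same $a \le b$ estimate restricted to that subfamily), so the finite intersection property gives a nonempty total intersection. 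I expect no genuine mathematical difficulty beyond this bookkeeping.
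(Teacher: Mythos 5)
Your proof is correct, but it takes a genuinely different route from the paper's. The paper proves this lemma constructively via Algorithm~\ref{alg:fairaddmetric} ($\fairadd$): it initializes a candidate probability $\hat{p}_x$ (the value of $x$'s nearest neighbor in $V$), then sweeps through the elements of $V$, snapping $\hat{p}_x$ to the boundary of any violated constraint, and argues via the triangle inequality that each such adjustment cannot break the constraint with any other element lying on the far side of the move. You instead give a direct, non-algorithmic feasibility argument: the admissible values for $p_x$ form the intersection of the intervals $I_v = [p_v - \D(x,v),\, p_v + \D(x,v)]$ with $[0,1]$, and your estimate
\[
\bigl(p_v - \D(x,v)\bigr) - \bigl(p_w + \D(x,w)\bigr) \;\le\; \D(v,w) - \D(x,v) - \D(x,w) \;\le\; 0
\]
shows $\sup_v\bigl(p_v - \D(x,v)\bigr) \le \inf_w\bigl(p_w + \D(x,w)\bigr)$, so the intersection is a nonempty interval. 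This is essentially the one-dimensional Lipschitz-extension (McShane--Whitney) argument, and both proofs ultimately rest on the same two ingredients --- fairness of $C$ on $V$ and the triangle inequality for $\D$. Your version is cleaner and exhibits the entire feasible set rather than a single point, which makes corollaries like Corollary~\ref{cor:maxdist} (choosing $p_x$ at an extreme of the feasible interval) transparent; the paper's version has the virtue of being an explicit $O(|V|)$ procedure that can start from a preferred target probability $p_t$ and move away from it only as far as necessary. One small note: your closing worry about passing from pairwise to global intersection is moot --- once $a \le b$, any $p \in [\max(a,0), \min(b,1)]$ lies in every $I_v$ directly from the definitions of the supremum and infimum, so no appeal to compactness or the finite intersection property is needed.
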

\begin{proof}
For $V = \emptyset$, any value $p_x$ suffices to fairly classify $x$.
For $|V|=1$, choosing any $p_x$ such that $|p_v - p_x| \leq \D(v,x)$ for $v \in V$ suffices.

For $|V|\geq 2$, apply the procedure outlined in Algorithm \ref{alg:fairaddmetric} 
taking $p_t$ to be the probability of positive classification of $x$'s nearest neighbor in $V$ under $C$. As usual, we take $p_w$ to be the probability that $C$ positively classifies element $w$.

\begin{algorithm}[tb]
  \caption{$\fairadd(\D,V,p_t,C,x)$}
  \label{alg:fairaddmetric}
\begin{algorithmic}
  \STATE {\bfseries Input:} metric $\D$ for universe $U$, a subset $V \subset U$, target probability $p_t$, an individually fair classifier  $C: V \times \{0,1\}^* \rightarrow \{0,1\}$, a target element $x \in U \backslash V$ to be added to $C$.
  \STATE Initialize $L \leftarrow V$
  \STATE $\hat{p}_x \leftarrow p_t$
  \FOR{$l \in L$}
  \STATE $dist \leftarrow \D(l,x)$ 
  \IF{$dist < p_l - \hat{p}_x$}
  \STATE $\hat{p}_x \leftarrow p_l - dist$
  \ELSIF{$dist < \hat{p}_x - p_l$}
  \STATE $\hat{p}_x \leftarrow p_l + dist$
  \ENDIF
  \ENDFOR
  \STATE \textbf{return } $\hat{p}_x$
\end{algorithmic}
\end{algorithm}

Notice that Algorithm \ref{alg:fairaddmetric} only modifies $\hat{p}_x$, and that  $\hat{p}_x$ is only changed if a distance constraint is violated. Thus it is sufficient to confirm that on each modification to $\hat{p}_x$, no distance constraints between $x$ and elements in the opposite direction of the move are violated. 

Without loss of generality, assume that $\hat{p}_x$ is decreased to move within an acceptable distance of $u$, that is $\hat{p}_x \geq p_u$. It is sufficient to show that for all $v$ such that $p_v > \hat{p}_x$ that no distances are violated. Consider any such $v$. By construction $\hat{p}_x-p_u=\D(u,x)$, and $p_v - p_u \leq \D(u,v)$. From triangle inequality, we also have that $\D(u,v) \leq \D(u,x) + \D(x,v)$. Substituting, and using that $p_v \geq \hat{p}_x \geq p_u$:
\[\D(u,v) \leq \D(u,x) + \D(x,v) \]
\[\D(u,v) - \D(u,x) \leq \D(x,v) \]
\[\D(u,v) - (\hat{p}_x-p_u) \leq \D(x,v) \]
\[(p_v - p_u)  - (\hat{p}_x-p_u)\leq \D(u,v) - (\hat{p}_x-p_u) \leq \D(x,v) \]
\[p_v -\hat{p}_x \leq \D(x,v) \]

Thus the fairness constraint for $x$ and $v$ is satisfied, and $C'$ is an individually fair classifier for $V \cup \{x\}$. 

\end{proof}

Lemma \ref{lemma:fairadditions} allows us to build up a fair classifier in time $O(|U|^2)$ from scratch, or to add to an existing fair classifier for a subset. We state several useful corollaries:
\begin{corollary}\label{cor:fullclassifier} Given a subset $V \subset U$ and a classifier $C: V \times \{0,1\}^* \rightarrow \{0,1\}$ such that $\D(u,v) \geq d(\dist{C}(u),\dist{C}(v))$ for all $u,v \in V$, there exists an individually fair classifier $C':U \times \{0,1\}^*\rightarrow \{0,1\}$ which is individually fair for all elements $u,v \in U$ and has identical behavior to $C$ on $V$.
\end{corollary}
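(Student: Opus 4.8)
The plan is to obtain Corollary~\ref{cor:fullclassifier} by iterating Lemma~\ref{lemma:fairadditions}. First I would observe that $U\setminus V$ is finite (the paper treats universes as finite sets, as is clear from the $O(|U|^2)$ running-time remark and from all the constructions), so we may enumerate its elements as $x_1,\dots,x_m$. Starting from $V_0 = V$ and $C_0 = C$, apply Lemma~\ref{lemma:fairadditions} with the subset $V_0$ and the element $x_1$ to obtain an individually fair classifier $C_1$ on $V_1 = V_0 \cup \{x_1\}$ that agrees with $C_0$ on $V_0$. Repeat: having built an individually fair $C_j$ on $V_j = V \cup \{x_1,\dots,x_j\}$, apply the lemma with subset $V_j$ and element $x_{j+1}$ to get $C_{j+1}$ on $V_{j+1}$, fair and agreeing with $C_j$ on $V_j$. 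After $m$ steps we reach $C' = C_m$, defined and individually fair on $V_m = U$.

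The two things to check are that the resulting $C'$ is individually fair on all of $U$ and that it has identical behavior to $C$ on $V$. Fairness of $C'=C_m$ on all pairs in $U$ is exactly the conclusion of the final application of the lemma, since $V_m = U$. For the agreement claim, I would argue by induction that $C_j$ agrees with $C$ on $V$ for every $j$: this holds for $j=0$ by definition, and the inductive step follows because Lemma~\ref{lemma:fairadditions} guarantees $C_{j+1}$ has identical behavior to $C_j$ on $V_j \supseteq V$; composing these agreements down to $C_0 = C$ gives the claim. Hence $C' = C_m$ agrees with $C$ on $V$.

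There is essentially no hard step here — the corollary is a routine finite induction on top of the lemma — but the one point that deserves a sentence of care is that each invocation of Lemma~\ref{lemma:fairadditions} requires as input a classifier that is \emph{already} individually fair on the current subset, so the induction hypothesis must carry the fairness of $C_j$ forward, not merely its existence. A secondary (cosmetic) point is that the lemma as stated adds a \emph{single} element; if one wanted an explicit algorithm rather than an existence statement, one would note that this is precisely Algorithm~\ref{alg:fairaddmetric} run once per element of $U\setminus V$, giving the $O(|U|^2)$ bound already advertised after the lemma. The only genuine assumption being used beyond the lemma is finiteness of $U$, which the paper already relies on implicitly throughout.
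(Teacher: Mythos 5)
Your proof is correct and is exactly the paper's argument: the paper proves this corollary by applying Algorithm~\ref{alg:fairaddmetric} (i.e.\ Lemma~\ref{lemma:fairadditions}) to each element of $U \setminus V$ in arbitrary order, which is precisely your finite induction. Your write-up just makes explicit the bookkeeping (carrying fairness and agreement through the induction) that the paper leaves implicit.
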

Corollary \ref{cor:fullclassifier} follows immediately from  applying Algorithm \ref{alg:fairaddmetric} to each element of $U \backslash V$ in arbitrary order.

\begin{corollary}\label{cor:maxdist} Given a metric $\D$, for any pair $u,v \in U$, there exists an individually fair classifier $C: U \times \{0,1\}^* \rightarrow \{0,1\}$ such that $d(\dist{C}(u),\dist{C}(v)) = \D(u,v)$.
\end{corollary}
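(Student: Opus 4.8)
The plan is to construct the classifier in two stages: first fix its behavior on the pair $\{u,v\}$ so that their output distributions are exactly $\D(u,v)$ apart, and then extend it to all of $U$ without disturbing that pair, invoking the machinery already established in Lemma~\ref{lemma:fairadditions}.

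First I would take $V=\{u,v\}$ and define a classifier $C_0: V\times\{0,1\}^*\rightarrow\{0,1\}$ with $p_u=0$ and $p_v=\D(u,v)$. Any assignment with $|p_u-p_v|=\D(u,v)$ would do; this particular one is legitimate because we take the metric to be $[0,1]$-valued (consistent with the trivial-metric definition and with the fact that the Lipschitz condition compares $\D$ to a total variation distance, which never exceeds $1$), so $p_v$ is a valid probability. On the two-element set $V$ the only fairness constraint is $|p_u-p_v|\le\D(u,v)$, which holds with equality, so $C_0$ is individually fair on $V$ and $d(\dist{C_0}(u),\dist{C_0}(v))=|p_u-p_v|=\D(u,v)$.

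Next I would apply Corollary~\ref{cor:fullclassifier} to $C_0$, obtaining an individually fair classifier $C:U\times\{0,1\}^*\rightarrow\{0,1\}$ that has identical behavior to $C_0$ on $V$. In particular $\E_r[C(u)]=p_u=0$ and $\E_r[C(v)]=p_v=\D(u,v)$, so $d(\dist{C}(u),\dist{C}(v))=|p_u-p_v|=\D(u,v)$, which is exactly the claim. The only step with any real content is that the extension leaves the probabilities of $u$ and $v$ untouched; this is precisely what Algorithm~\ref{alg:fairaddmetric} guarantees, since it modifies only the probability of the newly added element, and Corollary~\ref{cor:fullclassifier} is just its iterated application. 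I would also note the boundary case $\D(u,v)=1$, where $u$ and $v$ are perfectly separated and nothing breaks. Everything else is routine.
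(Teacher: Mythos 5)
Your proof is correct and follows essentially the same route as the paper: fix a fair classifier on the pair $\{u,v\}$ achieving distance exactly $\D(u,v)$, then extend to all of $U$ via repeated application of Algorithm~\ref{alg:fairaddmetric} (equivalently Corollary~\ref{cor:fullclassifier}), which leaves the pair's probabilities untouched. The paper's argument is just a one-line sketch of this; your write-up supplies the same construction with the details filled in.
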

Corollary \ref{cor:maxdist} follows simply from starting from the classifier which is fair only for a particular pair and places them at their maximum distance under $\D$ and then repeatedly applying Algorithm \ref{alg:fairaddmetric} to the remaining elements of $U$.  
From a distance preservation perspective, this is important; if there is a particular `axis' within the metric where distance preservation is most important, then maximizing the distance between the extremes of that axis can be very helpful for preserving the most relevant distances.
\begin{corollary}\label{cor:ratio} Given a metric $\D$ and $\alpha \in \R^+$, for any pair $u,v \in U$, there exists an individually fair classifier $C: U \times \{0,1\}^* \rightarrow \{0,1\}$ such that $p_u/p_v = \alpha$, where $p_u=\E[C(u)]$ and likewise $p_v=\E[C(v)]$.
\end{corollary}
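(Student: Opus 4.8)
The plan is to reduce to the two-element universe $\{u,v\}$ and then invoke Corollary~\ref{cor:fullclassifier} to extend to all of $U$ without disturbing the behavior on $u$ and $v$; this way the only real work is to exhibit a pair of probabilities $p_u,p_v\in[0,1]$ with $p_u/p_v=\alpha$ that already satisfies the single Lipschitz constraint $|p_u-p_v|\le\D(u,v)$, and everything else is handed to us by the machinery of Lemma~\ref{lemma:fairadditions}.

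Assume $\D(u,v)>0$ (if $\D(u,v)=0$ the only achievable ratio is $\alpha=1$, for which any constant classifier works, so implicitly the statement is about pairs at positive distance). Write $p_u=\alpha\, p_v$, so the constraint becomes $|\alpha-1|\, p_v\le\D(u,v)$, and we additionally need $p_u=\alpha p_v\le 1$ and $p_v\le 1$. Since $\D(u,v)>0$, we may simply choose $p_v$ positive and small enough that all three inequalities hold --- for instance $p_v=\min\{1,\,1/\alpha,\,\D(u,v)/|\alpha-1|\}$ when $\alpha\neq 1$, and $p_v=1$ when $\alpha=1$ --- and then set $p_u=\alpha p_v$. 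This yields $p_u,p_v\in(0,1]$ with ratio exactly $\alpha$ and satisfying the distance constraint, so the classifier on $\{u,v\}$ with these marginals is individually fair.

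Finally I would apply Corollary~\ref{cor:fullclassifier} (equivalently, run Algorithm~\ref{alg:fairaddmetric} on the elements of $U\setminus\{u,v\}$ in arbitrary order) to obtain an individually fair $C:U\times\{0,1\}^*\to\{0,1\}$ that agrees with this two-element classifier on $u$ and $v$; then $\E[C(u)]=p_u$, $\E[C(v)]=p_v$, hence $p_u/p_v=\alpha$, as desired. The only point needing any care is the boundary behavior --- keeping the chosen marginals inside $[0,1]$ for very large or very small $\alpha$ --- but shrinking $p_v$ handles every case, so there is no genuine obstacle beyond this bookkeeping.
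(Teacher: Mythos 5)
Your proof is correct and follows essentially the same route as the paper: fix the ratio $p_u/p_v=\alpha$ first, scale both probabilities down until $|p_u-p_v|\le\D(u,v)$ (the paper does this with a multiplicative factor $\beta$, you with an explicit choice of $p_v$), and then extend to all of $U$ via Lemma~\ref{lemma:fairadditions}/Corollary~\ref{cor:fullclassifier}. Your explicit treatment of the degenerate case $\D(u,v)=0$ is a small point of added care that the paper's sketch glosses over.
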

Corollary \ref{cor:ratio} follows from choosing $p_u/p_v=\alpha$ without regard for the difference between $p_u$ and $p_v$, and then adjusting. Take $\beta|p_v-p_u| =\D(u,v) $, and choose $\hat{p}_u=\beta p_u$ and $\hat{p}_v=\beta p_v$ so that $|\beta p_v - \beta p_u| = \beta |p_v - p_u| \leq \D(u,v)$, but the ratio $\frac{\beta p_u}{\beta p_v}=\frac{p_u}{p_v}=\alpha$ remains unchanged.

\subsubsection{Group Fairness}\label{section:groupdefs}
In Section \ref{section:group}, we will expand our results to notions of group fairness. 
The motivations for group fairness are two-fold.  Proportional representation is often desirable in its own right; alternatively, the {\em absence} of proportional allocation of goods can signal discrimination in the allocation process, typically against historically mistreated or under-represented groups. Thus, group fairness is often framed in terms of {\em protected attributes} $\A$, such as sex, race, or socio-economic status, while allowing for differing treatment based on a set of \textit{qualifications} $\mathcal{Z}$, such as, in the case of advertising, the willingness to buy an item.
Conditional Parity, a general framework proposed in \cite{ritov2017conditional} for discussing these definitions, conveniently captures many of the popular group fairness definitions popular in the literature including Equal Odds and Equal Opportunity \cite{hardt2016equality}, and Counterfactual Fairness \cite{kusner2017counterfactual}. 
\begin{definition}[Conditional Parity \cite{ritov2017conditional}]\label{def:conditionalparity} 
A random variable $\mathbf{x}$ satisfies parity with respect to $\mathbf{a}$ conditioned on $\mathbf{z}=z$ if the distribution of $\mathbf{x}$ $|$ $(\mathbf{a},\{\mathbf{z}=z\})$ is constant in $\mathbf{a}$:\\
$\Pr[\mathbf{x}=x \text{ }|\text{ }( \mathbf{a}=a,\mathbf{z}=z)] = \Pr[\mathbf{x}=x \text{ }|\text{ }( \mathbf{a}=a',\mathbf{z}=z)]$
for any $a,a' \in \mathcal{A}$.  Similarly, $\mathbf{x}$ satisfies parity with respect to $\mathbf{a}$ conditioned on $\mathbf{z}$ (without specifying a value of $\mathbf{z}$) if it satisfies parity with respect to $\mathbf{a}$ conditioned on $\mathbf{z}=z$ for all $z \in \mathcal{Z}$. All probabilities are over the randomness of the prediction procedure and the selection of elements from the universe.
\end{definition}
The definition captures the intuition that, conditioned on qualification, the setting of protected attributes should not {\em on average} impact the classification result. Note that this is not a guarantee about treatment at the individual level; it speaks only to group-level statistical properties in expectation.  In contrast, Individual Fairness makes strict requirements on the outcomes for each pair of individuals.

A weakness of group fairness definitions, addressed by Individual Fairness, is the problem of subgroup unfairness: a classifier that satisfies Conditional Parity with respect to race and gender independently may fail to satisfy Conditional Parity with respect to the \textit{conjunction} of race and gender.
Furthermore, the protected attributes $(\A)$ may not be sufficiently rich to describe every ``socially meaningful" group one might wish to protect from discrimination. For example, preventing discrimination against women is insufficient if it allows discrimination against women who are mothers, or who dress in a particular style.
To address this, two concurrent lines of work consider fairness for collections of large, possibly intersecting sets \cite{kearns2017gerrymandering,hebert2017calibration}. 
As we will see, composition exacerbates this problem uniquely for group fairness definitions (but not for Individual Fairness).

\subsubsection{Differential Privacy}
Dwork \textit{et al} noted the similarity of individual fairness to Differential Privacy \cite{dwork2012fairness}.
\begin{definition}[Pure Differential Privacy \cite{dwork2011differential}]\label{defn:pureDP}
A mechanism $\mathcal{M}$ is said to be $\varepsilon$-differentially private if for all  databases $x$ and $x'$ differing in a single element and for all $Z$ in the output space of $\mathcal{M}$:
\[\Pr[\mathcal{M}(x) \in Z] \leq e^{\varepsilon}\Pr[\mathcal{M}(x') \in Z]\]
\end{definition}

Loosely speaking, differential privacy requires that the output of the mechanism cannot depend too much on any one particular entry in the database. In this work we are primarily concerned with two properties of differential privacy:\footnote{For a more complete introduction to Differential Privacy, see \cite{dwork2012fairness,dwork2014algorithmic} and  \href{https://simons.berkeley.edu/talks/differential-privacy-fundamentals-forefront}{Cynthia Dwork's Simons Tutorial} and  \href{https://simons.berkeley.edu/talks/katrina-ligett-2013-12-11}{Katrina Ligett's  Simons Tutorial}.} 

 First, differential privacy is preserved under arbitrary post-processing. That is, any output from a mechanism which satisfies differential privacy can be arbitrarily post-processed, and privacy is still preserved. For example, the mechanism may output reals, which are subsequently rounded to integers, and privacy is not harmed.  The analogy for fairness would be for the case in which individuals are first labeled by a classifier (the case of an employment platform they may be labeled as programmers of high, medium, or low skill), and subsequent actions (invitation to interview for a programming job) taken depend only on the classification label.  The intuition, introduced in~\cite{dwork2012fairness} is that if the initial classification is fair, in that similarly qualified programmers have similar probabilities of being labeled highly skilled, then the system for inviting candidates to apply for programming jobs will be fair.
 
 Differential privacy composes nicely even \textit{without coordination} between analysts or databases. Without any coordination between analysts or databases, $\varepsilon_i-$differentially private mechanisms, adaptively chosen satisfy $\sum_{i}\varepsilon_i-$Differential Privacy. 
    The important takeaway is that
    Differential Privacy never suffers from catastrophic privacy loss under small degrees of composition.
Nonetheless, the cumulative privacy loss bounds can be tight, meaning that the log of the ratios of the probabilities of a sequence of output events can be as large as $\sum_i\varepsilon_i$.  We will return to this point when we discuss {\em functional composition} (Section~\ref{section:sametask}).

Although Differential Privacy can be useful for constructing fair classifiers in the one-shot setting (\cite{dwork2012fairness} Theorem 5.2), the composition guarantees have very different semantics. 

\subsection{Functional Composition}\label{section:sametask}

In {\em Functional Composition}, multiple classifiers are combined through logical functions to produce a single output for a single task. 
For example, (possibly different) classifiers for admitting students to different colleges are composed to determine whether the student is accepted to at least one college.  In this case,  the function is ``OR," the classifiers are for the same task, and hence conform to the same metric, and this is the same metric one might use for defining fairness of the system as a whole.  Alternatively, the system may compose the classifier for admission with the classifier for determining financial aid.  In this case the function is ``AND," the classifiers are for different tasks, with different metrics, and we may use scholastic ability or some other appropriate output metric for evaluating overall fairness of the system.

\subsubsection{Same-task Functional Composition}
In same-task composition, the same classification task is  repeated, either by the same entity, or by different entities. The outputs are then considered together to understand the fairness properties of the system. 

We begin by taking a particular composition type, OR. Relevant settings for this problem include a student applying to several colleges or a home-buyer applying to multiple banks for a loan. In such systems the important outcome is whether an individual achieves at least one positive classification. In this definition, we capture the case in which there is only one metric for the classifiers in the composition and that metric is the same as the metric for the final output. In later definitions, we will be more agnostic as to the number and type of metrics.
\begin{definition}[OR Fairness]\label{def:orfairness}
Given a (universe, task) pair with metric $\mathcal{D}$, and a set of classifiers $\mathcal{C}$
we define the indicator
\[x_u = \begin{cases}1 \text{ if } \sum_{C_i \in \mathcal{C}}C_i(x) \geq 1\\ 0 \text{ otherwise}\end{cases}\]
which indicates whether at least one positive classification occurred.

Define $\dist{x}_u=\Pr[x_u=1] = 1-\prod_{C_i \in \mathcal{C}} (1-\Pr[C_i(u)=1])$. Then the composition of the set of classifiers $\mathcal{C}$
satisfies \textit{OR Fairness} if 
$\mathcal{D}(u,v) \geq d(\dist{x}_u,\dist{x}_v)$ for all $u,v \in U$.
\end{definition}

The OR Fairness setting matches well to tasks where individuals primarily \textit{benefit} from one positive classification. We may conversely define NOR Fairness to take $\neg x_u$, and this setting more naturally corresponds to cases where not being classified as positive is desirable. In some settings, we may even wish to satisfy both OR and NOR fairness simultaneously. 
As mentioned above, examples of such tasks include gaining access to credit or a home loan, admission to university, access to qualified legal representation, access to employment, {\it etc}\footnote{\cite{DBLP:journals/corr/BowerKNSVV17} considers what boils down to AND-fairness for Equal Opportunity~\cite{hardt2016equality} and presents an excellent collection of evocative example scenarios.}. Although in some cases more than one acceptance may have positive impact, for example a person with more than one job offer may use the second offer to negotiate a better salary, the core problem is whether or not at least one job is acquired.
Similarly, an advertisement for a new job posting may have slightly more impact on a person who is exposed to it twice rather than once, but has incomparably less impact on a person who never sees the ad. 
When the appropriate ``dosage'' of positive classifications is known, for example, if it is known that $k$ job offers are needed for effective salary negotiation, Definition \ref{def:orfairness} can be adjusted to a threshold function requiring that at least $k$ classifiers respond positively. 

In this section, we consider the motivating example of college admissions. When secondary school students apply for college admission, they usually apply to more than one institution to increase their odds of admission to at least one college and to increase their options regarding the type or location of school to attend. 

Consider a universe of students $U$ applying to college in a particular year, each with intrinsic qualification $q_u \in [0,1]$, $\forall u \in U$. We define $\D(u,v)=|q_u-q_v|$ $\forall u,v \in U.$
$\mathcal{C}$ is the set of colleges and assume each college $C_i \in \mathcal{C}$ admits students fairly with respect to $\D$. The system of schools is considered OR-fair if the indicator variable $x_u$ which indicates whether or not student $u$ is admitted to at least one school satisfies individual fairness under this same metric. Even with the strong assumption that each college fairly evaluates its applicants, there are still several potential sources of unfairness in the resulting system.

\paragraph{Differing Degrees of Composition}
Although it is a bit obvious, the first source of unfairness we investigate is when a different number of classifiers in the set are applied to different elements of the universe.

Students who are able to apply to more colleges (due to being able to afford the application fees or being the recipient of better college counseling or having more time to spend on applications) improve their chances of admission to college over those who do not in all but the most contrived cases.\footnote{Systems where all students are admitted with probability 0 or probability 1 are such contrived cases; they map well to the concept of ``Perfect Prediction,'' and are not considered.} Given any $u,v\in U$ such that $q_u=q_v \in [0,1]$, if $u$ and $v$ apply to different numbers of schools, it cannot in general be the case that $\E[x_u]=\E[x_v]$, if all schools admit both $u$ and $v$ with non-trivial probability, violating individual fairness.

\begin{theorem}\label{theorem:ordifferent} For any (universe, task) pair with a non-trivial metric $\D$, there exists a set of individually fair classifiers $\mathcal{C}$ that do not satisfy OR Fairness if each element may be classified by different sets of classifiers with different cardinalities. 
\end{theorem}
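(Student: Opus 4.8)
The plan is to exhibit, for any non-trivial metric $\D$, two students $u,v$ with $0 < \D(u,v) < 1$ who are assigned \emph{different numbers} of classifiers and for whom the resulting probabilities of at least one admission cannot be reconciled with the Lipschitz constraint. First I would fix a pair $u,v\in U$ with $0<\D(u,v)=\delta<1$ — such a pair exists because $\D$ is non-trivial — and suppose $u$ applies to $k$ colleges while $v$ applies to a single college (the simplest asymmetric case; the general differing-cardinality case reduces to this by padding). Each college is required to be individually fair with respect to $\D$, so by Corollary~\ref{cor:maxdist} (and Corollary~\ref{cor:fullclassifier}, to extend to all of $U$) I may choose the classifiers so that each $C_i$ assigns $u$ and $v$ probabilities $p_u^{(i)}, p_v^{(i)}$ with $|p_u^{(i)}-p_v^{(i)}|\le\delta$, and I have freedom to pick the actual values subject to that constraint.

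The key computation is to compare $\dist{x}_u = 1-\prod_{i=1}^k(1-p_u^{(i)})$ against $\dist{x}_v = p_v^{(1)}$ and show the gap can be forced to exceed $\delta$. The cleanest instantiation: let every college admit $v$ with probability $p$ and admit $u$ with probability $p+\delta$ (legal, since the per-classifier distance is exactly $\delta\le\D(u,v)$). Then $\dist{x}_v = p$ while $\dist{x}_u = 1-(1-p-\delta)^k$. I would then argue that for a suitable choice of $p\in(0,1-\delta)$ and $k$ large enough, $1-(1-p-\delta)^k - p > \delta$; indeed as $k\to\infty$ the left side tends to $1-p>\delta$ whenever $p<1-\delta$, so already some finite $k$ works, and one can even make this quantitative (e.g. pick $p$ with $p+\delta<1$ and $k$ with $(1-p-\delta)^k < 1-p-\delta$, which certainly suffices). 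This violates OR Fairness since $d(\dist{x}_u,\dist{x}_v) = |\dist{x}_u-\dist{x}_v| > \delta = \D(u,v)$.

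I should also dispatch the ``contrived'' caveat flagged in the text: the construction genuinely uses $0<\delta<1$ and $p$ chosen in the open interval, so all classifiers are non-trivial (no student is admitted with probability $0$ or $1$ everywhere), which is exactly the regime the theorem intends to rule out. Finally I would remark that to obtain a bona fide set $\mathcal{C}$ of classifiers defined on the \emph{whole} universe $U$ — not just on $\{u,v\}$ — I invoke Corollary~\ref{cor:fullclassifier} to extend each partially-specified classifier to all of $U$ while preserving individual fairness and its behavior on $u,v$; and the differing-cardinality hypothesis is used only to let $u$ be classified by all $k$ of them while $v$ is classified by one.

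The main obstacle is essentially bookkeeping rather than depth: one must be careful that the freedom granted by Corollary~\ref{cor:maxdist}/Corollary~\ref{cor:ratio} really does permit setting all $k$ of $u$'s per-college probabilities simultaneously to the same value $p+\delta$ (it does, since the constraint is only pairwise and each college is a separate classifier), and that the chosen $p$ keeps every probability strictly inside $(0,1)$ so the example is not a perfect-prediction artifact. Making the threshold $k$ explicit as a function of $\delta$ and $p$ is the only quantitative step, and it is elementary.
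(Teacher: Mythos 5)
Your proposal is correct and takes essentially the same route as the paper: fix a pair at positive distance $\delta=\D(u,v)$, give the two elements per-classifier probabilities differing by exactly $\delta$ via Lemma~\ref{lemma:fairadditions}/Corollary~\ref{cor:maxdist}, classify one element more times than the other, and observe that the OR amplifies the more-classified element's acceptance probability beyond the allowed $\delta$ (your quantitative condition $(1-p-\delta)^k<1-p-\delta$ already holds at $k=2$, which is exactly the paper's instantiation). The only cosmetic difference is that the paper gives the extra classification to the higher-probability element and computes the violation in closed form as $p_v(1-p_v)+\D(u,v)$, whereas you argue via a limit in $k$.
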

\begin{proof}
As each element $w\in U$ may be classified by  sets of classifiers of different cardinality, we denote $\mathcal{C}^w\subseteq \mathcal{C}$ as the set of classifiers which act on $w$.

Consider the set of randomized classifiers $\mathcal{C}$ where all classifiers are identical and assign outcome 1 to elements $u,v \in U$ with probabilities $p_u$ and $p_v$, respectively. Without loss of generality, assume $p_u\leq p_v$, $p_v-p_u=\D(u,v)$ and $p_v>0$ (such a $C$ exists per Lemma \ref{lemma:fairadditions}). If $u$ is classified once (that is, $|\mathcal{C}^u|=1$), and $v$ is classified twice ($|\C^v|=2$, then \[|\E[x_v]-\E[x_u]| = |(1-(1-p_v)^2)-p_u|\]\[ = |p_v-p_v^2+ (p_v-p_u)| \geq \D(u,v)\] which violates individual fairness and completes the proof.
\end{proof}

Intuitively, if equally qualified students (or even nearly equally qualified students) apply to different numbers of schools or different types of schools in equal numbers, then their probabilities of acceptance to at least one school will diverge.

\paragraph{Equal Degrees of Composition}
Assuming that the system requires that all elements be classified by the same number and even the same set of classifiers, it is still possible for a set of classifiers for the same task to violate OR fairness.
The key observation is that for elements with positive distance, the difference in their expectation of acceptance by at least one classifier does not diverge linearly in the number of classifiers included in the composition. 
For example, consider $u$ and $v$ with $q_u=0.5$ and $q_v=0.01$; if two classifiers each assign 1 with probability $p_u=q_u$ to $u$ and $p_v = q_v$ to $v$ 
then the probability of positive classification by either of the two classifiers will be $0.75$ for $u$ and $\approx 0.02$ for $v$,
diverging from their original distance of $|q_u-q_v|=0.49$. 

Such divergence is most clearly exhibited with small numbers of classifiers. As the number of classifiers increases, the probabilities of positive classification by at least one classifier for any pair (so long as one of the pair is accepted with positive probability by sufficiently classifiers), will eventually converge as they approach one.
However, in practice, we expect students to apply to perhaps five or 10 colleges, so it is desirable to characterize when small systems are immune to such divergence. 
We demonstrate these issues in two steps: first, we show how to construct sets of individually fair classifiers which do not satisfy \ORfairnesssp for all (universe, task) pairs under same-task composition to more formally illustrate the problem. Second, we partially characterize a large class of sets of classifiers which will satisfy \ORfairnesssp under same-task composition.

\begin{theorem}\label{theorem:orsame}
For any (universe, task) pair with a non-trivial metric $\mathcal{D}$,
there exists a set of individually fair classifiers $\mathcal{C}$ which do not satisfy OR Fairness, even if each element in $U$ is classified by all $C_i \in \mathcal{C}$.

\end{theorem}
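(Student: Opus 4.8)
The plan is to get the violation already with two identical classifiers, exploiting the fact that $p\mapsto 1-(1-p)^2$ is concave with slope $>1$ near $0$, so it \emph{stretches} small distances rather than preserving them. Since $\mathcal{D}$ is non-trivial, fix a pair $u,v\in U$ with $0<\mathcal{D}(u,v)<1$ and set $\delta:=\mathcal{D}(u,v)$. Define a classifier $C$ on the two-point set $\{u,v\}$ by $p_u=0$ and $p_v=\delta$; this is individually fair there since $|p_u-p_v|=\delta\le\mathcal{D}(u,v)$. By Corollary \ref{cor:fullclassifier}, extend $C$ to an individually fair classifier on all of $U$ without changing its behavior on $\{u,v\}$. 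Take $\mathcal{C}=\{C_1,C_2\}$ with $C_1=C_2=C$; each is individually fair and, as the theorem demands, every element of $U$ is classified by both.

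Next I would just run the OR computation. With $\dist{x}_w = 1-\prod_i\bigl(1-\Pr[C_i(w)=1]\bigr) = 1-(1-p_w)^2$, we get $\dist{x}_u=0$ and $\dist{x}_v=1-(1-\delta)^2=2\delta-\delta^2$, so $d(\dist{x}_u,\dist{x}_v)=2\delta-\delta^2=\delta+\delta(1-\delta)$. Because $0<\delta<1$, the term $\delta(1-\delta)$ is strictly positive, hence $d(\dist{x}_u,\dist{x}_v)>\delta=\mathcal{D}(u,v)$, which is exactly a violation of OR Fairness. (More generally, for any $p_u<p_v$ with $p_u+p_v<1$ one gets $d(\dist{x}_u,\dist{x}_v)=(p_v-p_u)(2-p_u-p_v)>p_v-p_u$; choosing the pair at distance exactly $\delta$ on the low end of $[0,1]$ yields the clean bound above, and it works for the OR of any $k\ge 2$ classifiers once the common probabilities are small enough.)

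There is no serious obstacle here; the only step needing a little care is that the two-point construction must extend to a globally individually fair classifier, and this is precisely what Lemma \ref{lemma:fairadditions} and Corollary \ref{cor:fullclassifier} supply. It is worth emphasizing that the strict inequality $\delta<1$ — guaranteed by non-triviality of $\mathcal{D}$ — is what drives the separation: on a trivial metric every pair is at distance $0$ (forced to equal probabilities, so no divergence under OR) or at distance $1$ (no constraint that OR composition could violate).
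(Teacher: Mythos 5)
Your proposal is correct and follows essentially the same route as the paper's proof: take two identical copies of a single individually fair classifier that separates $u,v$ by exactly $\mathcal{D}(u,v)$ with $p_u+p_v<1$ (your choice $p_u=0$, $p_v=\delta$ is a clean special case), extend to all of $U$ via Lemma \ref{lemma:fairadditions} and its corollaries, and observe that the OR distance $|(p_u-p_v)(2-p_u-p_v)|$ strictly exceeds $|p_u-p_v|$. No gaps.
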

\begin{proof}
By assumption of non-triviality of the metric $\D$, there exist $u,v \in U$ such that $1>\mathcal{D}(u,v)>0$. 
Construct $C$ such that
$d(\dist{C}(u),\dist{C}(v)) = \mathcal{D}(u,v)$ for some pair $u,v \in U$ and $\E[C(u)] + \E[C(v)] < 1$. (Lemma \ref{lemma:fairadditions} provides the necessary procedure.) Write $p_u=\E[C(u)]$ and $p_v=\E[C(v)]$ as before.

Take the set $\mathcal{C}$ to be two identical copies of $C$. Then $\E[x_u] = 1-(1-p_u)^2$ and $\E[x_v] = 1-(1-p_v)^2$.

Then:
\[|\E[x_u] - \E[x_v]| =| (1 - p_v)^2 - (1-p_u)^2|\]
\[|\E[x_u] - \E[x_v]| =| (1 - 2p_v+ p_v^2) - (1-2p_u+p_u^2)|\]
\[|\E[x_u] - \E[x_v]| =| 2(p_u-p_v) - (p_u^2-p_v^2)|\]
\[|\E[x_u] - \E[x_v]| =| 2(p_u-p_v) - (p_u-p_v)(p_u+p_v)|\]
By choice of $p_u$ and $p_v$,  $|p_u-p_v|=\mathcal{D}(u,v)$, so without loss of generality
\[|\E[x_u] - \E[x_v]| =| 2\mathcal{D}(u,v) -(p_u-p_v)(p_u+p_v) |\]
Notice that $ (p_u-p_v)(p_u+p_v) < \mathcal{D}(u,v)$ as long as $p_u+p_v \leq  1$. Thus 
\[|\E[x_u] - \E[x_v]|  > \mathcal{D}(u,v)\]
which completes the proof.
\end{proof}

To build intuition, consider the simple case where the one ``worst'' element is accepted with probability $\eta \ll \frac{1}{2}$ by all classifiers.
As all other elements with probability of acceptance greater than or equal to $\frac{1}{2}$ are repeatedly classified, their probability of at least one acceptance quickly approaches 1, exceeding their maximum distance from the ``worst'' element under $\mathcal{D}$ whose probability of acceptance increases much more slowly. 
In general, we expect classifiers to attempt to maximize the allowed distance for at least some pairs in order to increase their discriminatory power between ``good'' and ``bad'' elements for the task, increasing the chance of such \sametaskdivergence. Even if we rule out elements identically mapped to zero or $o(1)$ probability, we need only consider the divergence of $(1-p_u)^n$ and $(1-p_v)^n$ for sets of $n$ classifiers to see that this problem exists in many real-world scenarios when distances are maximized or nearly maximized, within the constraints of individual fairness, between some pairs of elements. Particularly for settings like loan applications (where an extended loan search with many credit inquiries may impact an individual's credit score), small stretches in distance may have significant practical implications. Figure \ref{fig:sametaskdivergence} illustrates an example of this scenario.
\begin{figure}[]
    \centering
    \includegraphics[width=0.65\textwidth]{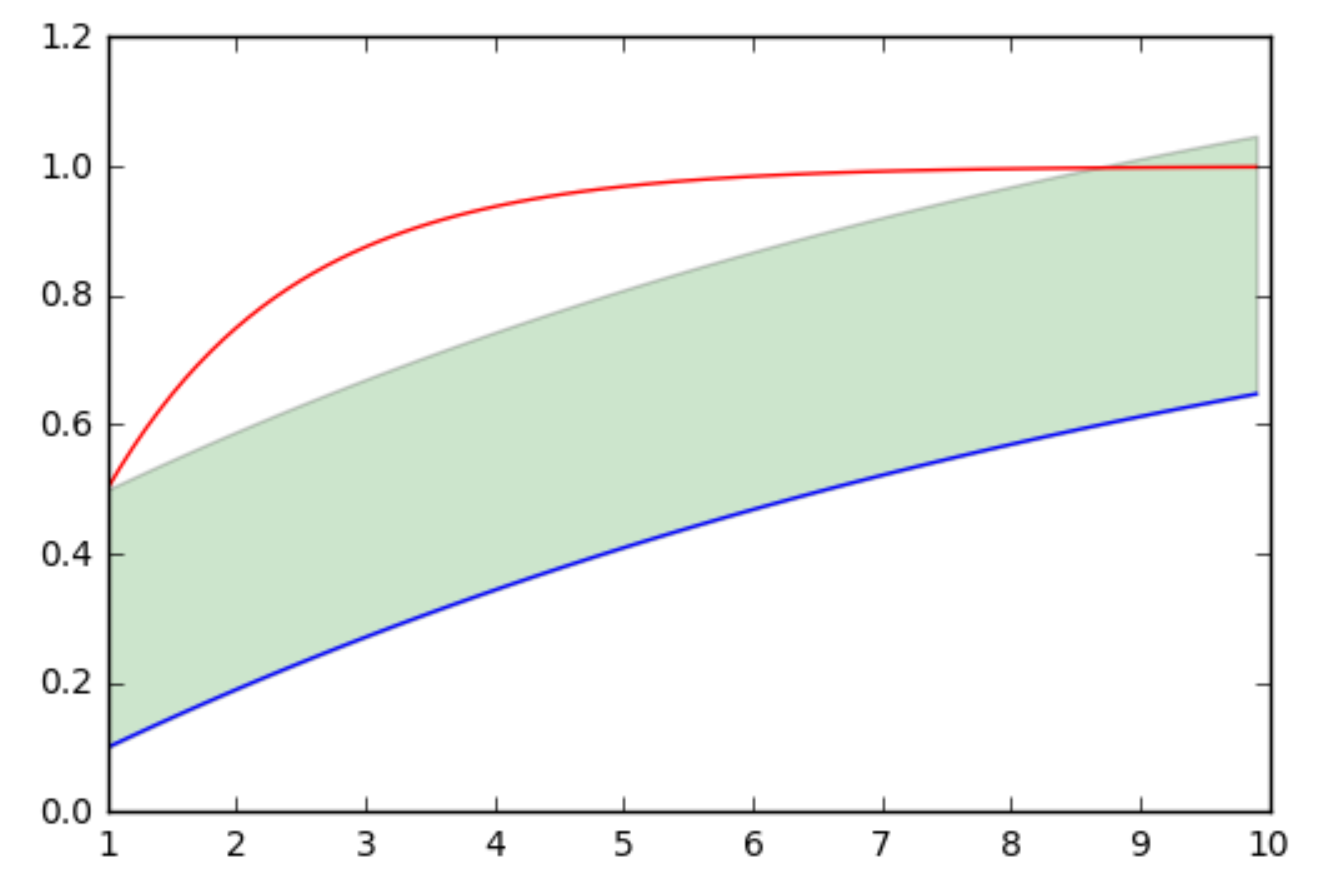}
    \caption{Comparison of $1-(1-p_u)^n$ and $1-(1-p_v)^n$ for $p_u=0.1$ and $p_v=0.5$. The shaded green region indicates the region within the original fair distance bound. Value for $p_u$ exceeds the bounded region for $n\in (1,8]$. } \label{fig:sametaskdivergence}
\end{figure}

The good news is that we can characterize non-trivial conditions for sets of small numbers of classifiers where OR Fairness is satisfied with the help of the following lemma.
\begin{lemma}\label{lemma:sametaskgeneral}
Fix a set $\mathcal{C}$ of classifiers, and let $x_w$ for $w\in U$ be the indicator variable as defined in Theorem \ref{theorem:ordifferent}.
If $\E[x_w]\geq 1/2$ for all $w \in U$,  
then the set of classifiers $\mathcal{C} \cup \{C'\}$ satisfies OR fairness if $C'$ satisfies individual fairness under the same metric and $\Pr[C'(w)=1]\geq \frac{1}{2}$ for all $w \in U$.

\end{lemma}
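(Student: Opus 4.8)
The plan is a short telescoping estimate on acceptance probabilities. Write $a_w = \E[x_w]$ for the probability that the OR of $\mathcal{C}$ classifies $w$ positively and $b_w = \Pr[C'(w)=1]$. Since $C'$ is independent of the classifiers in $\mathcal{C}$, the OR of $\mathcal{C}\cup\{C'\}$ classifies $w$ positively with probability $\rho_w := 1-(1-a_w)(1-b_w)$, and (the outcome space being binary, so that $d$ is just the absolute difference of acceptance probabilities) it suffices to prove $|\rho_u-\rho_v|\le \mathcal{D}(u,v)$ for all $u,v\in U$. I will use: (i) that $\mathcal{C}$ itself satisfies OR Fairness — I read this as part of the hypothesis, and it is needed, since it is precisely what gives $|a_u-a_v|\le\mathcal{D}(u,v)$; (ii) that $C'$ is individually fair under the same metric, giving $|b_u-b_v|\le\mathcal{D}(u,v)$; and (iii) the threshold hypotheses, i.e.\ $1-a_w\le \tfrac12$ and $1-b_w\le \tfrac12$ for all $w$.

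For the key step, fix $u,v$ and write $\rho_u-\rho_v = (1-a_v)(1-b_v)-(1-a_u)(1-b_u)$. Inserting $\pm (1-a_v)(1-b_u)$ and regrouping gives
\[ \rho_u-\rho_v = (1-a_v)\big[(1-b_v)-(1-b_u)\big] + (1-b_u)\big[(1-a_v)-(1-a_u)\big], \]
so by the triangle inequality and (i)--(ii),
\[ |\rho_u-\rho_v| \le (1-a_v)\,|b_u-b_v| + (1-b_u)\,|a_u-a_v| \le \big[(1-a_v)+(1-b_u)\big]\,\mathcal{D}(u,v). \]
Finally (iii) gives $(1-a_v)+(1-b_u)\le \tfrac12+\tfrac12=1$, hence $|\rho_u-\rho_v|\le\mathcal{D}(u,v)$, i.e.\ $\mathcal{C}\cup\{C'\}$ satisfies OR Fairness. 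The symmetric regrouping, inserting $\pm (1-a_u)(1-b_v)$ instead, yields the bound $[(1-a_u)+(1-b_v)]\mathcal{D}(u,v)$, which also collapses to $\mathcal{D}(u,v)$; using whichever is convenient avoids any case split on the sign of $\rho_u-\rho_v$.

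I do not anticipate a real obstacle; the only care needed is to invoke the hypotheses in the right place. It is worth noting that the constant $1/2$ is sharp in spirit: it is exactly the break-even point that forces the coefficient $(1-a_v)+(1-b_u)$ to be at most $1$, and Theorem~\ref{theorem:orsame} exhibits fair classifiers with small acceptance probabilities whose OR genuinely stretches distances, so some such condition is unavoidable. For the intended ``OR of ORs'' application, one should also observe that the conclusion re-establishes $|a_u-a_v|\le\mathcal{D}(u,v)$ for the enlarged set, so the lemma chains: starting from any single individually fair classifier with $p_w\ge 1/2$ everywhere, one may repeatedly adjoin individually fair classifiers with $p_w\ge 1/2$ and retain OR Fairness throughout.
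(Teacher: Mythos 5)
Your proof is correct and follows essentially the same route as the paper: a direct algebraic bound on the difference of the OR-probabilities using the two Lipschitz hypotheses (including reading ``$\mathcal{C}$ already satisfies OR Fairness'' into the statement, exactly as the paper's proof does with its ``by assumption'') together with the $\ge \tfrac12$ thresholds. Your telescoping decomposition $(1-a_v)(b_u-b_v)+(1-b_u)(a_u-a_v)$ is a cleaner organization of the same algebra — it makes the coefficient $(1-a_v)+(1-b_u)\le 1$ explicit and sidesteps the paper's substitution $t=p_u-p_v$, $t'=p_u'-p_v'$ and its without-loss-of-generality case split on the signs of $t,t'$.
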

\begin{proof}
Consider a pair $u,v\in U$. By assumption, $|\E[x_u] - \E[x_v]| \leq \mathcal{D}(u,v)$ and $|\E[C'(u)] - \E[C'(v)]| \leq \mathcal{D}(u,v)$. 
Define
\[x_{w}' = \begin{cases}1 \text{ if } x_w + C'(w) \geq 1 \\ 0 \text{ otherwise}
\end{cases}
\]

Therefore $\Pr[x_{w}'=1]= 1- \Pr[x_w=0 \text{ and } C'(w)=0]$. Define $p_u'$ to be the probability that $u$ is accepted by $C'$ and $p_{u}$ the probability that $u$ is accepted by at least one of the set $\mathcal{C}$, and analogously define $p_v'$ and $p_v$.

It suffices that
\[|(1-(1-p_u)(1-p_u')) - (1-(1-p_v)(1-p_v'))| \leq \mathcal{D}(u,v)\]
to ensure that the system satisfies OR fairness. With some small simplifications, we have
\[|(1-(1-p_u)(1-p_u')) - (1-(1-p_v)(1-p_v'))| 
= | p_vp_v'- p_{u}p_u'+p_u-p_v+p_u' - p_v'| \]
Define $t=p_u-p_v$ and $t'=p_u'-p_v'$. Without loss of generality, assume that either $t,t'>0$ or $t$ and $t'$ have different signs. (Notice that if this doesn't hold by our original arbitrary choice of $u$ and $v$, we can switch the ordering to make it so). Therefore,
\[ | p_vp_v'- p_{u}p_u'+p_u-p_v+p_u' - p_v'|= | p_vp_v'- p_{u}p_u'+t+t'| \]
\[= | p_vp_v'- (p_{v}+t)(p_v'+t')+t+t'| \]
\[= | -p_vt' - p_v't - tt'+t+t'| \]
Note that $p_v,p_v'\geq \frac{1}{2}$, so 
\[| -p_vt' - p_v't - tt'+t+t'| \leq | \frac{1}{2}t' + \frac{1}{2}t - tt'|  \]
By assumption $|t|, |t'| \leq \mathcal{D}(u,v)$. By definition of $t,t'$, either $t,t'>0$ or $t$ and $t'$ have different signs.  
Thus 
\[| \frac{1}{2}t' + \frac{1}{2}t - tt'|  \leq \mathcal{D}(u,v)\]
which concludes the proof.

\end{proof}

Lemma \ref{lemma:sametaskgeneral} turns out to be quite useful for determining that a system is free from \sametaskdivergence, even when the classifiers do not initially seem to satisfy the requirements for the lemma. 
Consider a set of classifiers $\mathcal{C}$ such that $\mathcal{C}' \subseteq \mathcal{C}$ do not satisfy the requirements for Lemma \ref{lemma:sametaskgeneral}. However, if we group the classifiers together as an ``OR of ORs''
\[C_{j,k}(w)=\begin{cases} 1 \text{ if }\sum_{i \in \{j,\ldots, k\}}C_i(w) \geq 1 \\ 0 \text{ otherwise} \end{cases}\]
so that each grouped classifier has $\E[C_{j,k}(x_u)] \geq \frac{1}{2}$, we may now apply Lemma \ref{lemma:sametaskgeneral} to the \textit{grouped} classifiers. 
This ``OR of heavy ORs'' can be broadened to an ``OR of heavy functions'' (where ``heavy" corresponds to having value one with probability at least $1/2$) in cases where each input to the ``OR'' is an arbitrary Boolean function of classifier outcomes.
In practice, we expect that this test will be simpler to implement than fully analyzing the set of classifiers.

So far we have considered \ORfairnesssp as the primary setting for functional composition. The reader has likely already guessed that our observations extend to other operators. We omit a complete set of proofs in the same-task setting as they are very similar to the above. Figures \ref{fig:and} and \ref{fig:xor} show illustrative cases (as Figure \ref{fig:sametaskdivergence} does for \ORfairness). 
\subsubsection{Multiple Task Functional Composition}\label{section:multipletaskfunctional}

As we saw in the previous section, functional composition of classifier outputs for a single task can result in violations of individual fairness, even if the classifiers were individually fair in isolation. We now extend this idea to cases where more than one task is used in the functional composition. Whereas OR Fairness is highly intuitive for single-task settings, AND Fairness is highly applicable in multiple-task settings.
For example, in order to attend university, a student must both be admitted to the university \textit{and} be able to pay tuition, either through private scholarships, family assistance or university financial aid. At first glance, this may seem like a single-task composition problem, however, it is highly likely that the financial aid office and the university admissions department may consider themselves beholden to different metrics. The admissions department may want to evaluate applications ``need-blind'' and not consider financial status at all in order to admit the most academically qualified candidates; the financial aid office may want to maximize the number of students able to attend given a fixed amount of aid money. It's not difficult to imagine that these two metrics compose poorly.

For a warm-up example, let us consider the case of 10 students, all with the same academic qualifications, who are admitted to the university based on their academic qualifications. Two of them have family funding or a private scholarship to cover their tuition, five  need 25\% tuition assistance from the university, and the remaining three need 100\% tuition assistance from the university. If the financial aid office only has a limited amount of funding, they could, fairly, in their opinion, offer all students a full financial aid package with some probability $p \in (0,1)$.\footnote{We specifically address fairly allocating limited `slots' or resources in Section \ref{section:cohort}.} However, students who have alternative means will be able to attend regardless of the financial aid they receive, whereas students who do not have alternative means will only be able to attend with probability $p$. Although the financial aid and admissions classifiers both appear fair independently, we are faced with the problem of how to reconcile fairness under composition. It's not clear which, if either, of the input metrics is the right metric to use to enforce fairness on the system as a whole, and so we must consider systems where the relevant metric for the output may not be included in the metrics of any of the input tasks. In this particular case, the relevant output metric should perhaps be more closely aligned with academic qualification than financial background. The definition of AND Fairness, below, considers this setting. 

\begin{definition}[AND Fairness]
Given a universe $U$ and a set of $k$ tasks $\T$ with metrics $\mathcal{D}_1,\ldots, \D_k$ and an output metric $\D^\Outcome$, a set of classifiers $\mathcal{C}$ satisfies \textit{AND Fairness} if the indicator variable
\[x_u = \begin{cases}1 \text{ if } \prod_{C_i \in \mathcal{C}}C_i(x) \geq 1\\ 0 \text{ otherwise}\end{cases}\]
satisfies 
$\mathcal{D}^*(u,v) \geq d(\dist{x}_u,\dist{x}_v)$ for all $u,v \in U$, where $\dist{w}_u = \Pr[x_w=1]$ for $w \in U$.
\end{definition}

We expect that there are relevant scenarios where $\D^\Outcome \in \T$ and others in which $\D^\Outcome \notin \T$, so we make no explicit requirement in the definition. For example, in the case of college admissions, $\D^\Outcome$ may be taken to be the metric for academic qualification. In contrast, for the task of buying a home, the metric $\D^\Outcome$ may be distinct from the metrics for securing financing and finding a willing seller. 
The next theorem shows that when the output metric doesn't have strictly larger distances than all of the input metrics for all pairs, then individual fairness can easily be violated by composing classifiers that are individually fair in isolation.

\begin{theorem}\label{theorem:andunfairness} Let $\T$ be a set of $k$ tasks with nontrivial metrics $\mathcal{D}_1,\ldots, \D_k$ respectively and let $\D^\Outcome$ represent the relevant outcome metric. If there exists at least one pair $u,v \in U$ and one pair of tasks $T_i, T_j$ for $i,j \in [k]$, $i \neq j$ such that 
\begin{enumerate}
    \item $\D^\Outcome(u,v) \leq \D_i(u,v), \D_j(u,v)$
    \item $D_i(u,v),D_j(u,v)>0$
\end{enumerate}  
there exists a set of classifiers $\mathcal{C}$ that satisfy individual fairness separately, but do not satisfy AND Fairness under composition.
\end{theorem}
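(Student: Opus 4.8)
The plan is to exhibit, for the pair $u,v\in U$ and the task pair $T_i,T_j$ supplied by the hypothesis, a family of classifiers that are individually fair in isolation but whose AND drags $u$ and $v$ farther apart than $\D^\Outcome$ allows. Write $a=\D_i(u,v)$, $b=\D_j(u,v)$ and $c=\D^\Outcome(u,v)$, so that $a,b>0$ and $c\le a$, $c\le b$; we may assume $a,b\le 1$ (distances above $1$ impose no constraint), and $c<1$, since if $c=1$ then the AND-fairness constraint for this pair is vacuous and there is nothing to prove. The mechanism of the proof is that AND multiplies acceptance probabilities: if we place $u$ at probability $1$ under the classifiers for $T_i$ and $T_j$, it stays at probability $1$ in the composition, whereas $v$ need only stay within $a$ (resp.\ $b$) of $u$, so it can be pushed down to $1-a$ (resp.\ $1-b$), and its composed probability collapses to $(1-a)(1-b)$.

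Concretely, first build the classifier $C_i$ for task $T_i$: the classifier on the two-point set $\{u,v\}$ sending $u\mapsto 1$ and $v\mapsto 1-a$ is individually fair with respect to $\D_i$ restricted to $\{u,v\}$, because $|1-(1-a)|=a=\D_i(u,v)$, and Corollary~\ref{cor:fullclassifier} extends it to a $\D_i$-individually-fair classifier $C_i:U\times\{0,1\}^*\to\{0,1\}$ with the same behaviour on $u$ and $v$. Symmetrically, obtain a $\D_j$-individually-fair classifier $C_j$ with $\Pr[C_j(u)=1]=1$ and $\Pr[C_j(v)=1]=1-b$. For each remaining task $T_\ell$, $\ell\neq i,j$, let $C_\ell$ be the constant classifier that always outputs $1$; it is individually fair with respect to any metric since all of its output distances are $0$. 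Then $\mathcal{C}=\{C_1,\dots,C_k\}$ is a set of classifiers, one per task, each individually fair in isolation.

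Next, evaluate the AND indicator. As the classifiers draw independent randomness, $\dist{x}_u=\prod_{\ell\in[k]}\Pr[C_\ell(u)=1]=1$ and $\dist{x}_v=(1-a)(1-b)$, so $d(\dist{x}_u,\dist{x}_v)=1-(1-a)(1-b)=a+b-ab$. To finish I would verify $a+b-ab>c$: since $a+b-ab-c=(a-c)+b(1-a)$ with $a-c\ge 0$, $b>0$ and $1-a\ge 0$, the quantity is nonnegative, and it is strictly positive — when $a<1$ because $b(1-a)>0$, and when $a=1$ because then $a+b-ab=1>c$. Hence $\D^\Outcome(u,v)=c<d(\dist{x}_u,\dist{x}_v)$, so $\mathcal{C}$ violates AND Fairness.

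The argument is short and the only points needing attention are minor: one must cite Corollary~\ref{cor:fullclassifier} (or Lemma~\ref{lemma:fairadditions}) to be sure $C_i$ and $C_j$ are fair on all of $U$ rather than merely on $\{u,v\}$, and one must do the tiny case split on whether $a=1$ in the final strict-inequality check, noting that the genuinely degenerate situation $\D^\Outcome(u,v)=1$ has already been set aside as vacuous. I do not expect any real obstacle; the whole content is that the hypothesis hands us two tasks that each have room to spare at the pair $(u,v)$, and AND's multiplicative shrinkage converts that slack into a fairness violation that no single-task functional composition could produce.
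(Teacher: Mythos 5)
Your proof is correct and takes essentially the same approach as the paper's: construct classifiers that are maximally separated at the pair $(u,v)$ permitted by the hypothesis and let the multiplicative collapse of AND push the composed distance past $\D^\Outcome(u,v)$. Your version is slightly cleaner in that anchoring $u$ at probability $1$ for both tasks yields the single inequality $a+b-ab>c$ and avoids the paper's two-case split (the paper treats $\D_i(u,v)>\D^\Outcome(u,v)$ by making $C_j$ constant, and the all-equal case separately), and you also explicitly handle the remaining $k-2$ tasks and the degenerate corner $\D^\Outcome(u,v)=1$, which the paper leaves implicit.
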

\begin{proof}
Fix a pair $u,v \in U$ which have positive distance for two tasks $T_i,T_j$ and $\D^\Outcome(u,v) \leq \D_i(u,v),\D_j(u,v)$.

First, select a classifier $C$ for task $i$ such that $p_u-p_v = \D_i(u,v)$ (Lemma \ref{lemma:fairadditions} provides the necessary construction procedure). We will show how to select $p_u',p_v'$ for the classifier $C'$ for task $j$ such that $|p_u'-p_v'| \leq \D_j(u,v)$, but the composition violates AND Fairness. Notice that the difference probability of positive classification under AND is equivalent to
\[d_{AND}(u,v) = |p_up_u' - p_vp_v'|\]

Given the constraints of the theorem statement, there are two possible cases. 
\begin{enumerate}
    \item $\D_i(u,v)>\D^\Outcome(u,v)$ (or symmetrically $\D_j(u,v)>\D^\Outcome(u,v)$). 
    This case is trivial; if $|p_u - p_v|=\D_i(u,v)>\D^\Outcome(u,v)$, we can simply select $p_v'=p_u'=1$ to violate AND Fairness with respect to $\D^\Outcome$.
    \item $\D_i(u,v)=\D_j(u,v)=\D^\Outcome(u,v)$.  We instead select $p_u',p_v'$ such that
    $p_u' - p_v' = \D_j(u,v)=\D^\Outcome(u,v)$. Rearranging the equation above, we now have 
    \[d_{AND}(u,v) = p_up_u' - p_vp_v'\]
    \[d_{AND}(u,v) = p_up_u' - p_v(p_u'-\D^\Outcome(u,v))\]
    Then substituting our original choice for $p_u-p_v=\D_i(u,v)=\D^\Outcome(u,v)$
    \[d_{AND}(u,v) = (p_v + \D^\Outcome(u,v))p_u' - p_v(p_u'-\D^\Outcome(u,v))\]
    \[d_{AND}(u,v) = p_u'\D^\Outcome(u,v) + p_v\D^\Outcome(u,v)\]
    Thus choosing $p_u'$ such that $p_u'+p_v>1$ is sufficient to violate the distance for AND Fairness with respect to $\D^\Outcome$. (Choosing $p_u'=1$ is sufficient to achieve this.)
\end{enumerate}
\end{proof}

Notice that this theorem is only a loose characterization of the cases that will violate AND Fairness, as it only takes advantage of distance under two classifiers.

\begin{figure}
    \centering
    \includegraphics[width=0.65\textwidth]{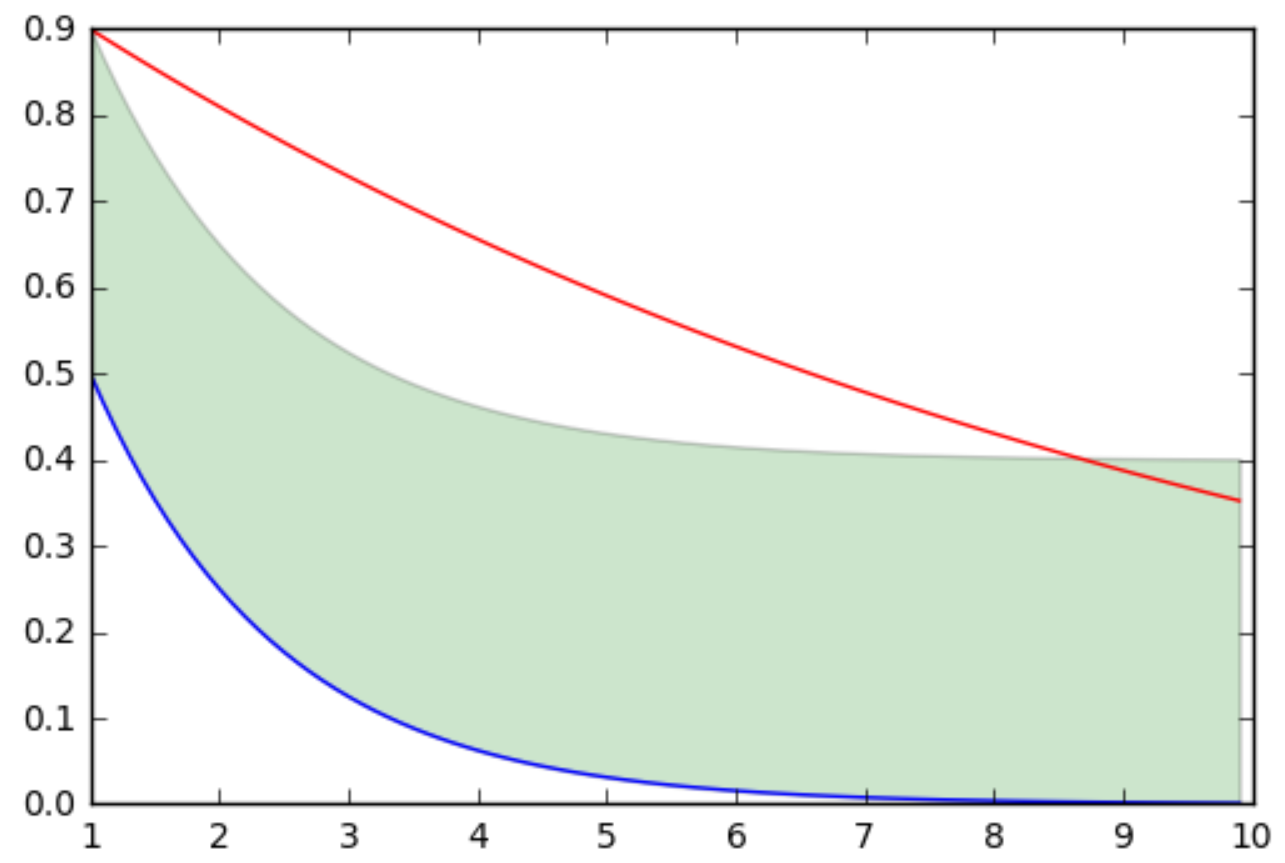}
    \caption{Comparison of the AND same-task composition with $p_u=0.9$ and $p_v=0.5$. The red and blue lines track $p_u^n$ and $p_v^n$. The shaded green region indicates the region within the original fair distance bound. Value for $p_u$ exceeds the bounded region for $n\in (1,8]$. Notice that unlike OR fairness, $p_u,p_v\geq 0.5$ can result in unfairness under same-task AND composition.} \label{fig:and}
\end{figure}

\begin{figure}
    \centering
    \includegraphics[width=0.65\textwidth]{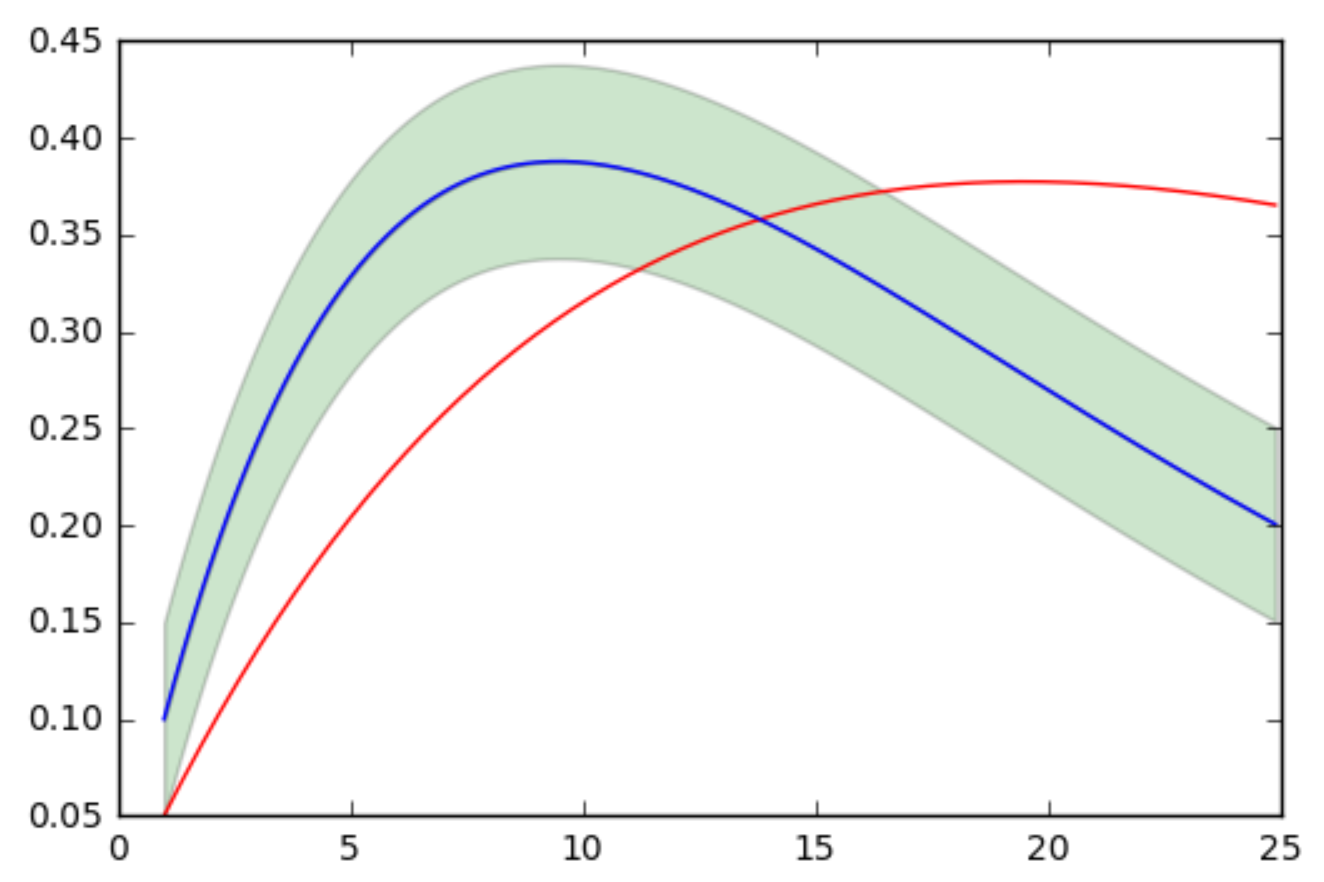}
    \caption{Comparison of the XOR same-task composition, that is, exactly one positive classification, with $p_u=0.05$ and $p_v=0.1$. The red and blue lines track $np_u(1-p_u)^{n-1}$ and $np_v(1-p_v)^{n-1}$. The shaded green region indicates the region within the original fair distance bound. Notice that there is no clear region with fewer than 25 applications for which the two converge for more than a short region.} \label{fig:xor}
\end{figure}

\subsubsection{Implications of functional compositions}
We have shown that na\"{\i}ve application of multiple fair classifiers for the same task may result in unfairness. Our characterization of unfairness in the same-task setting is not exhaustive, but it does give us intuition about how to reason about systems where multiple, independent classifiers for the same task must interact. In particular, we have shown for settings like college admissions or loan applications, where the number of applications are generally less than 10, that each bank or college acting fairly in isolation is not enough to ensure fairness of the system as a whole. 

The observations on same-task composition are not all doom and gloom, however. Our characterizations indicate that  many natural tasks where the number of repeated classifications is large, like advertising, are unlikely to suffer from same-task composition issues, assuming no individuals are systematically excluded from the system. 

In the case of multiple task functional compositions, the results are a bit less optimistic, but roughly match our intuition that in scenarios which require consideration of ``irrelevant'' attributes for the task (such as ability to pay tuition) it may be difficult to guarantee fairness. We discuss other similarly constrained scenarios in Section \ref{section:cohort}.

Comparing functional composition to differential privacy, it is important to understand that each component satisfying individual fairness separately (and for different metrics) is not analogous to the composition properties of differential privacy. With differential privacy, we assume a single privacy loss random variable which evolves gracefully with each release of information, increasing in expectation over time. However, with fairness, we may see that fairness loss increases or decreases (depending on the number and type of compositions) in idiosyncratic ways.  Moreover, we may need to simultaneously satisfy many different task-specific `fairness budgets,' and a bounded increase in distance based on one task may be catastrophically large for another.

\subsection{Multiple-Task Composition}\label{section:multipletask}
Next, we turn our attention to composition of classifiers for multiple tasks where the outcome for more than one task is decided. The first question we must consider is how to evaluate fairness constraints on systems that affect outcomes for multiple tasks. Multiple Task Fairness, defined next, requires fairness to be enforced independently and simultaneously for each task.

\begin{definition}[Multiple Task Fairness] For a set of $k$ tasks $\mathcal{T}$ with metrics $\mathcal{D}_1, \ldots, \mathcal{D}_k$,  a (possibly randomized) system $\mathcal{S}: U \times r \rightarrow \{0,1\}^k$, which assigns outputs for task $i$ in the $i^{th}$ coordinate of the output, satisfies multiple task fairness if for all $i \in [k]$ and all $u,v \in U$
\[\mathcal{D}_i(u,v) \geq |\E[\mathcal{S}_i(u)]-\E[\mathcal{S}_i(v)]|\]
where $\E[\mathcal{S}_i(u)]$ is the expected outcome for the $i^{th}$ task in the system $\mathcal{S}$ and where the expectation is over the randomness of the system and all its components. 
\end{definition}
 
Enforcing multiple task fairness makes sense when the tasks, and therefore outcomes, are distinct and incomparable. 
For example, consider an advertising system which shows users ads for either high paying jobs or home appliances. 
If two users are similarly qualified for high paying jobs, they should see a similar number of ads for high paying jobs, regardless of their intentions to buy home appliances. 
Essentially we do not want to allow positive distance for a task $T_i$ to be used to increase the distance over outcomes for a different task $T_j$.

\subsubsection{Task-Competitive Composition}
We now pose the relevant problem for multiple task fairness: choice or competitive composition. 
Clearly if classifiers for each task may \textit{independently} and fairly assign outputs, the system as a whole satisfies multiple task fairness.
However, most systems will require trade-offs between tasks. 
For example, two activities may be offered at the same time, or a website may only have one slot in which to show an advertisement. 
We therefore define the following problem:

\begin{definition}[Single Slot Composition Problem]
A (possibly randomized) system $\mathcal{S}$ is said to be a solution to the single slot composition problem 
for a set of $k$ tasks $\mathcal{T}$ with metrics $\mathcal{D}_1, \ldots, \mathcal{D}_k$, 
if $\forall u \in U$ $\mathcal{S}$ assigns outputs  for each task $\{x_{u,1}, \ldots, x_{u,k}\} \in \{0,1\}^k$ such that
\[\sum_{i \in [k]}x_{u,i} \leq 1\]
and $\forall$ $i \in [k]$, and $\forall$ $u,v \in U$
\[\mathcal{D}_i(u,v) \geq |\E[x_{u,i}]-\E[x_{v,i}]|\]
\end{definition}

The single slot composition problem captures a scenario in which a system can choose at most one of a set of possible outcomes, but need not choose any outcome. For example, an advertising platform may have a single slot to show an ad. Imagine that this advertising system only has two types of ads: those for jobs and those for household goods. If a person is qualified for jobs and wants to purchase household goods, the system must pick at most one of the ads to show. In this scenario, it's unlikely that the advertising system would choose to show no ads, but the problem specification does not require that any positive outcome is chosen. 

To solve the single-slot composition problem we must build a system which chooses at most one of the possible tasks so that fairness is preserved for each task across all elements in the universe. This problem can be extended to consider up to $k-1$ slots, but as in our discussion of OR-fairness, we only formally consider the single-slot version for clarity and ease of reading. 

\paragraph{Na\"{\i}ve Multiple-Task Composition}

The simplest scenario to consider is a single instance of the single-slot composition problem. For our motivating example, we'll consider two advertisers competing for a single advertisement slot on a website.

Task-Competitive Composition, defined below, captures the essence of several natural simple compositions. 

As only one ad can show at once, we first define the notion of a \tiebreakingfunction:
\begin{definition}[Tie-breaking Function] A (possibly randomized) \textit{\tiebreakingfunction} $\Tiebreak: U \times \{0,1\}^* \times \{0,1\}^k \rightarrow [k] \cup \{0\}$ takes as input an individual $w\in U$ and a $k-$bit string $x_w$ and outputs the index of a ``1" in $x_w$ if such an index exists and 0 otherwise. 
\end{definition}

Note that the \tiebreakingfunction~need not encode the same logic for all individuals, or conform to any particular notion of internal consistency. That is, $\Tiebreak$ may encode that $w$ prefers outcome $A$ to outcome $B$, and outcome $B$ to outcome $C$, \textit{and} outcome $C$ to outcome $A$. 
The \tiebreakingfunction~may also be randomized. That is, with probability $p_{A,B}$, outcome $A$ is preferred to outcome $B$. When the probability of the preference is $1$ or $0$, we refer to it as a strict preference, as the output is strictly preferred for that particular element.
In an ad setting, for example, a strict preference might indicate that one advertiser consistently outbids the other. This strict preference might apply for all elements in the universe, or only a subset.

Implicit in this definition is that if there is no tie to be broken, the single positive classification is preferred. 
This is a reasonable model both for advertising situations (the advertising platform prefers to have revenue from showing as many ads as possible) and in situations where both outputs are desirable. The \tiebreakingfunction~also captures situations where ordering of classifiers (or decisions) is based on a fixed policy or there is time pressure to respond to one classifier before moving on to another.

\begin{definition}[Task-Competitive  Composition]\label{def:choicecomp}
Consider a set $\T$ of $k$ tasks, and a \tiebreakingfunction~as defined above.
Given a set $\mathcal{C}$ of classifiers for the set of tasks, define $y_w = \{y_{w,1}, \ldots, y_{w,k}\}$ where $y_{w,i}=C_i(w)$. The \competitivecomposition~of the set $\mathcal{C}$ is defined as
\[y_{w}^* = \Tiebreak(w,y_w)\]
for all $w\in U$.
\end{definition}

Task-competitive composition can reflect cases where classifiers are applied in a strict ordering until a positive classification is reached or where classifiers are applied simultaneously and a single output is selected.
For example, in the case of loan applications, a \competitivecomposition~could be used to reflect the process of applying for loans one at a time, using strict preference to indicate ordering. In the case of advertising, the \tiebreakingfunction~can express the probability that one advertiser outbids another. For notation convenience in the two task setting, we refer to $\Tiebreak_w(T)$ as the probability that $T$ is chosen when both $T$ and $T'$ are options.

Before stating and proving the more general theorem, we address the simple case in which all $w \in U$ have the same strict preference for task $T$. 

\begin{lemma}\label{lemma:funnelunfairness}
For any two tasks $T$ and $T'$ such that the metrics for each task ($\mathcal{D}$ and $\mathcal{D'}$ respectively) are not identical and are non-trivial on a universe $U$, and if there is a strict preference for $T$, that is $\Tiebreak_w(T)=1 $ $\forall w \in U$, then there exists a pair of classifiers $\mathcal{C}=\{C,C'\}$ which are individually fair in isolation but when combined with \competitivecomposition~violate multiple task fairness.
\end{lemma}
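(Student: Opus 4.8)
The plan is to use the only leverage that a strict preference for $T$ provides, namely that it pins down the winner's marginal distribution. Writing $p_w=\Pr[C(w)=1]$ and $p'_w=\Pr[C'(w)=1]$ (each classifier drawing its own independent coins), a strict preference for $T$ forces $\Pr[y^*_w=T]=p_w$ and $\Pr[y^*_w=T']=(1-p_w)p'_w$ for every $w$. Since individual fairness of $C$ already gives $|p_u-p_v|\le \mathcal{D}(u,v)$ for all $u,v$, the constraint for task $T$ holds automatically; so the whole job is to exhibit individually fair $C,C'$ (for $\mathcal{D},\mathcal{D}'$ respectively) and a pair $u,v$ with
\[
\bigl|(1-p_u)p'_u-(1-p_v)p'_v\bigr|>\mathcal{D}'(u,v),
\]
while keeping $|p'_u-p'_v|\le \mathcal{D}'(u,v)$. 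The point is that the $w$-dependent factor $(1-p_w)$ is a ``stretch'' the losing task $T'$ inherits from $T$, exactly as in the sketch of Theorem~\ref{abs:theorem:choiceunfairness}.

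Concretely I would proceed as follows. Using that $\mathcal{D},\mathcal{D}'$ are not identical and both non-trivial, first locate a pair $u,v\in U$ that is \emph{exploitable}, in the sense that either (i) $\mathcal{D}(u,v)>\mathcal{D}'(u,v)$, or (ii) $\mathcal{D}(u,v)>0$ and $\mathcal{D}'(u,v)<1$. In case (i), take $C'$ to be the constant classifier $C'(w)\equiv 1$ (individually fair under any metric) and, via Corollary~\ref{cor:maxdist} together with Lemma~\ref{lemma:fairadditions}, an individually fair $C$ for $\mathcal{D}$ with $|p_u-p_v|=\min(\mathcal{D}(u,v),1)$; then $\Pr[y^*_w=T']=1-p_w$, so the $T'$-gap at $(u,v)$ is $\min(\mathcal{D}(u,v),1)>\mathcal{D}'(u,v)$. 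In case (ii), use Lemma~\ref{lemma:fairadditions} and Corollary~\ref{cor:fullclassifier} to build an individually fair $C$ with $p_u=0$, $p_v=\min(\mathcal{D}(u,v),1)$ and an individually fair $C'$ with $p'_u=1$, $p'_v=1-\mathcal{D}'(u,v)$; a one-line computation then gives
\[
\Pr[y^*_u=T']-\Pr[y^*_v=T']=\mathcal{D}'(u,v)+p_v\bigl(1-\mathcal{D}'(u,v)\bigr)>\mathcal{D}'(u,v).
\]
Either way multiple task fairness is violated, which proves the lemma; the corresponding general theorem then follows by handling non-strict and per-element tie-breaking on top of this core obstruction.

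The step I expect to be the main obstacle is the first one — extracting a single exploitable pair from the hypotheses. If every pair that separates $\mathcal{D}$ from $\mathcal{D}'$ does so in the ``wrong'' direction and at $\mathcal{D}$-distance $0$, one must instead invoke non-triviality of $\mathcal{D}$ to grab a pair with $0<\mathcal{D}(\cdot,\cdot)<1$ and argue it has $\mathcal{D}'$-distance below $1$ before applying construction (ii). One also has to check that the two-point probability assignments above genuinely extend to classifiers that are individually fair on all of $U$, not merely on $\{u,v\}$; this is exactly what Lemma~\ref{lemma:fairadditions} and its corollaries deliver, so once the pair is fixed that part is routine, and the remaining algebra is the short identity displayed above.
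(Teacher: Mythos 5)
Your proposal follows essentially the same strategy as the paper's proof of Lemma~\ref{lemma:funnelunfairness}: exploit that strict preference for $T$ makes the $T$-marginal equal to $p_w$ (so fairness for $T$ is automatic) and the $T'$-marginal equal to $(1-p_w)p'_w$, then construct $C,C'$ so that the induced gap $|(1-p_u)p'_u-(1-p_v)p'_v|$ exceeds $\mathcal{D}'(u,v)$. Where you differ is in the case split and the construction: the paper splits on $\mathcal{D}'(u,v)=0$ versus $\mathcal{D}'(u,v)\neq 0$ and, in the latter case, fixes $|p'_u-p'_v|=\mathcal{D}'(u,v)$ with opposite orderings and then uses Corollary~\ref{cor:ratio} to make the ratio $p_v/p_u$ large enough that $p_vp'_v>p_up'_u$; you instead split on whether $\mathcal{D}(u,v)>\mathcal{D}'(u,v)$ or ($\mathcal{D}(u,v)>0$ and $\mathcal{D}'(u,v)<1$) and plug in explicit endpoint probabilities ($p_u=0$, $p'_u=1$, etc.), which makes the final inequality a one-line computation rather than a ratio argument. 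Both constructions rest on the same extension machinery (Lemma~\ref{lemma:fairadditions} and its corollaries), and your explicit-values route is, if anything, slightly more transparent.

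The obstacle you flag — whether an ``exploitable'' pair always exists — is real, and your proposed fix (grab a pair with $0<\mathcal{D}(u,v)<1$ and ``argue it has $\mathcal{D}'$-distance below $1$'') cannot always be carried out: consider $U=\{a,b,c\}$ with $\mathcal{D}(a,b)=\mathcal{D}(a,c)=1/2$, $\mathcal{D}(b,c)=0$ and $\mathcal{D}'(a,b)=\mathcal{D}'(a,c)=1$, $\mathcal{D}'(b,c)=1/2$. Both metrics are non-trivial and non-identical, yet no pair satisfies your condition (i) or (ii); moreover in this instance the lemma's conclusion genuinely fails, since every pair with $\mathcal{D}'<1$ has $\mathcal{D}=0$, forcing $p_u=p_v$ and hence $(1-p_u)|p'_u-p'_v|\le\mathcal{D}'(u,v)$, while pairs with $\mathcal{D}'=1$ impose no $T'$-constraint. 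So the residual case your dichotomy misses is precisely a case in which no proof could succeed — and the paper's own proof breaks there too (its second case forces $p'_v=0$, making the required ratio condition $\alpha>p'_u/p'_v$ unsatisfiable). You should therefore not regard this as a defect of your argument relative to the paper's; outside this degenerate corner, your proof is complete and correct.
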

\begin{proof}
We construct a pair of classifiers $\mathcal{C}=\{C,C'\}$ which are individually fair in isolation for the tasks $T$ and $T'$, but do not satisfy multiple task fairness when combined with \competitivecomposition~with a strict preference for $T$ for all $w\in U$. 
\Competitivecomposition~ensures that at most one task can be classified positively for each element, so our strategy is to construct $C$ and $C'$ such that the distance between a pair of individuals is stretched for the `second' task.

By non-triviality of $\mathcal{D}$, there exist $u,v$ such that $\mathcal{D}(u,v)\neq 0$. Fix such a pair $u,v$ and let $p_u$ denote the probability that $C$ assigns 1 to $u$, and analogously $p_v,p_u',p_v'$. We use these values as placeholders, and show how to set them to prove the lemma. 
 
Because of the strict preference for $T$, the probabilities that $u$ and $v$ are assigned 1 for the task $T'$
\[\Pr[\mathcal{S}(u)_{T'} = 1] = (1-p_u)p_u'\]
\[\Pr[\mathcal{S}(v)_{T'} = 1] = (1-p_v)p_v'\]
The difference between them is 
\[\Pr[\mathcal{S}(u)_{T'} = 1] - \Pr[\mathcal{S}(v)_{T'} = 1] = (1-p_u)p_u' - (1-p_v)p_v'\]
\[ = p_u' - p_up_u' - p_v' + p_vp_v'\]
\[ = p_u'  - p_v' + p_vp_v' - p_up_u'\]
Notice that if $\mathcal{D'}(u,v)=0$, which implies that $p_u'=p_v'$, and $p_u \neq p_v$, then this quantity is non-zero, giving the desired contradiction for all fair $C'$ and any $C$ that assigns $p_u \neq p_v$, which can be constructed per Corollary \ref{cor:maxdist}.

However, if $\mathcal{D'}(u,v)\neq 0$, take $C'$ such that $|p_u' - p_v'| = \mathcal{D'}(u,v)$  and denote the distance $|p_u'-p_v'|=m'$,
and without loss of generality, assume that $p_u' > p_v'$ and $p_u < p_v$, 

\[\Pr[\mathcal{S}(u)_{T'} = 1] - \Pr[\mathcal{S}(v)_{T'} = 1] = m' + p_vp_v' - p_up_u'\]
Then to violate fairness for $T'$, it suffices to show that $p_vp_v'> p_up_u'$. Write $p_v = \alpha p_u$ where $\alpha > 1$,
\[\alpha p_up_v' > p_up_u'\]
\[\alpha p_v' > p_u'\]
Thus it is sufficient to show that we can choose $p_u,p_v$ such that $\alpha > \frac{p_u'}{p_v'}$. 
Constrained only by the requirements that $p_u< p_v$ and $|p_u - p_v| \leq \mathcal{D}(u,v)$, we may choose $p_u,p_v$ to obtain an arbitrarily large $\alpha= \frac{p_v}{p_u}$ by Corollary \ref{cor:ratio}. Thus there exist a pair of fair classifiers $C,C'$ which when combined with strictly ordered \competitivecomposition~violate multiple task fairness.
\end{proof}
The intuition for unfairness in such a strictly ordered composition is that each task inflicts its preferences on subsequent tasks. 
This is most clearly seen when an equal pair for the second task are unequal for the first. 
Once the first classifier acts, so long as the distance between the two is positive, the pair have unequal probabilities of even being considered by the second classifier, which breaks their equality for that task for \textit{any} fair classifier. 

We now extend Lemma \ref{lemma:funnelunfairness} to the more general setting, in which there need not be a strict preference, and find that the problems with unfairness generalize to this case.

\begin{theorem}\label{theorem:choiceunfairness}

For any two tasks $T$ and $T'$ with nontrivial metrics $\mathcal{D}$ and $\mathcal{D'}$ respectively, there exists a set $\C$ of classifiers which are individually fair in isolation but when combined with \competitivecomposition~violate multiple task fairness for any \tiebreakingfunction.
\end{theorem}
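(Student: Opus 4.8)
Treat $T$ as coordinate $1$ and $T'$ as coordinate $2$, and let $p_w=\E[C(w)]$, $p_w'=\E[C'(w)]$ be the values of the two classifiers I must construct. Against an arbitrary tie-breaking function the only relevant freedom is, for each $w$, the probability $q_w\in[0,1]$ that index $1$ is returned when $y_w=11$; every other case is forced by the definition of a tie-breaking function. Hence $\Pr[y^*_w=1]=p_w(1-p_w')+q_w p_w p_w'$ and $\Pr[y^*_w=2]=(1-p_w)p_w'+(1-q_w)p_w p_w'$, and their sum $\tau_w:=p_w+p_w'-p_wp_w'$ is \emph{independent} of $q_w$. Writing $a_w:=\Pr[y^*_w=1]$, the theorem asks for one choice of $p_\cdot,p_\cdot'$ (each respecting its own Lipschitz constraint) such that \emph{no} assignment $\{q_w\}_{w\in U}$ makes both coordinates Lipschitz at once; equivalently, no $\mathcal{D}$-Lipschitz function $a_\cdot$ with $a_w\in[p_w(1-p_w'),\,p_w]$ has a $\mathcal{D}'$-Lipschitz complement $\tau_w-a_w$.

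\textbf{Why one pair is not enough, and the forced-equality cycle.} Conservation gives $(a_u-a_v)+\big((\tau_u-a_u)-(\tau_v-a_v)\big)=\tau_u-\tau_v$, and one checks $\tau_u-\tau_v$ falls short of $\mathcal{D}(u,v)+\mathcal{D}'(u,v)$ by a positive multiplicative margin of order $\mathcal{D}(u,v)\mathcal{D}'(u,v)$; this slack always lets the tie-breaker split the gap feasibly, so a single pair can never defeat every tie-breaker. To kill the slack the plan is to use zero-distance pairs, which impose hard equalities: a $\mathcal{D}$-clone pair forces $a_u=a_v$, while a $\mathcal{D}'$-clone pair forces $a_u-a_v=\tau_u-\tau_v$. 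I would place four individuals $w_0,w_1,w_2,w_3$ as a rectangle in $(\mathcal{D},\mathcal{D}')$-space, giving $C$ the values $(a,a,b,b)$ and $C'$ the values $(e,c,c,e)$ with $a\neq b$ and $c\neq e$, so that $(w_0,w_1),(w_2,w_3)$ are $\mathcal{D}$-clones and $(w_1,w_2),(w_3,w_0)$ are $\mathcal{D}'$-clones. Around the $4$-cycle the increments of $a_\cdot$ telescope to $0$, yet the four hard equalities force them to equal $0,\ \tau_{w_2}-\tau_{w_1},\ 0,\ \tau_{w_0}-\tau_{w_3}$, whose sum is $(b-a)(e-c)\neq 0$. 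This contradiction shows that no $\{q_w\}$ keeps both coordinates Lipschitz, i.e.\ this one fixed pair of classifiers violates multiple task fairness under \emph{every} tie-breaking function.

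\textbf{Securing the gadget, and the main obstacle.} To anchor the construction I would first show that two nontrivial metrics always share a positive-distance pair: ``distance $0$'' is an equivalence relation, so each metric partitions $U$ into at least two classes, and a short argument shows two such partitions must contain a pair lying in different classes of both, yielding $u,v$ with $\mathcal{D}(u,v),\mathcal{D}'(u,v)>0$. Checking fairness of $C,C'$ on all six rectangle pairs is then routine: the four edges are tight by construction, and the two diagonals follow from the triangle inequality (which forces $\mathcal{D}(w_0,w_2)=|a-b|$ and $\mathcal{D}'(w_0,w_2)=|e-c|$). The hard part will be the gap between the anchor pair and the full gadget: the nontriviality hypothesis alone need not provide the zero-distance clones the rectangle relies on. I expect the resolution to be quantitative---replace exact clones by near-clones, bound the feasible range of $a_\cdot$ on each directed edge by $U_{xy}=\min\!\big(\mathcal{D}(x,y),\,(\tau_y-\tau_x)+\mathcal{D}'(x,y)\big)$, and choose the classifier values so that the cyclic sum satisfies $\sum U_{xy}<0$, i.e.\ so that the telescoping imbalance $(b-a)(e-c)$ dominates the accumulated multiplicative slack $\sum \mathcal{D}(x,y)\mathcal{D}'(x,y)$ around the cycle. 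Making this imbalance beat the slack for arbitrary nontrivial metrics is the crux of the argument.
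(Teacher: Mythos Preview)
The crux is quantifier order. You read the theorem as $\exists\,\mathcal{C}\ \forall\,\Tiebreak$: one fixed pair of classifiers must be unfair under \emph{every} tie-breaking function. The paper, however, proves the weaker $\forall\,\Tiebreak\ \exists\,\mathcal{C}$: given the tie-breaking function, one then constructs $C,C'$ that defeat it. This is unambiguous from the proof structure, which does a case split on the values $\Tiebreak_u(T),\Tiebreak_v(T)$ at a fixed pair $u,v$ and chooses $p_u,p_v,p_u',p_v'$ \emph{differently in each case} (e.g.\ ``choose $C'$ such that $p_vp_v'>p_up_u'$'', or ``choose $p_u',p_v'$ with ratio exceeding $\beta/\alpha$''). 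The classifiers depend on $\Tiebreak$.

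Your conservation observation is correct and in fact shows that the stronger reading you target is not provable from the stated hypotheses. With only two individuals $u,v$, the identity $\tau_u-\tau_v=(p_u-p_v)(1-p_v')+(p_u'-p_v')(1-p_u)$ gives $|\tau_u-\tau_v|\le \mathcal{D}(u,v)+\mathcal{D}'(u,v)$, so the two Lipschitz windows for $a_u-a_v$ always overlap and the tie-breaker can split the gap. Your rectangle gadget is designed to defeat this slack via a $4$-cycle of forced equalities, but it needs zero-distance clones in prescribed positions, and nontriviality of $\mathcal{D},\mathcal{D}'$ supplies nothing of the sort; indeed when $|U|=2$ the gadget cannot even be instantiated, and by your own slack argument the $\exists\mathcal{C}\,\forall\Tiebreak$ statement is then false. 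The ``quantitative relaxation'' you sketch does not close this gap: it presupposes four points whose cross-metric near-clone structure makes $(b-a)(e-c)$ dominate the accumulated slack, and arbitrary nontrivial metrics need not contain any such configuration.

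For the version the paper actually proves, none of this machinery is needed. Fix $\Tiebreak$, pick any $u,v$ with $\mathcal{D}(u,v)>0$, set $C$ so that $p_u-p_v=\mathcal{D}(u,v)$ (Corollary~\ref{cor:maxdist}), and then use Corollary~\ref{cor:ratio} to tune the ratio $p_u'/p_v'$ (or $p_v/p_u$) so that the cross-term $p_vp_v'(1-\Tiebreak_v(T))-p_up_u'(1-\Tiebreak_u(T))$ has the sign that pushes the task-$T$ gap strictly beyond $\mathcal{D}(u,v)$. That is the whole argument.
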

\begin{proof}
Consider a pair of classifiers $C,C'$ for the two tasks. 
Let $p_u$ denote the probability that $C$ assigns 1 to $u$, and analogously let $p_v,p_u',p_v'$ denote this quantity for the other classifier and element combinations. 
As noted before, for convenience of notation, write $\Tiebreak_u(T)$ to indicate the preference for each (element, outcome) pair, that is the probability that given the choice between $T$ or the alternative outcome $T'$, $T$ is chosen. 
Note that in this system, for each element $\Tiebreak_u(T) + \Tiebreak_u(T')=1$.

Note that if $\Tiebreak_w(T) = 1$ $ \forall w \in U$ or $\Tiebreak_w(T') = 1$ $ \forall w \in U$, the setting is exactly as described in Lemma \ref{lemma:funnelunfairness}. Thus we need only argue for the two following cases:
\begin{enumerate}
    \item Case $\Tiebreak_u(T)=\Tiebreak_v(T) \neq 1$. We can write an expression for the probability that each element is assigned to task $T$:
    \[\Pr[\mathcal{S}(u)_T=1] = p_u(1-p_u') + p_up_u'\Tiebreak_u(T)\]
    \[\Pr[\mathcal{S}(v)_T=1] = p_v(1-p_v') + p_vp_v'\Tiebreak_v(T)\]
    So the difference in probabilities is
    \[\Pr[\mathcal{S}(u)_T=1] -\Pr[\mathcal{S}(v)_T=1] = p_u(1-p_u') + p_up_u'\Tiebreak_u(T) - p_v(1-p_v') - p_vp_v'\Tiebreak_v(T)\]
    \[ = p_u- p_v +p_vp_v'- p_up_u' + p_up_u'\Tiebreak_u(T) - p_vp_v'\Tiebreak_v(T)\]
    \[ = p_u- p_v +(p_vp_v'- p_up_u')(1-\Tiebreak_u(T)) \]

    By our assumption that $\Tiebreak_u(T) \neq 1 $, we proceed analogously to the proof of Lemma \ref{lemma:funnelunfairness} choosing $C'$ such that $p_vp_v'>p_up_u'$ and choosing $C$ to ensure that $p_u - p_v = \mathcal{D}(u,v)$ to achieve unfairness for $T$. 

    \item Case $\Tiebreak_u(T) \neq \Tiebreak_v(T)$. Assume without loss of generality that \\ $\Tiebreak_u(T) \neq 1$. Recall the difference in probability of assignment of 1 for the first task in terms of $\Tiebreak$:
    \[ = p_u- p_v +p_vp_v'(1-\Tiebreak_v(T))- p_up_u'(1-\Tiebreak_u(T)) \]
    Choose $C$ such that $p_u - p_v = \mathcal{D}(u,v)$ (or if there is no such individually fair $C$, choose the individually fair $C$ which maximizes the distance between $u$ and $v$). So it suffices to show that we can select $C'$ such that $p_vp_v'(1-\Tiebreak_v(T))- p_up_u'(1-\Tiebreak_u(T)) > 0$.
    As before, write $p_u = \alpha p_v$ where $\alpha >1$.
    We require:
    \[p_vp_v'(1-\Tiebreak_v(T))  > \alpha p_vp_u'(1-\Tiebreak_u(T))\]
    \[p_v'(1-\Tiebreak_v(T))  > \alpha p_u'(1-\Tiebreak_u(T))\]
    Writing $\beta = (1-\Tiebreak_v(T))/(1-\Tiebreak_u(T))$  (recall that $\Tiebreak_u(T) \neq 1$ so there is no division by zero), we require
    \[p_v'\beta  > \alpha p_u'\]
    \[\beta/\alpha  > p_u'/p_v'\]

    Constrained only by $|p_u' - p_v'| \leq \mathcal{D'}(u,v)$, we can choose $p_u',p_v'$ to be any arbitrary positive ratio per Corollary \ref{cor:ratio}, thus we can select a satisfactory $C'$ to exceed the allowed distance.
\end{enumerate}

Thus we have shown that for the cases where the \tiebreakingfunctions~are identical for $u$ and $v$ and when the \tiebreakingfunctions~are different, there always exists a pair of classifiers $C,C'$ which are fair in isolation, but when combined in \competitivecomposition do not satisfy multiple task fairness which completes the proof.
\end{proof}

The natural intuition for \competitivecomposition~might be that tie-breaking preferences could ease unfairness for some classifiers. 
However, in many natural tasks we actually expect preferences to work against us. Indeed, when one task is strictly preferred over the other, a very natural case, \competitivecomposition~always splits pairs of individuals who are unequal in the preferred task and equal in the other task. For example, consider the case of free school breakfasts and a new SAT preparation class offered before school. Students qualified for the SAT class must decide if they would rather eat breakfast or attend SAT class. The natural human preference not to be hungry will likely win. 
The right solution here is to offer breakfast in a way that doesn't conflict with SAT class attendance (eg, offering bagged breakfast that can be taken to class).

Another important consideration in \competitivecompositions~is \textit{whose} \tiebreakingfunction~is used. We might initially assume the choice is made by the individuals classified, but in fact, it could be made by the classifiers (either independently or jointly) or the system itself. Advertising auctions are a good example where the \tiebreakingfunction~is related to the bid for each person by each advertiser, not necessarily each person's preference to see the ad.

Although the formal statement of Theorem \ref{theorem:choiceunfairness} only implies that individually fair classifiers exist that exhibit unfairness under \competitivecomposition, our intuition  suggests that this happens often in practice and that small relaxations will not be sufficient to alleviate this problem, as the phenomenon has been observed empirically \cite{datta2015automated,lambrecht2016algorithmic,kuhn2012gender}. To see this, we revisit the proof of Theorem \ref{theorem:choiceunfairness}, in particular the requirement that $\beta / \alpha > p_u'/p_v'$ to build our intuition.

Recall that $\beta = \Tiebreak_v(T')/\Tiebreak_u(T')$, $\alpha = p_u/p_v>1$. Thus the constraint can be rewritten
\[\frac{\Tiebreak_v(T')p_v}{\Tiebreak_u(T')p_u}>\frac{p_u'}{p_v'}\]
\[\Tiebreak_v(T')p_vp_v'>\Tiebreak_u(T')p_up_u'\]
Imagine the case where $p_u>p_v$, but $p_u'<p_v'$. If $\Tiebreak_u(T')<\Tiebreak_v(T')$ that is, the elements tend to prefer the tasks for which they are highly qualified, then there are many solutions where the inequality holds. 
We include a small empirical example in Appendix \ref{section:empirical} to illustrate the potential magnitude and frequency of such fairness violations.

\paragraph{Simple Fair Multiple-task Composition}\label{section:fairmultipletask}
We now show how to fairly compose tasks for the single slot composition problem. Perhaps the most obvious solution is to remove the conflict in the tasks. Ideally each task can be classified separately and the outcome decided without influencing other tasks. In practice, we know that this is not always feasible.

In some special scenarios, we could choose to optimize the classifiers together with knowledge of the \tiebreakingfunction, both utility functions and both metrics. This allows each classifier to appropriately respond to the other to achieve fairness without sacrificing too much utility in some, but not all cases of \competitivecomposition. However, this would require significant coordination and cooperation on the part of those responsible for each task, so this is unlikely to be practical in some situations.

Fortunately, in some situations there is a general purpose mechanism for the single slot composition problem which requires no additional information in learning each classifier and no additional coordination between the classifiers.

\begin{algorithm}[tb]
   \caption{\randomizeandclassify}
   \label{alg:randomizethenclassify}
\begin{algorithmic}
   \STATE {\bfseries Input:} universe element $u \in U$, set of fair classifiers $\mathcal{C}$ (possibly for distinct tasks)
   operating on $U$, probability distribution over tasks $\mathcal{X} \in \Delta(\mathcal{C})$
   \STATE $x \leftarrow 0^{|\mathcal{C}|}$
   \STATE $C_t \sim \mathcal{X}$
   \IF{$C_t(u)=1$}
   \STATE $x_{t}=1$ 
   \ENDIF
   \STATE return $x$
\end{algorithmic}
\end{algorithm}

\begin{theorem}\label{theorem:simplemultiple}
For any set of $k$ tasks $\mathcal{T}$ with metrics $\mathcal{D}_1,\ldots,\mathcal{D}_k$, the system $\mathcal{S}$ described in Algorithm \ref{alg:randomizethenclassify}, \randomizeandclassifynospace,

achieves multi-task fairness for the single slot composition problem given any set of classifiers $\mathcal{C}$ for the tasks which are individually fair in isolation.
\end{theorem}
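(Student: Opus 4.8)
The plan is to show that \randomizeandclassify\ satisfies both constraints of the single slot composition problem: the "at most one positive output" structural constraint, and the per-task Lipschitz constraint. The structural constraint is immediate from inspection of Algorithm~\ref{alg:randomizethenclassify}: the output vector $x$ is initialized to $0^{|\mathcal{C}|}$, exactly one task index $t$ is sampled from $\mathcal{X}$, and only coordinate $t$ can possibly be set to $1$ (and only if $C_t(u)=1$). Hence $\sum_{i\in[k]} x_{u,i}\le 1$ always, so the output is a valid single-slot assignment. This also shows $\mathcal{S}$ is of the required form and never assigns positive outputs to two tasks simultaneously.

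The core of the argument is the per-task fairness bound. Fix a task $i\in[k]$ and let $\mathcal{X}(i)$ denote the probability that the sampled classifier is $C_i$. For any $u\in U$, the event $\{\mathcal{S}_i(u)=1\}$ happens exactly when $C_i$ is sampled \emph{and} $C_i(u)=1$; since the sampling of $t\sim\mathcal{X}$ is independent of the internal randomness of the classifiers, we get
\[
\E[\mathcal{S}_i(u)] \;=\; \mathcal{X}(i)\cdot \Pr[C_i(u)=1] \;=\; \mathcal{X}(i)\cdot p^{(i)}_u,
\]
where $p^{(i)}_u=\E_r[C_i(u,r)]$. Therefore, for any $u,v\in U$,
\[
\bigl|\E[\mathcal{S}_i(u)]-\E[\mathcal{S}_i(v)]\bigr| \;=\; \mathcal{X}(i)\,\bigl|p^{(i)}_u - p^{(i)}_v\bigr| \;\le\; \bigl|p^{(i)}_u - p^{(i)}_v\bigr| \;\le\; \mathcal{D}_i(u,v),
\]
where the first inequality uses $\mathcal{X}(i)\le 1$ and the second uses that $C_i$ is individually fair in isolation with respect to $\mathcal{D}_i$ (Definition~\ref{def:individualfairness}, in the $|O|=2$ form $d(\dist{C_i}(u),\dist{C_i}(v))=|p^{(i)}_u-p^{(i)}_v|$). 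Since $i$, $u$, and $v$ were arbitrary, this is exactly the Lipschitz condition in the Single Slot Composition Problem, and combined with the structural constraint above, $\mathcal{S}$ solves it; it also satisfies Multiple Task Fairness by the same inequality.

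I do not expect a genuine obstacle here: the only subtlety worth stating carefully is the \emph{independence} between the task-sampling step $C_t\sim\mathcal{X}$ and the classifiers' own randomness, which is what licenses the factorization $\E[\mathcal{S}_i(u)]=\mathcal{X}(i)\,p^{(i)}_u$. It is also worth remarking (as the surrounding text already does) that the bound is not tight in general — scaling every distance by $\mathcal{X}(i)$ is precisely the "reduced allocation" cost of the mechanism — but tightness is not needed for the theorem. One should state the result for a general distribution $\mathcal{X}\in\Delta(\mathcal{C})$ so that it is clear the proof does not rely on $\mathcal{X}$ being uniform.
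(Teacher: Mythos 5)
Your proposal is correct and follows essentially the same argument as the paper: the structural constraint holds by construction, and the per-task bound follows from the factorization $\E[\mathcal{S}_i(u)]=\mathcal{X}(i)\,p^{(i)}_u$ together with $\mathcal{X}(i)\le 1$ and the individual fairness of $C_i$. Your explicit remark about the independence of the task-sampling step from the classifiers' randomness is a slightly more careful statement of what the paper leaves implicit, but it is not a different proof.
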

\begin{proof}
Consider the procedure outlined in Algorithm \ref{alg:randomizethenclassify}. For each element, the procedure outputs
a single positive classification by construction, so the procedure satisfies that constraint of the single slot composition problem.

Note that as the same probability distribution $\mathcal{X}$ and set of classifiers are used for each element $w \in U$, each element has equal probability of having task $T$ selected and the subsequent classifications for that task are fair. So the probability of positive classification in any task is $\Pr[t \sim \mathcal{X} = T]*\Pr[C_T(w) = 1]$. So the difference in probability of positive classification for an arbitrary task $T$ is 
\[\Pr[t \sim \mathcal{X} = T]*\Pr[C_T(u) = 1] - \Pr[t \sim \mathcal{X} = T]*\Pr[C_T(v) = 1]\] 
\[= \Pr[t \sim \mathcal{X} = T](\Pr[C_T(u) = 1] -\Pr[C_T(v) = 1]))  \]

which satisfies individual fairness as long as $C_T$ is individually fair in isolation.

Thus the system which applies RandomizeThenClassify to every element in the universe is a solution to the single slot composition problem as long as each $C \in \mathcal{C}$ is individually fair in isolation.
\end{proof}

\randomizeandclassify has several nice properties. First, it requires no coordination in the training of the classifiers. In particular, it does not require any sharing of objective functions. Second, it preserves the ordering of elements by each classifier. That is, if $\Pr[C_i(u)=1] > \Pr[C_i(v)=1]$ then  $\Pr[\mathsf{RandomizeThenClassify}(u)_i=1] > \Pr[\mathsf{RandomizeThenClassify}(v)_i=1]$. Finally, it can be implemented by a platform or other third party, rather than requiring the explicit cooperation of all classifiers.
The primary downside of \randomizeandclassify is that it drastically reduces allocation (the total number of positive classifications) for classifiers trained with the expectation of being run independently. 

\subsubsection{Summary}\label{subsection:nuance}
One critique of the single slot problem is the idea that more qualified people should simply be shown more ads, or allowed to split time between slots. This is a matter of design, and our mechanisms only look at the very simple design paradigm of a single slot. It's not hard to imagine that a good user interface could manage two slots for users where competition is very high, but this would require a careful analysis of the cognitive load and impact on that user. However, at some point, there will not be room for additional slots, or scheduling flexibility, to allow attendance to all events, and at that point we will be in the same setting explored here.

We primarily consider the case of honest designers with good intentions.  However, failing to enforce multiple-task fairness allows for a significant expansion of the ``catalog of evils'' outlined in \cite{dwork2012fairness}. For example, let us assume that more women than men emphasize team work and organizational skills on their resumes. An employer seeking to hire more men than women for a technical role could aggressively advertise a second role in teamwork management (for which there is only one opening) for which many women will be qualified in order to prevent women from seeing the more desirable technical position ad. This ``generalized steering'' may allow the employer to divert members of a certain group away from a desirable outcome, in analogy to the illegal ``steering'' of minorities to less desirable credit card offerings. 

\begin{remark}

Given a pair of tasks $T$ and $T'$ with metrics $\D$ and $\D'$, our goal is to ensure that the system produces outputs for each task with distributions on outcomes that are 1-Lipschitz with respect to their respective metrics. Taking inspiration from Differential Privacy, one might try allocating a fairness loss `budget' between the (potentially interfering) classifiers for the two tasks. However, such a budget would have to take into account the distances under both tasks -- leading to an unnecessary reduction in optimization flexibility. For example, if a pair $u,v$ are close under $\D$, but far under $\D'$, the budget must be the minimum of the two to prevent potential unfairness for $T$ (this follows from Theorem \ref{theorem:andunfairness}). Algorithm \ref{alg:randomizethenclassify} allows more flexibility than such a budgeting solution without additional coordination in learning each classifier.
\end{remark}

\subsection{Dependent Composition}\label{section:cohort}

Thus far, we have restricted our attention to the mode of operation in which classifiers act on the entire universe of individuals at once and each individual's outcome is decided independently. In practice, however, this is an unlikely scenario, as classifiers may be acting as a selection mechanism for a fixed number of elements, may operate on elements in arbitrary order, or may operate on only a subset of the universe.

In this section, we consider the problems associated with selecting sets of individuals from the universe when outcomes may not be decided independently for each individual. 
Somewhat abusing the term ``composition,'' these problems  
can be viewed as a composition of the classifications of elements of the universe.
We roughly divide these topics into Cohort Selection problems, when a set of exactly $n$ individuals must be selected from the universe, and Universe Subset problems, when only a subset of the relevant universe for the task is under the influence of the classifier we wish to analyze or construct.

Within these two problems we will also consider several relevant settings:
\begin{itemize}
    \item Online versus offline: in many real-world settings, immediate classification response is critical. For example, advertising decisions for online ads must be made immediately upon impression and employers must render employment decisions quickly or risk losing out on potential employees or taking too long to fill a position.
    \item Random permutations versus adversarial ordering: when operating in the online setting, the ordering of individuals may be adversarial or a random permutation of the universe (or subset). In practice, we expect that ordering will most likely not be a random permutation on the universe. For example, the order in which individuals apply for a job opening may be influenced by their social connections with existing employees, which impacts how quickly they hear about the job opening. 
    \item Known versus unknown subset or universe size: it is rare that a single classifier dictates the outcomes for a precisely defined universe or subset, and instead they generally act on a subset or universe of unknown size. The subset size may not be known in advance if it is generated randomly, or if the classifier simply doesn't have access to hidden subset selection processes. For example, an advertiser may know the average number of individuals who visit a website on a particular day, but be uncertain on any particular day of the exact number, and the fraction of who are interested in the products or services they wish to advertise.
    \item Constrained versus unconstrained selection: in many settings there are arbitrary constraints placed on selection of individuals for a task which are unrelated to the qualification or metric for that task. For example, to cover operating costs, a college may need at least $n/2$ of the $n$ students in a class to be able to pay full tuition. 
\end{itemize}

In dependent composition problems, it is important to pay careful attention to the source of randomness used in computing distances between distributions over outcomes. Taking inspiration from the experiment setup found in many cryptographic definitions, we  formally define two problems: Universe Subset Classification and Cohort Selection. We introduce new notation in the definitions below, with additional exposition.
 
\begin{definition}[Universe Subset Classification Problem]\label{def:partitionclass}
Given a universe $U$, let $\mathcal{Y}$ be a distribution over subsets of $U$.
Let $\mathcal{X}=\{\mathcal{X}(V)\}_{V \subseteq U}$ be a family of distributions, one for each subset of $U$, where $\mathcal{X}(V)$ is a distribution on permutations of the elements of $V$.  Let $\Pi(2^U)$ denote the set of permutations on subsets of $U$.
Formally, for a system $\mathcal{S}:\Pi(2^U) \times \{0,1\}^* \rightarrow U^*$, we define the following experiment.

\textbf{$\mathsf{Experiment}(\mathcal{S},\mathcal{X},\mathcal{Y},u)$:} 
\begin{enumerate}
    \item Choose $r \sim \{0,1\}^*$
    \item Choose $V \sim \mathcal{Y}$
    \item Choose $\pi \sim \mathcal{X}(V)$
    \item Run $\mathcal{S}$ on $\pi$ with randomness $r$, and output $1$ if $u$ is selected (positively classified).
\end{enumerate}

The system $\mathcal{S}$ is individually fair and a solution to the Universe Subset Classification Problem for a particular $(\mathcal{X},\mathcal{Y})$ pair if for all $u,v \in U$
\[|\E[\mathsf{Experiment}(\mathcal{S},\mathcal{X},\mathcal{Y},u)]-\E[\mathsf{Experiment}(\mathcal{S},\mathcal{X},\mathcal{Y},v)]| \leq \D(u,v) \]
Note that for any distinct individuals $u, v \in U$, in any given run of the experiment $V$ may contain $u,$ $v,$ neither or both.
\end{definition}

In some cases we will use $\E_{V \sim \mathcal{Y}, \pi \sim \mathcal{X}, r}[\mathcal{S}(u)]$ to denote to the probability that the experiment selects or positively classifies $u$. 

We adopt the convention of specifying $\mathcal{S}$ independently of $\mathcal{X}$ and $\mathcal{Y}$ as these two distributions are likely not under the control of $\mathcal{S}$, and in practice may not even be known to $\mathcal{S}$. For example, an employer may create a resume screening system without knowledge of the ordering or the subset of eligible candidates who will apply within a week of posting a new job. $\mathcal{Y}$ may capture that local job-seekers are more likely to apply than those from out of state, and $\mathcal{X}$ may capture that job-seekers with social ties to current employees will apply before other local candidates. However, we still want the employer to fairly hire regardless of the ordering of the applicants.

Next, we introduce Cohort Selection, which is identical to the Universe Subset Classification Problem, with the additional requirement that the system must select a set of exactly $n$ elements from $U$.

\begin{definition}[Cohort Selection Problem]\label{def:basiccohort}
Given a universe $U$, an integer $n$ and a task with metric $\D$, select a set of $n$ individuals such that the probability of selection is 1-Lipschitz with respect to $\D$, where the probability of selection is taken over all randomness in the system. 
As above, let $\mathcal{Y}$ be a distribution over subsets of $U$. 
 Let $\mathcal{X}=\{\mathcal{X}(V)\}_{V \subseteq U}$ be a family of distributions, one for each subset of $U$, where $\mathcal{X}(V)$ is a distribution on permutations of the elements of $V$.  Let $\Pi(2^U)$ denote the set of permutations on subsets of $U$.
Formally, for a system $\mathcal{S}_n:\Pi(2^U) \times \{0,1\}^* \rightarrow U^n$, we define the following experiment.

Formally, for a system $\mathcal{S}_n:U \times r \rightarrow U^n$, we define the following experiment.

\textbf{$\mathsf{Experiment}(\mathcal{S}_n,\mathcal{X},\mathcal{Y},u)$:} 
\begin{enumerate}
    \item Choose $r \sim \{0,1\}^*$
    \item Choose $V \sim \mathcal{Y}$
    \item Choose $\pi \sim \mathcal{X}(V)$
    \item Run $\mathcal{S}_n$ on $\pi$ with randomness $r$, and output $1$ if $u$ is selected (positively classified).
\end{enumerate}

The system is individually fair and a solution to the Cohort Selection Problem if for all $u,v \in U$, $\mathcal{S}_n$ outputs a set of $n$ distinct elements of $U$ and \\
$|\E[\mathsf{Experiment}(\mathcal{S}_n,\mathcal{X},\mathcal{Y},u)]-\E[\mathsf{Experiment}(\mathcal{S}_n,\mathcal{X},\mathcal{Y},v)]|$
$\leq \D(u,v)$.

\end{definition}

Cohort Selection is Universe Subset Classification with the additional constraint  that the system must select exactly $n$ elements. 

\subsubsection{Basic Offline Cohort Selection}
First we consider the simplest version of the cohort selection problem (Definition \ref{def:basiccohort}): choosing a cohort of $n$ individuals from the universe $U$ when the entire universe is known and decisions are made offline. In this case, $\mathcal{Y}$ is very simple, with weight $1$ on the set $U$ (i.e. $\mathcal{Y}(V)=0$ for all $V \subsetneq U$), and $\mathcal{X}$ is not meaningful, as the system has access to the entire set, and can randomize the order of the elements. 

A simple solution is to choose a permutation of the elements in $U$ uniformly at random, and then apply a fair classifier $C$ until $n$ are selected. 
Algorithm \ref{alg:permutethenclassify} works through a list initialized to a random permutation $\pi(U)$, classifying elements one at a time and independently until either (1) $n$ elements have been selected or (2) the number of remaining elements in the list equals the number of remaining spots to be filled. Case (2) is referred to as the ``end condition''. Elements in the ``end condition'' are selected with probability 1.

\begin{algorithm}[H]
   \caption{\permutethenclassify}
   \label{alg:permutethenclassify}
\begin{algorithmic}
   \STATE {\bfseries Input:} $n \leftarrow $ the number of elements to select\\
$C \leftarrow$ a classifier $C: U \times \{0,1\}^* \rightarrow  \{0,1\}$ \\
$\pi \sim S_{|U|}$ a random permutation from the symmetric group on $|U|$ \\
$L \leftarrow \pi(U)$ An ordered set of elements\\
\STATE $M \leftarrow \emptyset$\\
 \WHILE{$|M|<n$:}
 \STATE $u \leftarrow pop(L)$ 
 \IF{ $C(u)=1$} 
 \STATE $M \leftarrow M \cup \{u\}$\\
 \ENDIF
 \IF{ $n-|M|\geq |L|$} 
 \STATE \textit{// the end condition} \\ 
 \STATE $M \leftarrow M \cup \{u\}$
 \ENDIF
 \ENDWHILE

\STATE return $M$

\end{algorithmic}
\end{algorithm}

\begin{figure}[H]
    \centering
    \includegraphics[width=0.7\textwidth]{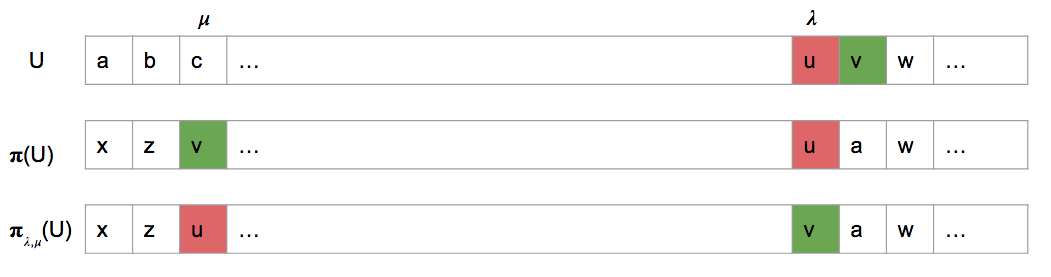}
    \caption{Example of permutation of universe $\pi(U)$ and $\pi_{\lambda,\mu}(U)$.
    } \label{fig:permutationlambdamu}
\end{figure}
\begin{theorem}\label{theorem:ptc}\permutethenclassifynospace~is a solution to the Cohort Selection Problem for any $C$ that is individually fair when operating on all elements of the universe.
\end{theorem}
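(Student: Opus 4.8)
The plan is to show that for every pair $u,v\in U$ the probability that \permutethenclassify\ outputs a set $M$ containing $u$ differs from the probability it contains $v$ by at most $\D(u,v)$; since in the basic offline setting $\mathcal Y$ is the point mass on $U$ and $\mathcal X$ is immaterial (the algorithm draws its own uniform permutation), this is exactly what Definition~\ref{def:basiccohort} demands. Write $N=|U|$, $p_w=\E_r[C(w)]$, and $s(w)=\Pr[w\in M]$; fairness of $C$ gives $|p_u-p_v|\le\D(u,v)$. The first step is a decomposition: condition on the random permutation $\pi$ placing $w$ at position $\ell$. Reading off the loop in Algorithm~\ref{alg:permutethenclassify}, $w$ is added to $M$ exactly when the loop reaches position $\ell$ (i.e.\ fewer than $n$ elements have been selected among the $\ell-1$ predecessors) and, at that moment, either $C(w)=1$ or the end condition $n-|M|\ge|L|$ holds. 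Both ``$w$ is reached'' and ``the end condition holds when $w$ is reached'' are functions of $\pi$ and of the predecessors' coins only, hence independent of $C(w)$. So, letting $\alpha_w=\Pr[w\text{ reached}\wedge\text{end condition}]$ and $\beta_w=\Pr[w\text{ reached}\wedge\neg\text{end condition}]$ (over $\pi$ and all coins but $w$'s), we have $s(w)=\alpha_w+\beta_w\,p_w$ with $\alpha_w,\beta_w\ge0$ and $\alpha_w+\beta_w\le1$.

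The second step couples $u$ against $v$ by pairing each permutation $\pi$ with the permutation $\pi_{\lambda,\mu}$ obtained by exchanging the positions of $u$ and $v$ (the pairing in Figure~\ref{fig:permutationlambdamu}); for $u\ne v$ this is a fixed-point-free involution, so it splits all $N!$ permutations into $N!/2$ equiprobable pairs. Within a pair, represent it by the member $\pi$ in which $u$ occupies the earlier position $\lambda$ and $v$ the later position $\mu$ (so in $\pi_{\lambda,\mu}$ these roles swap). The length-$(\lambda-1)$ prefix is identical in $\pi$ and $\pi_{\lambda,\mu}$, so the ``early-slot'' contributions to $s(u)$ and $s(v)$ are $a+b\,p_u$ and $a+b\,p_v$ for a common $(a,b)$, contributing $b(p_u-p_v)$, of magnitude $\le|p_u-p_v|$, to the pairwise difference. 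The ``late-slot'' contributions are the subtle part: there the length-$(\mu-1)$ prefix contains the other individual in the early slot, with a different bias, so they take the form $\hat a(p_u)+\hat b(p_u)\,p_v$ versus $\hat a(p_v)+\hat b(p_v)\,p_u$, where $\hat a(q),\hat b(q)$ are the ``late-slot reached / end-condition'' probabilities when the early slot carries bias $q$, and $\hat a(q)+\hat b(q)=\Pr[\text{late slot reached}\mid q]\le1$.

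To control these late-slot terms I would establish a monotonicity lemma by induction over the loop: flipping a single classifier coin in a prefix changes the running count $|M|$ by at most one and in a fixed direction — the induction must track that a $+1$ to the count only makes the end condition less likely to fire at each later step, so the count never drifts by more than one. Hence, coupling the two runs so the early-slot individual's coin agrees except with probability $|p_u-p_v|$, the threshold events ``late slot reached'' and ``end condition at the late slot'' shift in a tightly controlled way; combining this with the structural fact $\hat a+\hat b\le1$ (which makes $q\mapsto\hat a(q)+c\,\hat b(q)$ itself $1$-Lipschitz for any $c\in[0,1]$) and then summing the pairwise differences against the permutation weights yields $|s(u)-s(v)|\le|p_u-p_v|\le\D(u,v)$. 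A brief check disposes of the edge cases: if $n\ge N$ every element is selected and fairness is trivial, and the end condition forcing the final stretch of the list is already absorbed into the $\alpha_w$ term above.

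The main obstacle is exactly this last step. When $u$ precedes $v$ in the permutation, the value of $C(u)$ influences whether $v$ is ever reached and whether $v$'s end condition fires, so $u$'s and $v$'s selection probabilities are coupled in an asymmetric way. A naïve triangle-inequality accounting of this influence only yields Lipschitz constant $2$; shrinking it to the required constant $1$ is the ``careful bookkeeping'' the surrounding text alludes to, and it genuinely exploits the precise form of the end condition (the $+1$-monotonicity of the count and the $\hat a+\hat b\le1$ identity) rather than any black-box composition principle.
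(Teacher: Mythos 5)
Your setup is the paper's: decompose over the uniform permutation, pair $\pi$ with the swapped permutation $\pi_{\lambda,\mu}$, observe that the early-slot comparison contributes exactly $(p_u-p_v)$ times the probability of arriving at that position with a free slot, and isolate the late-slot comparison as the only nontrivial step. You also correctly diagnose that a per-coordinate Lipschitz or coupling argument there yields only constant $2$. The gap is that you never actually close that last step. Writing $g(q,p)=\hat a(q)+\hat b(q)\,p$ for the late-slot selection probability when the early slot carries bias $q$ and the late slot bias $p$, the quantity you must bound is $|g(p_v,p_u)-g(p_u,p_v)|$, in which \emph{both} arguments change simultaneously; the fact you invoke --- that $q\mapsto\hat a(q)+c\,\hat b(q)$ is $1$-Lipschitz for each fixed $c$ --- controls one argument at a time and, chained through the triangle inequality, reproduces exactly the factor $2$ you are trying to avoid. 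The proposed monotonicity lemma (a flipped prefix coin shifts the running count by at most one) is true, but you do not show how it combines with $\hat a+\hat b\le1$ to recover constant $1$, and it is not clear that it can: monotonicity of the count does not by itself account for the fact that the late element's own coin has a different bias in the two coupled runs.

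What actually saves the constant is an exact cancellation that your sketch never surfaces. Condition on all coins other than those of the swapped pair, and let $R_e,E_e\in\{0,1\}$ denote ``the late position is reached'' and ``the late position is in the end condition'' when the early element's coin equals $e$. Then $g(q,p)=\E_{e\sim \mathrm{Bern}(q)}\left[R_eE_e+R_e(1-E_e)p\right]$, and expanding gives $g(p_v,p_u)-g(p_u,p_v)=(p_u-p_v)\left(R_0-R_1E_1\right)$: the bilinear term $p_up_v\,R_1(1-E_1)$ is symmetric in the two biases and cancels identically, leaving a coefficient of magnitude at most $1$. This is precisely what the paper's configuration bookkeeping accomplishes: in the expansions of $\Pr[u\in M\mid\pi]$ and $\Pr[v\in M\mid\pi_{\lambda,\mu}]$ the terms $p_up_v\tau_1$ and $\tau_5$ coincide and drop out, leaving $(p_v-p_u)\tau_2+(p_u-p_v)(\tau_3+\tau_4)$ with $\tau_2+\tau_3+\tau_4\le1$ because the $\tau_i$ are probabilities of disjoint events. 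Until you exhibit this cancellation in some form, the proof is incomplete at exactly the point you yourself flag as the main obstacle.
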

\begin{proof}

Let $u,v$ be an arbitrary pair of distinct elements in $U$. Let $\pi$ be an arbitrary permutation of $U$, and let $\lambda$ and $\mu$ denote the location of $u$ and $v$ respectively in the list $L=\pi(U)$,
as shown in Figure \ref{fig:permutationlambdamu}. 

The proof proceeds by reasoning about the probability that $u$ and $v$ are selected at their given positions in $\pi$, and a related permutation which switches their positions $\pi_{\lambda,\mu}$ and using these relations to determine a bound on their differences in probability of selection.

To determine $\Pr[\lambda \text{ reached} | \pi]$ we need to determine all of the ways
that $\lambda-1-(n-1)=\lambda - n$ of the first $\lambda-1$ elements are not included in $M$. Define a \textit{configuration} to be a triple of disjoint sets, $\{T^+,T^-,T^E\}$ such that each is a subset of the elements preceding $\lambda$ in $\pi$, and the union is the entire set of elements preceding $\lambda$ in $\pi$. $T^+$ is the set of positively classified elements (excluding those in the end condition), $T^-$ is the set of negatively classified elements, and $T^E$ is the set of elements positively classified as part of the end condition. We say that a configuration is \emph{valid} for $\lambda$ if there is at least one remaining slot available, that is $|T^+\cup T^E|<n$. 

Denote all of the possible valid triples of elements by $\{(T_i^+,T_i^-,T_i^E)\}_{i \in [\xi]}$ where $\xi$ is the number of \emph{valid} triples. 
Let $\T^+$ be the collection of sets $\{T_i^+\}_{i \in [\xi]}$, and define $\T^-$ and $\T^E$ analogously.
Then $\T^+\cup \T^E$ and $\T^-$ are the sets of sets of included and excluded elements, so that $T^+_i \cup T^E_i\cup T^-_i$ specifies fully which of the elements before position $\lambda$ are included in $M$ in the $i^{th}$ configuration, and $\T^+$, $\T^E$, $\T^-$ contain all valid configurations of elements that ensure at least $\lambda - n$ elements are not included - that is, that there is at least one slot left for the element at $\lambda$. Notice that $T^E$ may be empty.  
We can now express the probability that $\lambda$ is reached as sum of probabilities over all possible configurations.
\[\Pr[\lambda \text{ reached}| \pi] = \sum_{i \in [\xi]}\prod_{x \in T^+_i}p_x\prod_{y \in T^-_i}(1-p_y)\prod_{z \in T^E_i}1\]
where, as before, we denote the probability that $C(w)=1$ as $p_w$ for all $w \in U$ for easier reading, and the probability is over the randomness of the classifier, as the permutation is fixed.

For a given permutation $\pi$, there are two possibilities - either $\lambda<\mu$ or $\lambda >\mu$. We bound the difference in probability of selection for $u$ and $v$ in each of these cases, and then use these bounds to conclude that the overall difference is not too large.

\textbf{Case 1:} If $\lambda< \mu$, then the probability of $\lambda$ being reached is completely independent of the outcome of the element at $\mu$. Consider the permutation $\pi_{\lambda,\mu}$ which is identical to $\pi$, except that the elements at positions $\lambda$ and $\mu$ are switched. 
Notice that if $\lambda$ is in the end condition, then the probability of $\lambda$ being selected is 1 in $\pi$, and the probability of $\mu$ being selected in $\pi_{\lambda,\mu}$ is also 1. Thus we have 
\[
  \Pr[u \in M | \pi] - \Pr[v \in M | \pi_{\lambda,\mu}] \leq
  (p_u - p_v)* \sum_{i\in [\xi]}\prod_{x \in T^+_i}p_x\prod_{y \in T^-_i}(1-p_y)   
\]
Define $\tau^*=\sum_{i \in [\xi]}\prod_{x \in T^+_i}p_x\prod_{y \in T^-_i}(1-p_y)\leq 1$.
Notice that $|\tau^*(p_u-p_v)| \leq |p_u-p_v|$.

\textbf{Case 2:}
When $\mu<\lambda$, we need a bit more analysis. Consider again the probability that $\lambda$ is reached, and now write it in terms of how the element at position $\mu$ is classified. For simplicity, we abuse notation and use $\mu$ to denote the element at location $\mu$.

\begin{align*}\Pr[\lambda \text{ reached}| \pi] =
 \sum_{\substack{i \in [\xi] \\ \mu \in T^+_i}}\prod_{x \in T^+_i}p_x\prod_{y \in T^-_i}(1-p_y)\prod_{z \in T^E_i}(1) 
  &+ \sum_{\substack{i \in [\xi] \\ \mu \in T^-_i}}\prod_{x \in T^+_i}p_x\prod_{y \in T^-_i}(1-p_y)\prod_{z \in T^E_i}(1)
\\ & + \sum_{\substack{i \in [\xi] \\ \mu \in T^E_i}}\prod_{x \in T^+_i}p_x\prod_{y \in T^-_i}(1-p_y)\prod_{z \in T^E_i}(1)
\end{align*}

Next, we pull out the portion of the products related to index $\mu$. 

\begin{align*}\Pr[\lambda \text{ reached}| \pi] = 
  p_\mu\sum_{\substack{i \in [\xi] \\ \mu \in T^+_i}}\prod_{x \in T^+_i\backslash \{ \mu\}}p_x \prod_{y \in T^-_i}(1-p_y) 
 &+ (1-p_\mu)\sum_{\substack{i \in [\xi] \\ \mu \in T^-_i}}\prod_{x \in T^+_i}p_x\prod_{y \in T^-_i\backslash \{\mu\}}(1-p_y)
\\ & + 1*\sum_{\substack{i \in [\xi] \\ \mu \in T^E_i}}\prod_{x \in T^+_i}p_x\prod_{y \in T^-_i}(1-p_y)
\end{align*}

Now consider the probability  the element at $\lambda$, is selected given that $\lambda$ is reached. If $\mu \in T^E$ then since $\lambda$ comes after, then $\lambda$ is selected with probability 1, as $\lambda$ must also be in the end condition. If $\mu \notin T^E$, then the probability of selecting $\lambda$ is either $1$ or $p_\lambda$ depending on whether the end condition is triggered by the time $\lambda$ is reached. Each configuration  $(T^+_i,T^-_i,T^E_i)$ specifies whether the end condition is reached by the time $\lambda$ is encountered, as it specifies the entire state of selections up to $\lambda$. Denote the indices of configurations which result in $\lambda$ in the end condition as $E$. That is, $E=\{i | \lambda \in T_i^E\}$.

Now, we can adapt the equations above to reflect the probability that $\lambda$ is selected given $\pi$:

\begin{align*}\Pr[\lambda \in M| \pi] = 
  p_\lambda p_\mu\sum_{\substack{i\in [\xi]\backslash E \\ \mu \in T^+_i}}\prod_{\substack{x \in  \\ T^+_i\backslash \{ \mu\}}}p_x \prod_{y \in T^-_i}(1-p_y) 
 &+ p_\mu\sum_{\substack{i\in E \\ \mu \in T^+_i}}\prod_{x \in T^+_i\backslash \{ \mu\}}p_x \prod_{y \in T^-_i}(1-p_y) 
\\  &+ p_\lambda(1-p_\mu)\sum_{\substack{i\in [\xi]\backslash E \\ \mu \in T^-_i}}\prod_{x \in T^+_i}p_x\prod_{y \in T^-_i\backslash \{\mu\}}(1-p_y)
\\  &+ (1-p_\mu)\sum_{\substack{i\in E \\ \mu \in T^-_i}}\prod_{x \in T^+_i}p_x\prod_{y \in T^-_i\backslash \{\mu\}}(1-p_y)
\\ &+ 1*\sum_{\substack{i\in [\xi] \\ \mu \in T^E_i}}\prod_{x \in T^+_i}p_x\prod_{y \in T^-_i}(1-p_y)
\end{align*}

Notice that for $\pi$ and $\pi_{\lambda,\mu}$, the sums of products exclusive of $p_\lambda$ and $p_\mu$ above are identical. For simplicity, define 
\begin{align*}
  \tau_1 &= \sum_{\substack{i\in [\xi]\backslash E \\ \mu \in T^+_i}}\prod_{x \in T^+_i\backslash \{ \mu\}}p_x \prod_{y \in T^-_i}(1-p_y) 
\\  \tau_2 &= \sum_{\substack{i\in E \\ \mu \in T^+_i}}\prod_{x \in T^+_i\backslash \{ \mu\}}p_x \prod_{y \in T^-_i}(1-p_y) 
\\  \tau_3 &= \sum_{\substack{i\in [\xi]\backslash E \\ \mu \in T^-_i}}\prod_{x \in T^+_i}p_x\prod_{y \in T^-_i\backslash \{\mu\}}(1-p_y)
\\  \tau_4 &= \sum_{\substack{i\in E \\ \mu \in T^-_i}}\prod_{x \in T^+_i}p_x\prod_{y \in T^-_i\backslash \{\mu\}}(1-p_y)
\\  \tau_5 &= 1*\sum_{\substack{i\in [\xi] \\ \mu \in T^E_i}}\prod_{x \in T^+_i}p_x\prod_{y \in T^-_i}(1-p_y)
\end{align*}

$\sum_{i \in [5]}\tau_i $ is equivalent to the probability that all elements before $\lambda$ excluding $\mu$ take on any of the valid configurations, eg, those configurations that lead to at least one slot being left by the time $\lambda$ is reached. Therefore $\sum_{i \in [5]}\tau_i \leq 1$. 
We can therefore rewrite more simply and substitute back in our original $u,v$ as 
\begin{align*}\Pr[u \in M| \pi] = 
  p_u p_v\tau_1 
 + p_v\tau_2
 + p_u(1-p_v)\tau_3
 + (1-p_v)\tau_4
 + 1*\tau_5
\end{align*}

Now consider the difference between the probability that $u$ is selected under $\pi$, and the probability that $v$ is selected under $\pi_{\lambda,\mu}$:

\begin{align*}\Pr[u \in M| \pi] - \Pr[v \in M| \pi_{\lambda,\mu}]   = \text{ }
 & p_u p_v\tau_1 
 + p_v\tau_2
 + p_u(1-p_v)\tau_3
 + (1-p_v)\tau_4
+ 1*\tau_5
 \\ &- ( p_u p_v\tau_1 
 + p_u\tau_2
 + p_v(1-p_u)\tau_3
 + (1-p_u)\tau_4
 + 1*\tau_5)
\end{align*}
and so
\[|\Pr[u \in M| \pi] - \Pr[v \in M| \pi_{\lambda,\mu}]|  = 
 | (p_v-p_u)\tau_2
 + (p_u-p_v)\tau_3
 + (p_u-p_v)\tau_4 | \leq |p_u-p_v|,
\]
where the last inequality follows from the sum of the $\tau$'s representing disjoint cases, yielding the desired bound on the distance.

Now we combine Cases 1 and 2 to reach our desired conclusion:
 the difference in probability that $u \in M$ and $v \in M$ is the sum of the difference in each permutation multiplied by the probability of each permutation being selected.
More formally, denote the set of all permutations on $[|U|]$ as $\Pi$:
\[\Pr[x \in M] = \sum_{\pi \in \Pi}\Pr[x \in M | \pi]\Pr[\pi]\]
\[\Pr[x \in M] - \Pr[y \in M] = \sum_{\pi \in \Pi}\Pr[\pi]\Pr[x \in M | \pi] -\sum_{\pi \in \Pi}\Pr[\pi]\Pr[y \in M | \pi]  \]
Notice that for each $\pi$, there is exactly one $\pi_{\lambda,\mu}$, so we can combine the sums:
\[\Pr[x \in M] - \Pr[y \in M] = \frac{1}{|\Pi|}\sum_{\pi \in \Pi}(\Pr[x \in M | \pi] - \Pr[y \in M | \pi_{\lambda,\mu}]) \]
Finally, using our bounds from Cases 1 and 2, we conclude
\[|\Pr[x \in M] - \Pr[y \in M]|\leq \frac{1}{|\Pi|}\sum_{\pi \in \Pi}|p_u-p_v| = |p_u-p_v|\]
\end{proof}

Although \permutethenclassify satisfies fairness, and is simple to implement, depending on how well the classifier $C$ has been adjusted for the number of elements to be selected versus the universe size, it may perform sub-optimally. For example, if $C$ was tuned to select only $O(\log(n))$ elements in expectation under normal independent classification, but ends up being used to select $O(n)$ elements with permute and classify, then there may be an excessive number of elements ($\approx O(n-\log(n))$) chosen arbitrarily in the end condition. 
We now propose a second mechanism, Weighted Sampling, to address this shortcoming.

\begin{algorithm}[tb]
   \caption{\weightedsampling}
   \label{alg:weightedsampling}
\begin{algorithmic}
   \STATE {\bfseries Input:} $n \leftarrow $ the number of elements to select\\
$C \leftarrow$ a classifier $C: U\times r \rightarrow  \{0,1\}$ \\
$L \leftarrow$ the set of all subsets of $U$ of size $n$\\

\FOR { $l\in L$}  
\STATE $w(l) \leftarrow \sum_{u \in l}\E[C(u)]$ // set the weight of each set
\STATE Define $\mathcal{X} \in \Delta(L)$ such that $\forall l \in L$, the weight of $l$ under $\mathcal{X}$ is $\frac{w(l)}{\sum_{l' \in L}w(l')}$\\
\STATE $M \sim \mathcal{X}$ // Sample a set of size $n$ according to $\mathcal{X}$\\
\ENDFOR
\STATE return $M$

\end{algorithmic}
\end{algorithm}

\begin{theorem}\label{thm:fairweightedsampling}For any individually fair classifier $C$ such that the $\Pr_{u \sim U, r \sim \{0,1\}^*}[C(u,r)=1] \geq 1/|U|$, weighted sampling is individually fair.
\end{theorem}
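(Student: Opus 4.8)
The plan is to compute, for each $u \in U$, the exact probability $\Pr[u \in M]$ that weighted sampling places $u$ in the cohort, put it in closed form, and then read off the Lipschitz bound. Write $N = |U|$, $p_w = \E[C(w)]$ for $w \in U$, and $P = \sum_{w \in U} p_w$. The hypothesis $\Pr_{u \sim U, r}[C(u,r)=1] \ge 1/|U|$ says exactly that $P/N \ge 1/N$, i.e. $P \ge 1$; in particular the total weight $W := \sum_{l \in L} w(l)$ is strictly positive, so the distribution $\mathcal{X}$ is well defined, and $M$ is always a size-$n$ subset of $U$ by construction, so only the fairness inequality remains to check. (If $N = 1$ then $n = 1$ and the claim is trivial, so assume $N \ge 2$ below.)

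First I would evaluate the normalizing constant by swapping the order of summation: since $w(l) = \sum_{u \in l} p_u$ and each element $u$ lies in exactly $\binom{N-1}{n-1}$ of the size-$n$ subsets, $W = \sum_{u \in U} p_u \binom{N-1}{n-1} = \binom{N-1}{n-1} P$. Next I would compute $\sum_{l \ni u} w(l) = \sum_{l \ni u}\sum_{v \in l} p_v$. The terms with $v = u$ contribute $\binom{N-1}{n-1} p_u$; for the remaining terms, re-indexing by $v \ne u$ and counting the size-$n$ subsets containing both $u$ and $v$ (there are $\binom{N-2}{n-2}$ of them) gives $\binom{N-2}{n-2}(P - p_u)$. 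Dividing by $W$ and using $\binom{N-2}{n-2}/\binom{N-1}{n-1} = (n-1)/(N-1)$ yields
\[
\Pr[u \in M] = \frac{p_u \binom{N-1}{n-1} + \binom{N-2}{n-2}(P - p_u)}{\binom{N-1}{n-1}\, P} = \frac{n-1}{N-1} + \frac{N-n}{N-1}\cdot\frac{p_u}{P}.
\]

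From this, $\Pr[u \in M] - \Pr[v \in M] = \frac{N-n}{(N-1)P}(p_u - p_v)$ for any $u,v \in U$. Since $n \ge 1$ we have $N - n \le N - 1$, and since $P \ge 1$ we have $\frac{N-n}{(N-1)P} \le \frac{1}{P} \le 1$; combining with the individual fairness of $C$, which gives $|p_u - p_v| \le \D(u,v)$, we conclude $|\Pr[u \in M] - \Pr[v \in M]| \le \D(u,v)$, so weighted sampling is individually fair.

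The computation is elementary, so I do not anticipate a genuine obstacle; the one point worth care is recognizing that the hypothesis $P \ge 1$ is exactly the slack needed to keep the renormalization factor $\frac{N-n}{(N-1)P}$ from exceeding $1$ — without a lower bound on the average acceptance rate, a classifier that is heavily sub-$1/N$ could be stretched apart by normalizing the set weights, breaking fairness. The other things to watch are the double-counting step (pairs $(l,v)$ with $\{u,v\}\subseteq l$) and the degenerate endpoint $n = N$, where the factor vanishes and every element is selected with probability exactly $1$.
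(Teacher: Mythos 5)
Your proof is correct, and it arrives at exactly the same Lipschitz factor as the paper's, but it gets there by a different decomposition. The paper never writes down $\Pr[u\in M]$ itself: it cancels the subsets containing both $u$ and $v$, pairs each set in $\T^u$ with its image under the swap $T\mapsto (T\setminus\{u\})\cup\{v\}$ so that each pair contributes $(p_u-p_v)/\eta$ to the difference, and then spends several lines of binomial manipulation verifying that $|\T^u|/\eta=\binom{|U|-2}{n-1}\big/\big(\binom{|U|-1}{n-1}P\big)\le 1$ under the average-weight hypothesis. You instead compute the marginal inclusion probability in closed form by double counting, obtaining
\[
\Pr[u\in M] \;=\; \frac{n-1}{N-1} \;+\; \frac{N-n}{N-1}\cdot\frac{p_u}{P},
\]
from which the difference $\frac{N-n}{(N-1)P}(p_u-p_v)$ — the same constant the paper's bijection produces — and the role of the hypothesis $P\ge 1$ are both immediate. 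Your route buys an explicit formula showing that weighted sampling is an affine contraction of the $p_u$'s (which also transparently handles the endpoints $n=1$ and $n=N$ and makes clear the mechanism is exactly $1$-Lipschitz, not merely at most), while the paper's swap-bijection argument is the more robust template, since it is the same pairing device used in the \permutethenclassify analysis and does not require the marginal to have a clean closed form. You also correctly note that the hypothesis is needed to make the sampling distribution well defined at all, a point the paper leaves implicit.
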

\begin{proof}
Fix an arbitrary individually fair classifier $C$ and two elements $u\neq v$ from $U$. The difference between the probability of $u$ and $v$ being included in $M$ under weighted sampling is 
\[d(u,v) = |\Pr_{M \sim \mathcal{X}}[u \in M] - \Pr_{M \sim \mathcal{X}}[v \in M]|\]
where probability is taken over the randomness of the weighted sampling mechanism.

Denote the set of subsets of $U$ of size $n$ in which both $u$ and $v$ are present as $\T^{u,v}$ and the subsets of $U$ of size $n$ in which exactly one of $u$ or $v$ is present as $\T^u$ and $\T^v$, respectively. Then we can write $\Pr_{M \sim \mathcal{X}}[u \in M] = \Pr_{M \sim \mathcal{X}}[M\in \T^{u,v}] + \Pr_{M \sim \mathcal{X}}[M \in \T^u]$, and likewise for $v$. So we can rewrite our difference as:
\[d(u,v) = |\Pr_{M \sim \mathcal{X}}[M \in \T^u] + \Pr_{M \sim \mathcal{X}}[M \in \T^{u,v}] - \Pr[M \in \T^v] - \Pr_{M \sim \mathcal{X}}[M \in \T^{u,v}]|,\] whence
\[d(u,v) = |\Pr_{M \sim \mathcal{X}}[M \in \T^u]   - \Pr_{M \sim \mathcal{X}}[M \in \T^v]  |\]

As expected, to reason about the distance we need only consider the sets where exactly one element appears. Consider the elements of $\T^u$ and $\T^v$. For every set $T_i^u \in \T^u$, there is a corresponding set $T_i^v \in \T^v$ which replaces $u$ with $v$, that is $\{T_i^u\backslash\{u\}\}\cup \{v\} = T_i^v$. Notice that there are no sets in $\T^v$ which cannot be formed in this way from $\T^u$ and vice versa.
So we can further split these into sums:

\[d(u,v) = |\sum_{T_i^u \in \T^u}\Pr[M = T_i^u]  -  \sum_{T_i^v \in \T^v}\Pr[M = T_i^v]  |\]

\[d(u,v) = |\sum_{T_i^u \in \T^u}\Pr[M = T_i^u]  - \Pr[M = \{T_i^u\backslash u \} \cup \{v\}]  |\]

For convenience, let $\eta$ denote the normalization factor $\sum_{j \in [|L|]}w(T_j)$. Recall that $w(u) = p_u$ $\forall u \in U$ using our previous notation, so we can simplify the above to

\[d(u,v) = |\sum_{T_i^u \in \T^u}\frac{1}{\eta}(p_u + \sum_{w \in T_i^u \backslash \{u\}}p_{w} ) - \frac{1}{\eta}(p_v + \sum_{w \in T_i^u \backslash \{u\}}p_w )  |\]
\[d(u,v) = |\frac{1}{\eta}\sum_{T_i^u \in \T^u}(p_u - p_v) |\]

\[d(u,v) = |\frac{|\T^u|}{\eta}(p_u - p_v) |\]

So as long as $|\T^u|/\eta \leq c$, we have a $c-$Lipschitz condition on the mechanism.

    \[|\T^u|/\eta = {|U|-2 \choose n-1}*\frac{1}{\eta}\leq c\]
    \[{|U|-2 \choose n-1} \leq c\eta\]
    \[{|U|-2 \choose n-1} \leq c\sum_{j \in [|L|]}w(T_j)\]

Recall that $L$ is the set of all sets of size $n$, so taking $\bar{w}$ as the average weight of the sets in $L$, we have
\[{|U|-2 \choose n-1} \leq {|U| \choose n }c\bar{w}\]
\[{|U|-2 \choose n-1}/ {|U| \choose n } \leq c\bar{w}\]

Expanding out the binomial coefficients, we have
\[\frac{(|U|-2)!}{(n-1)!(|U|-n-1)!}/\frac{|U|!}{n!(|U|-n)!} \leq c \bar{w}\]
\[\frac{(|U|-2)!}{|U|!} \frac{n!}{(n-1)!}\frac{(|U|-n)!}{(|U|-n-1)!} \leq c \bar{w}\]
\[\frac{|U|-n}{|U|-1} \frac{n}{|U|} \leq c \bar{w}\]
So for any $n\geq 1$,
\[\frac{n}{|U|} \leq c \bar{w}\]

So as long as the average weight of a set of size $n$ is larger than $n/|U|$, then the desired 1-Lipschitz condition is maintained. 
\end{proof}

There is a simple intuition for the requirement for the average set weight. Imagine there was a single element, $u$, with positive weight in a universe of $1000n$ elements. The sets including $u$ are the only sets with positive weight, and as such, $u$ is guaranteed to be selected, even if $u$'s original weight $p_u$ is negligible. This guaranteed selection can pull $u$ too far from its neighbors, who all have 0 probability of selection. To avoid this it suffices to have that there is enough weight across all of the elements to fill sets on average.

Comparing Weighted Sampling with \permutethenclassifynospace, Weighted Sampling does have an additional constraint on the fair classifier used with respect to average set weight, but in practice this is unlikely to be difficult to achieve. With respect to utility, if we assume a simple linear utility function (eg, the utility of an element is equivalent to $p_u*\alpha$) for some fixed constant $\alpha$, we conclude that the utility of Weighted Sampling is likely to exceed the utility of \permutethenclassifynospace. This  follows simply from the observation that the probability of selection for any cohort in Weighted Sampling is proportional to its weight, whereas the probability of selection for any cohort in \permutethenclassify is proportional to the weight of the elements selected outside of the end condition. However, with either mechanism, we can use an existing fair classifier to achieve a fair and nontrivial utility outcome for the fair cohort selection problem. With respect to computational complexity, \weightedsampling is expensive, as one must compute weights and sample from the ${|U| \choose n}$ possible subsets. However, in practice this may be alleviated by using a fair classifier which weights many ``irellevant'' elements 0, thus reducing the number of possible sets. 

\subsubsection{Online Cohort Selection}\label{section:stream}
Now that we have seen fair solutions for the offline cohort selection problem, we consider the online version of the problem. In the online version, $\mathcal{S}$ must respond immediately to each element encountered, so intuitively the choice of ordering is much more important.

 \begin{definition}[Online Cohort Selection Problem]\label{def:onlinecohort}
 A system $\mathcal{S}$ is a solution to the Online Cohort Selection Problem if it classifies the $i^{th}$ element before being given access to the $i+1^{st}$ and it solves the Cohort Selection Problem. \end{definition}

Having seen \permutethenclassifynospace, it's easy to see that if the ordering of the stream $\pi \sim \mathcal{X}$ is chosen uniformly at random from all permutations over the universe and the size of the universe is known, then there is a solution to Online Cohort Selection Problem.

\begin{theorem}\label{thm:randomlyorderedstream} If the ordering of the stream $\pi$ is drawn uniformly at random from the permutations over the elements of $U$, $S_{|U|}$, and the length of the stream is known, then if there exists a fair classifier for the task, there exists a solution to the online cohort selection problem.
\end{theorem}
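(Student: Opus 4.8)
The plan is to observe that Algorithm~\ref{alg:permutethenclassify} (\permutethenclassifynospace) is \emph{already} an online procedure, so that the only thing the streaming setting must supply is the random permutation together with the value $|U|$ — both of which are handed to us by hypothesis. The fairness guarantee will then be inherited wholesale from Theorem~\ref{theorem:ptc}, with no new calculation.

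First I would check that \permutethenclassify never looks ahead. It repeatedly pops the next element of $L = \pi(U)$, classifies it once and independently with $C$, places it in $M$ if $C$ returns $1$, and additionally places it in $M$ if the end condition $n - |M| \geq |L|$ holds. Each of these steps depends only on the current element, the fresh coin tosses of $C$, and the running count $|M|$; none inspects an element that has not yet been popped. Hence the decision for the $i^{th}$ element is made before the $(i+1)^{st}$ arrives, matching Definition~\ref{def:onlinecohort}.

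Second, the only place the offline description refers to the tail of the list is the end-condition test, through the term $|L|$. But when the $i^{th}$ stream element is processed, $|L| = |U| - i$, so the test can be rewritten as $n - |M| \geq |U| - i$, computable from the index $i$, the running count $|M|$, and $|U|$. Since the length of the stream is known by hypothesis, the end condition is decidable online. This is really the only (minor) obstacle in the argument: re-expressing the one backward-looking quantity in terms of the stream length and current position.

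Finally, in the hypothesized setting $\mathcal{Y}$ puts all its mass on $U$ and $\mathcal{X}$ is uniform over $S_{|U|}$ — exactly the permutation distribution that offline \permutethenclassify draws internally. Therefore the joint distribution over (stream order, classifier randomness, selected set $M$) produced by the online version above coincides with the one analyzed in Theorem~\ref{theorem:ptc}. That theorem already establishes $|M| = n$ and $|\Pr[u \in M] - \Pr[v \in M]| \leq |p_u - p_v| \leq \mathcal{D}(u,v)$ for all $u,v \in U$ whenever $C$ is individually fair on $U$; both conclusions transfer verbatim, so the online system solves the Online Cohort Selection Problem.
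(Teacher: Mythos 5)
Your proposal is correct and is exactly the paper's argument: the paper's proof is the single sentence ``Simulate Algorithm~\ref{alg:permutethenclassify}, omitting the initial permutation step, with the fair classifier,'' and your write-up simply fills in the details of why that simulation is a valid online procedure (the end condition depends only on $|U|$, the current index, and $|M|$) and why Theorem~\ref{theorem:ptc} transfers. No substantive difference.
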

\begin{proof}
Simulate Algorithm \ref{alg:permutethenclassify}, omitting the initial permutation step, with the fair classifier.
\end{proof}

However, if the ordering is adversarial, there may be no fair solutions, or the fair solutions may have trivial utility. In our setting, adversarial ordering is captured by the distribution $\mathcal{X(U)}$ which may place certain elements earlier or later in the orderings with high probability. For example, an adversarial $\mathcal{X(U)}$ may have the probability that $u$ is placed before $v$ greater than $\frac{3}{4}$, in the hope of giving $u$ a higher chance of being selected than $v$. 

\begin{theorem}\label{thm:adversarialknownlength}
If the ordering of the stream is adversarial, and the stream contains all elements of $U$, and $|U|$ is known, there  exists a solution to the strict stream cohort selection problem.
\end{theorem}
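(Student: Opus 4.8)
The plan is to reduce the online adversarial setting (with known universe) to the \emph{offline} setting by sampling a fair cohort \emph{one coordinate at a time, in whatever order the adversary presents the elements}. Since $U$ (hence $|U|$) is fixed and known before the stream begins, we may precompute a distribution $\mathcal{X}$ over size-$n$ subsets of $U$ that is individually fair, i.e.\ with $|\Pr_{S\sim\mathcal{X}}[u\in S]-\Pr_{S\sim\mathcal{X}}[v\in S]|\le \D(u,v)$ for all $u,v$. Both \permutethenclassifynospace~(Theorem~\ref{theorem:ptc}) and \weightedsamplingnospace~(Theorem~\ref{thm:fairweightedsampling}) produce such an $\mathcal{X}$ from any individually fair classifier $C$ (the latter under its mild average-weight condition), and $\mathcal{X}$ depends only on $U$, $\D$, $C$, and $n$ — all available offline.

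I would then specify the online system: maintain the set $A$ of already-selected elements and (implicitly) the set $P$ of already-seen elements. When the adversary reveals $e_i$, output $1$ — irrevocably select $e_i$ and put it in $A$ — with probability $\Pr_{S\sim\mathcal{X}}[\,e_i\in S \mid S\cap P = A\,]$, and $0$ otherwise. This is exactly the chain rule for drawing the indicator vector $\bigl(\mathbf{1}[u\in S]\bigr)_{u\in U}$ in the adversarial order $e_1,\dots,e_{|U|}$: by induction every conditioning event $\{S\cap P=A\}$ we encounter has positive probability under $\mathcal{X}$ (we only ever condition on a partial assignment we ourselves reached with positive probability), so each conditional is well defined and the vector produced is distributed \emph{exactly} as $S\sim\mathcal{X}$. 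In particular its support lies in that of $\mathcal{X}$, so exactly $n$ elements are selected, the decisions are irrevocable as required by Definition~\ref{def:onlinecohort}, and the marginal selection probability of each $u$ is $\Pr_{S\sim\mathcal{X}}[u\in S]$ — which is $1$-Lipschitz in $\D$ by the choice of $\mathcal{X}$. Hence the system solves the online cohort selection problem for \emph{every} arrival order: the adversary controls only the sequence in which the coordinates of the same distribution $\mathcal{X}$ are exposed, never the joint distribution itself. As a bonus, this shows the adversarial online setting with known $|U|$ inherits the full fairness--utility tradeoff of the offline mechanisms.

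The one place requiring care — and the main obstacle — is justifying that knowledge of $U$ genuinely suffices to evaluate $\Pr_{S\sim\mathcal{X}}[e_i\in S\mid S\cap P=A]$ at step $i$ with no information about the unseen suffix of the stream, and that the partial assignments reached always carry positive mass. Both follow from the chain-rule view above, but these are precisely the points at which the hypotheses ``$|U|$ known'' and ``the stream contains all of $U$'' are used; their failure is exactly what prevents this construction (and, indeed, makes the task impossible) when $|U|$ is unknown, matching the contrasting impossibility result for adversarial ordering with unknown universe size.
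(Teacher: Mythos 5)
Your proposal is correct, but it takes a genuinely different — and stronger — route than the paper. The paper's proof is a one-liner: before the stream starts, sample a uniformly random weight-$n$ string $s^* \in \{0,1\}^{|U|}$ and select whatever elements arrive at its positive coordinates. Every element is then selected with probability exactly $n/|U|$, so all pairwise differences in selection probability are zero and the Lipschitz condition holds trivially; the paper immediately concedes that ``the utility is clearly no better than choosing randomly.'' Your construction — precompute a fair distribution $\mathcal{X}$ over size-$n$ cohorts using an offline mechanism such as PermuteThenClassify or WeightedSampling, then reveal the indicator coordinates one at a time via the chain rule in whatever order the adversary presents them — produces a cohort with exactly the offline marginals, so it is fair for the same reason the offline mechanism is and it inherits that mechanism's utility. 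This directly addresses the weakness the paper itself points out. The chain-rule argument is sound in the paper's model because the permutation is drawn from the (possibly adversarial) ordering distribution before the system's randomness is consumed, i.e., the adversary is oblivious rather than adaptive; and you correctly isolate that knowledge of $|U|$ and completeness of the stream are precisely what make each conditional $\Pr_{S\sim\mathcal{X}}[e_i\in S \mid S\cap P=A]$ well-defined and computable at arrival time, mirroring why the unknown-length case is impossible. The only caveats worth noting are that evaluating these conditionals may be computationally expensive (a cost the paper already acknowledges for WeightedSampling), and that an adversary adapting the arrival order to the algorithm's past selections would require an additional argument — but such adaptivity lies outside the experiment defining the problem.
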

\begin{proof}
Sample uniformly at random  from the set of all strings of length $|U|$ with weight $n$, $s^* \sim \{ s \in \{0,1\}^{|U|}, |s| = n\}$ and select the elements at the positive coordinates of $s^*$.
\end{proof}
Although this solution is fair, the utility is clearly no better than choosing randomly. 

\begin{theorem}\label{thm:adversarialunknownlength}
If the ordering of the stream is adversarial, but $|U|$ is unknown, then there exists no solution to the online cohort selection problem.
\end{theorem}
\begin{proof}

Consider $V \subseteq U$ such that $|V|=n$. Choose $\pi$ a permutation on $n$ elements. Then each element has probability 1 of selection by assumption that $\M$ is a solution to the Cohort Selection Problem. Fix $w \in U$ such that $\D(w,u) < 1$ for some $u \in V$,  and $w \notin V$. Consider the ordering of $V \cup \{w\}$ which orders the elements of $V$ using $\pi$ and places $w$ in the last position. However, $w$ has zero probability of selection, because $\mathcal{M}$ always selects the first $n$ elements, so $\mathcal{M}$ cannot be individually fair, as $d(u,w)=1>\D(u,w)$.  
\end{proof}

It may be tempting to consider ``fixing'' this impossibility by only requiring that our system select $n$ individuals with high probability, allowing for some failure on small universes. However, extending this for many possible sizes of universe is non-trivial, and the ``fix'' breaks down.

\subsubsection{Constrained Cohort Selection}\label{section:constrainedcohort}
Next we consider the problem of selecting a cohort with an external requirement that some fraction of the selected set is from a particular subgroup.

\begin{definition}[The Constrained Cohort Selection Problem] 
Given a universe $U$, $p \in [0,1]$, a subset $A \subset U$, and a metric for the task $\D$, solve the cohort selection problem with the added requirement that at least a $p$ fraction of the members of the selected cohort are in $A$.
\end{definition}

This problem captures situations in which external requirements cannot be ignored. For example, if a certain budget must be met, and only some members of the universe contribute to the budget, or if legally a certain fraction of people selected must meet some criterion (as in, demographic parity).

Before we consider the more difficult problem of satisfying individual fairness, note that to satisfy intra-group Fairness, that is, $d(u,v) \leq \D(u,v)$ for all $u,v \in A$ and for all $u,v\in \{U \backslash A\}$, one straightforward method would be to run \permutethenclassify on each group separately with $n_A = np$ and $n_{B}=n-np$. (For notational convenience, we henceforth write $ U \backslash A = B$). In some settings, this solution may be better than imposing no fairness constraint at all even though it is not truly individually fair. 
However, satisfying universal individual fairness is a far more difficult task, and for non-trivial constraints and universes may be impossible.

To understand the cases where constrained cohort selection is impossible, we first introduce the notion of $\gamma-$equivalence, which will allow us to describe sufficiently interesting distance relations across the subgroups.

\begin{figure}
\vskip 0.2in
\begin{center}
\centerline{\includegraphics[width=40mm]{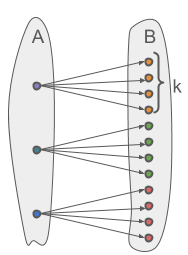}}
\caption{$B$ is $\gamma-$equivalent to $A$ if for every set of nodes matched in $B$, $\D(a,b_i)\leq \gamma$.}
\label{fig:gamma_equivalence}
\end{center}
\vskip -0.2in
\end{figure}

\begin{definition}[$\gamma-$equivalence]
Given a metric $\D$ over $U$ and $A,B \subseteq U$ such that $A \cap B = \emptyset$, $B$ is said to be $\gamma-$ equivalent to $A$ if and only if
there exists a bipartite graph $(A,B,E)$ with all elements of $A$ represented by nodes with out-degree $k$ and all elements of $B$ represented by nodes with in-degree $1$ such that the neighborhood of each $x \in A$ contains only elements  $y_i \in B$ such that $\D(x,y_i)\leq \gamma$.
\end{definition}

We will find $\gamma-$equivalence to be a useful rough approximation of how similar two groups are. We start with a simple warm-up lemma to describe how dissimilarly $A$ and $B$ may be treated if $B$ is $\gamma-$equivalent to $A$. 
From the constraint, we have that $pn/|A|$ is a lower bound for the average acceptance rate of $A$ and $(1-p)n/|B|$ is an upper bound on the average acceptance rate for $B$. We now this to $\gamma-$equivalence in the following observation and lemma.

\begin{observation}\label{lemma:gammaequivalentimpossibility}
The constrained cohort selection problem for a universe $U$ with groups $A$ and $B$ such that $B$ is $\gamma$-equivalent to $A$, where
$\gamma < |\frac{pn}{|A|} - \frac{(1-p)n}{|B|}|$, has no  individually fair solution.
\end{observation}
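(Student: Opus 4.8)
The plan is to argue by contradiction. Suppose $\mathcal{S}$ is an individually fair solution to the constrained cohort selection problem, and for $w \in U$ write $q_w$ for the probability that $\mathcal{S}$ selects $w$, taken over all of $\mathcal{S}$'s randomness. Every run of $\mathcal{S}$ outputs exactly $n$ elements, at least $pn$ of which lie in $A$ and hence at most $(1-p)n$ of which lie in $B := U \setminus A$; taking expectations gives $\sum_{a \in A} q_a \geq pn$ and $\sum_{b \in B} q_b \leq (1-p)n$. So the average selection probability over $A$ is at least $pn/|A|$ while the average over $B$ is at most $(1-p)n/|B|$. The hypothesis $\gamma < \left|\frac{pn}{|A|} - \frac{(1-p)n}{|B|}\right|$ is used in the orientation $\frac{pn}{|A|} > \frac{(1-p)n}{|B|}$, which I would assume henceforth (this is precisely the regime in which the constraint forces $A$ to be selected at a strictly higher average rate than $B$).

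Next I would unpack the bipartite graph $(A,B,E)$ witnessing that $B$ is $\gamma$-equivalent to $A$: each $a \in A$ has out-degree $k$ and each $b \in B$ has in-degree $1$, so the neighbourhoods $N(a) = \{b : (a,b) \in E\}$ partition $B$ into $|A|$ blocks of size $k$ (in particular $|B| = k|A|$), with $\D(a,b) \leq \gamma$ whenever $(a,b) \in E$. Summing the per-block gap over all of $A$,
\[
\sum_{a \in A}\left(q_a - \frac{1}{k}\sum_{b \in N(a)} q_b\right) \;=\; \sum_{a \in A} q_a \;-\; \frac{1}{k}\sum_{b \in B} q_b \;\geq\; pn - \frac{(1-p)n}{k} \;=\; |A|\left(\frac{pn}{|A|} - \frac{(1-p)n}{|B|}\right),
\]
using $k = |B|/|A|$ in the last step. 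Averaging over $a \in A$ yields some $a^\ast$ with $q_{a^\ast} - \frac{1}{k}\sum_{b \in N(a^\ast)} q_b \geq \frac{pn}{|A|} - \frac{(1-p)n}{|B|} > \gamma$; and since $\frac{1}{k}\sum_{b \in N(a^\ast)} q_b$ is the mean of the $k$ numbers $\{q_b : b \in N(a^\ast)\}$, some neighbour $b^\ast \in N(a^\ast)$ has $q_{b^\ast}$ at most that mean. Hence $q_{a^\ast} - q_{b^\ast} > \gamma \geq \D(a^\ast, b^\ast)$, contradicting the $1$-Lipschitz requirement of the Cohort Selection Problem; so no individually fair solution exists.

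The argument is largely bookkeeping. The one step requiring care is the first paragraph: one must exploit that the cohort has \emph{fixed} size $n$ in order to turn the lower-bound constraint on $A$ into an upper bound on the total (and hence average) selection mass in $B$, and one must fix the right orientation of the absolute value, since the constraint only bites when $pn/|A| > (1-p)n/|B|$. The remaining crux is the averaging step over the bipartite blocks, which upgrades a statement about the group averages of $A$ and $B$ into a single concrete pair $(a^\ast, b^\ast)$ that are within $\gamma$ under $\D$ yet more than $\gamma$ apart in selection probability; this is routine once $\gamma$-equivalence is written out in terms of the neighbourhoods $N(a)$.
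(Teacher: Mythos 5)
Your proof is correct and follows essentially the same route as the paper's: the paper proves the more general Theorem~\ref{lemma:betagammaimpossibility} (of which this observation is the $t=1$, $\beta_1=1$ special case) by using the $\gamma$-equivalence bipartite graph to show that individual fairness forces the group averages $\mu_A$ and $\mu_B$ to differ by at most $\gamma$, contradicting the gap $\mu_A \geq pn/|A|$, $\mu_B \leq (1-p)n/|B|$ imposed by the cohort constraint. The only differences are cosmetic: you run the averaging in the other direction to extract an explicit violating pair $(a^\ast,b^\ast)$ rather than contradicting at the level of group means, and your reading of the absolute value in the hypothesis (taking the orientation $pn/|A| > (1-p)n/|B|$, the only regime where the constraint bites) matches the orientation actually used in the paper's proof.
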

Observation \ref{lemma:gammaequivalentimpossibility} is a special case of Lemma \ref{lemma:betagammaimpossibility}, proved next. Roughly speaking, Observation \ref{lemma:gammaequivalentimpossibility} captures the intuition that if the distribution of talent between two groups is equivalent, but the acceptance rates for each group are very different, then the system cannot be individually fair.

To build intuition, suppose the universe $U$ is partitioned into sets $A$ and $B$, where $n/2=|A| = |B|/5$. Suppose further that the populations have the same distribution on ability, so that the set $B$ is a ``blown up" version of $A$, meaning that for each element $u \in A$ there are 5 corresponding elements $V_u = \{v_{u,1},...,v_{u,5}\}$ such that $\mathcal{D}(u,v_{u,i})= 0$, $1 \le i \le 5$, $\forall u,u' \in A~ V_u \cap V_{u'} = \emptyset$, and $B = \cup_{u \in A} V_u$. Let $p = \frac{1}{2}$.  The constraint requires all of $A$ to be selected; that is, each element of $A$ has probability 1 of selection; in contrast, the average probability of selection for an element of $B$ is $\frac{1}{5}$. Therefore, there exists $v\in B$ with selection probability at most $1/5$.  Letting $u\in A$ such that $v \in V_u$, $\mathcal{D}(u,v)=0$ but the difference in probability of selection is at least $\frac{4}{5}$. 

To give ourselves a tighter characterization, we prove the following theorem which uses $\gamma-$ equivalence of subgroups of $B$ to achieve a tighter bound.
\begin{theorem}\label{lemma:betagammaimpossibility}
The constrained cohort selection problem for a universe $U$ with groups $A$ and $B=U \backslash A$ such that there is a partitioning of $B= \{B_1,\ldots,B_t\}$ such that each subset $B_i$ is
$\gamma_i$-equivalent to $A$, and where  
$(1-p)n/|B| < pn/|A| + \sum_{i \in [t]}\beta_i\gamma_i$
for $\beta_i = |B_i|/|B|$ has no individually fair solution.
\end{theorem}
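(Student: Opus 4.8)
The plan is to assume toward a contradiction that a fair solution $\mathcal{S}$ selecting a cohort $M$ exists, and to play the budget constraint against the $\gamma_i$-equivalence matchings by double counting. Write $\bar a=\frac{1}{|A|}\sum_{u\in A}\Pr[u\in M]$ and $\bar b=\frac{1}{|B|}\sum_{v\in B}\Pr[v\in M]$ for the two average selection probabilities. First I would extract the two consequences of the constraint: since $|M\cap A|\ge pn$ in every realization, $\E[|M\cap A|]\ge pn$ and hence $\bar a\ge pn/|A|$; and since exactly $n$ elements are chosen, $|A|\bar a+|B|\bar b=n$, so $\bar b\le (1-p)n/|B|$. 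In particular the gap $\bar a-\bar b$ is minimized precisely when both bounds are tight, so $\bar a-\bar b\ge pn/|A|-(1-p)n/|B|$ for every fair solution.

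The heart of the argument is a matching double-count. For each part $B_i$, $\gamma_i$-equivalence supplies a bipartite graph in which every $u\in A$ has out-degree $|B_i|/|A|$ into $B_i$, every $v\in B_i$ has in-degree $1$, and each matched pair $(u,v)$ satisfies $\D(u,v)\le\gamma_i$. Taking the union over $i$, each $v\in B$ is matched to a unique parent $u\in A$ with $\D(u,v)\le\gamma_{i(v)}$, and each $u\in A$ has total out-degree $|B|/|A|$. Applying the Lipschitz condition $\Pr[u\in M]-\Pr[v\in M]\le\D(u,v)\le\gamma_{i(v)}$ on every edge and summing, the left side regroups by degree to $|B|\bar a-|B|\bar b$ and the right side to $\sum_i|B_i|\gamma_i$, giving
\[
\bar a-\bar b\ \le\ \sum_{i\in[t]}\beta_i\gamma_i .
\]
Chaining with the constraint bound yields $pn/|A|-(1-p)n/|B|\le\sum_i\beta_i\gamma_i$, so a fair solution is ruled out exactly when $pn/|A|-(1-p)n/|B|>\sum_i\beta_i\gamma_i$, i.e. when $(1-p)n/|B|<pn/|A|-\sum_i\beta_i\gamma_i$.

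The main obstacle is the sign of the $\sum_i\beta_i\gamma_i$ term: the natural averaging argument above produces the bound with a \emph{minus} sign, whereas the statement is displayed with a plus sign. The minus-sign version is the one consistent with the rest of the section: setting $t=1$ recovers exactly Observation~\ref{lemma:gammaequivalentimpossibility} (under $pn/|A|\ge(1-p)n/|B|$) and the $\gamma=0$ warm-up. The plus-sign inequality as literally written is strictly weaker and does \emph{not} force impossibility. For a single part ($t=1$, distance bound $\gamma$), take the pseudometric equal to $0$ within $A$, $0$ within $B$, and $\gamma$ on every cross pair, and allocate each $u\in A$ probability $pn/|A|$ and each $v\in B$ probability $(1-p)n/|B|$ (realizable by independently drawing a uniform $pn$-subset of $A$ and a uniform $(1-p)n$-subset of $B$). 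Whenever $pn/|A|\ge(1-p)n/|B|$ and $pn/|A|-(1-p)n/|B|\le\gamma$, this allocation is feasible, meets the constraint, and is individually fair, yet it satisfies the displayed plus-sign hypothesis $(1-p)n/|B|<pn/|A|+\gamma$. Hence I would prove the statement with the corrected minus sign, and I expect the displayed plus sign to be a typographical slip rather than a gap to be closed by a sharper argument.
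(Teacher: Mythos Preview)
Your argument is correct and follows the same route as the paper: bound $\bar a\ge pn/|A|$ and $\bar b\le (1-p)n/|B|$ from the constraint, then use the $\gamma_i$-equivalence matchings together with the Lipschitz condition to force $\bar a-\bar b\le\sum_i\beta_i\gamma_i$, and contradict. The only cosmetic difference is that the paper first shows $\mu_{B_i}\in\mu_A\pm\gamma_i$ for each part and then takes the $\beta_i$-weighted combination, whereas you aggregate the edge inequalities in one double count; the two computations are line-for-line equivalent.

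Your diagnosis of the sign is also right. The paper's own proof closes with ``by assumption $\mu_A-\mu_B>\sum_i\beta_i\gamma_i$,'' which only follows from the hypothesis $(1-p)n/|B|<pn/|A|-\sum_i\beta_i\gamma_i$, not the displayed plus-sign version; and the paragraph immediately following the theorem, which rearranges to the bound $p\le\frac{n+|B|\sum_i\beta_i\gamma_i}{(|B|/|A|+1)n}$, is exactly the minus-sign condition. Your counterexample for the plus-sign statement is valid (up to the usual integrality caveat on $pn$), so the displayed ``$+$'' is indeed a typo for ``$-$'' rather than a stronger claim.
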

\begin{proof} 
Assume for the sake of contradiction that there exists a mechanism $\M$ which satisfies individual fairness for the constrained cohort selection problem instance above. The average probability of acceptance for each group can be written
\[\mu_A = \frac{1}{|A|}\sum_{x \in A}p_x \geq \frac{pn}{|A|}\]
\[\mu_B = \frac{1}{|B|}\sum_{y \in B}p_y \leq \frac{(1-p)n}{|B|}\]
where $p_w$ for $w \in U$ denotes the probability that $w$ is accepted by $\mathcal{M}$. The inequalities arise from the constraint that at least a $p-$fraction of the elements chosen must be from $A$. 
Now consider the subset $B_i$ of $B$, which contains the elements which are $\gamma_i-$equivalent to $A$. 
Fix an arbitrary $i\in [t]$. Let $G_i=(A,B_i,E_i)$ denothe the bipartite graph whose existence is given by the definition of $\gamma_i-$equivalence of $B_i$ to $A$. For all $u \in A$, let $\Gamma_i(u) \subseteq B_i$ denote the neighbors of $u$ in $G_i$.

We can write 
\[\mu_{B_i} = \frac{1}{k_i|A|}\sum_{x \in A}\sum_{y \in \Gamma_i(x)}p_y \]
Where $k_i$ corresponds to the number of elements in $B_i$ which are mapped to by each element of $A$, that is $k_i|A|=|B_i|$. 
Because $\M$ is assumed to be individually fair, we know that $p_y \in p_x \pm \gamma_i$ for the $x$ such that $ y \in \Gamma_i(x)$. So for each $x\in A$, define $r_{y,x}= -p_x + p_y$ for each $y \in \Gamma_i(x)$. So we have
\[\mu_{B_i}=\frac{1}{k_i|A|}\sum_{x \in A}\sum_{y \in \Gamma_i(x)}p_y= \frac{1}{k_i|A|}\sum_{x \in A}\sum_{y \in \Gamma_i(x)}p_x+r_{y,x}\]
\[= \frac{1}{k_i|A|}\sum_{x \in A}(k_ip_x+\sum_{y \in \Gamma_i(x)}r_{y,x}) = \mu_A + \frac{1}{k_i|A|}\sum_{x \in A}\sum_{y \in \Gamma_i(x)}r_{y,x}\]
By construction,$r_{y,x} \in \pm \gamma_i$, so we can bound the final sum:
\[\mu_{B_i}\in \mu_A \pm \frac{k_i|A|}{k_i|A|}\gamma_i = \mu_A \pm \gamma_i\]

Now, consider how we would write $\mu_B$ as a weighted sum of $\mu_{B_i}$:
\[\mu_B = \sum_{i \in [t]}\beta_i\mu_{B_i} \]
\[\mu_B \in \sum_{i \in [t]}\beta_i(\mu_A \pm \gamma_i) \]
\[\mu_B \in \mu_A + \sum_{i \in [t]}\beta_i(\pm \gamma_i) \]
\[\mu_B \in [\mu_A - \sum_{i \in [t]}\beta_i\gamma_i, \text{ } \mu_A +\sum_{i \in [t]}\beta_i\gamma_i] \]
So the difference $\mu_A-\mu_B$ is at most $\sum_{i \in [t]}\beta_i\gamma_i$. However, by assumption $\mu_A - \mu_B > \sum_{i \in [t]}\beta_i\gamma_i$, yielding a contradiction.

\end{proof}

Notice that if there is a single subset in the partition, Theorem \ref{lemma:betagammaimpossibility} is equivalent to Lemma \ref{lemma:gammaequivalentimpossibility}. Essentially the Theorem states that if there is a significant fraction of the group $B$ which is $\gamma-$close to $A$, then the average difference in probability cannot be too much larger than $\gamma$. Although the characterization isn't completely tight, it can still be useful for building our intuition. Rearranging the terms above, we have that $p$, the fraction of those selected that must be in $A$, cannot exceed $ \frac{n + |B|\sum_{i \in [t]}\beta_i\gamma_i}{(|B|/|A|+1)n}$.

For example, imagine that 10\% of students applying to a university who cannot pay (B) have distributions which are $0-$equivalent to students who can pay (A). Another 50\% of students in $B$ are $.1-$equivalent to A, and the remaining 40\% of students in $B$ are $.25-$equivalent to A. These distributions are quite different, but even so the difference in average acceptance rate can be at most $.4*.25+.5*.1+.1*0=.15$. If $|B|=1000$ and $|A|=100$, and we wish to select $n=550$ students, then the fraction of students required to be from $A$ cannot exceed $ \frac{550 + 1000(.15)}{(1000/100+1)550} \approx .11$. Any university that required 25\% of students to be able to pay in order to meet their budgetary constraints, under these distributions and relative group sizes, would see from Theorem \ref{lemma:betagammaimpossibility} that there is no individually fair solution for their cohort selection problem.

For practical use, it may be necessary to expand $A$ and $B$ by duplicating all elements in order to achieve whole number mappings, but that it does not impact the logic of the proof and the bounds.

\subsubsection{Universe Subsets}
There is an important distinction to be made between when the classifier has the ability to assign outcomes to the entire universe, or only to a subset. 
Initially, Definitions \ref{def:partitionclass}, \ref{def:basiccohort} and \ref{def:onlinecohort} may seem impossible to satisfy when $\mathcal{Y}$, the distribution on subsets, is non-trivial. As the classifier can only act on $V$, there may be unfairness or allocations in $U \backslash V$ that cannot be remedied or matched by any action taken by the classifier under consideration. In particular, if any elements are completely missing from $V$, that is, there is some element $w \in U$ which is contained in no subsets with positive weight under $\mathcal{Y}$, then fair solutions may be difficult to achieve. For example, if one school district can afford to provide musical instruments and teachers for an after-school orchestra, but the other cannot, then if students aren't allowed to transfer between school districts it will be difficult to ensure individual fairness without some (potentially unrealistically expensive) intervention in the second school district. 

\begin{proposition}\label{prop:unavailableoutcomes}(Informal) If the elements of $U\backslash V$ are mapped to outcomes unavailable to $C$ or their outcomes are unknown, then no choice of $C$ is guaranteed to solve the Universe Subset Classification Problem for nontrivial distributions $\mathcal{Y}$ on subsets of $U$.
\end{proposition}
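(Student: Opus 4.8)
\emph{Proof plan.} The plan is to exhibit a single small instance on which \emph{every} choice of $C$ is defeated by an appropriate (adversarial, or merely a-priori-unknown) classification of the elements $C$ never touches. The statement is informal because it first requires fixing a semantics, and that is the only genuinely delicate point: I read ``$C$ is guaranteed to solve the Universe Subset Classification Problem'' as ``the system $\mathcal{S}$ obtained from $C$ satisfies the Lipschitz condition of Definition~\ref{def:partitionclass} over \emph{all} pairs $u,v\in U$, no matter how the elements of $U\setminus V$ are classified by the rest of the system, which $\mathcal{S}$ neither sees nor controls.'' Once this is pinned down, the impossibility follows from a one-line extremal calculation.

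First I would fix the instance. Choose a non-trivial metric $\D$ on $U$ possessing a pair $u,w$ with $\D(u,w)=\delta<1/2$ (such metrics obviously exist: place two elements at distance $1/4$ and extend to the rest of $U$ consistently with the triangle inequality). Let $\mathcal{Y}$ be the point mass on the single subset $V=U\setminus\{w\}$; this $\mathcal{Y}$ is non-trivial, $u$ lies in $V$ with probability $1$, and $w$ lies in \emph{no} positive-weight subset, which is exactly the degenerate situation flagged just before the proposition. Since $V=U\setminus\{w\}$ deterministically and $u\in V$, for any system $\mathcal{S}$ built from $C$ we have $\E[\mathsf{Experiment}(\mathcal{S},\mathcal{X},\mathcal{Y},u)]=\E_r[C(u)]=:p_u$, whereas $q_w:=\E[\mathsf{Experiment}(\mathcal{S},\mathcal{X},\mathcal{Y},w)]$ is determined by the external process and is not a function of $C$.

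Next I would run the defeating argument. Fix any $C$, hence any $p_u\in[0,1]$. If $p_u\ge 1/2$, let the external process never select $w$, so $q_w=0$ and $|p_u-q_w|=p_u\ge 1/2>\delta=\D(u,w)$; if $p_u<1/2$, let it always select $w$, so $q_w=1$ and $|p_u-q_w|=1-p_u>1/2>\D(u,w)$. In either case the Lipschitz requirement fails for the pair $(u,w)$, so $C$ is not a solution for that behavior of $U\setminus V$. Since $C$ was arbitrary and, in the ``outcomes unknown'' reading, must be committed before the external behavior is revealed, no $C$ is guaranteed to solve the problem; the same two extreme behaviors also witness the ``outcomes unavailable to $C$'' reading, since $C$ can neither force nor forbid the selection of $w$. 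Finally I would note that the construction uses no cohort-size constraint, so the same instance (for an arbitrary target size $n$) gives the analogous impossibility for the Cohort Selection and Online Cohort Selection Problems of Definitions~\ref{def:basiccohort} and~\ref{def:onlinecohort}, as anticipated in Section~\ref{section:stream}. The main obstacle is thus entirely in the modeling step of the first paragraph; after it, the proof is immediate.
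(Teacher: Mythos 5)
Your proposal is correct and takes essentially the same route as the paper, which disposes of this (informal) proposition with the one-line observation that $C$ has no distribution over outcomes it can use to match the treatment of elements it never sees; your contribution is simply to make that observation quantitative with an explicit worst-case instance (a pair at distance below $1/2$, one member deterministically excluded, and the $0$/$1$ adversarial dichotomy on its external selection probability). The modeling caveat you flag up front is exactly why the paper labels the statement informal, so nothing further is needed.
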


The proposition follows from the simple observation that if elements of $U\backslash V$ are mapped to unreachable outcomes (for example, a resource which $C$ cannot provide for a particular task), then there is no distribution over outcomes $C$ can utilize to satisfy similar treatment of similar individuals if $\mathcal{Y}$ maps some elements to $V$ with higher probability than others.

We now  show that there are some, admittedly limited, cases where the classifier \textit{can} still ensure individual fairness for the whole universe. 
Before we describe these settings, we introduce a weaker notion of fairness, Subset Individual Fairness, which we will use to reason about how to behave fairly when the rest of the system is reasonably well behaved on the subset of the universe on which it operates. 

\begin{definition}[Subset Individual Fairness] 
Given a subset $V \subseteq U$, a task and a metric $\D$, a possibly randomized system $\mathcal{S}:V \times \{0,1\}^* \rightarrow \{0,1\}$ is \textit{Subset Individually Fair} on $V$ if for all $u,v \in V$ the distribution over outcomes is 1-Lipschitz wrt $\D$, that is,

\[\E_{\pi \sim \mathcal{X}(V)}|\E_{r}[\mathcal{S}(u,r)] - \E_{r}[\mathcal{S}(v,r)]\,| \leq \D(u,v)\] for all $u,v \in V$.
\end{definition}

It is useful to have a notion of Subset Individual Fairness, as there are some scenarios where components that satisfy Subset Individual Fairness may be easier to compose into fair systems. Indeed, when we consider the system version of Lemma \ref{lemma:fairadditions}, Subset Individual Fairness suffices to allow fair classification of elements in the rest of the universe. 

\paragraph{Comparable outcomes}\label{section:comparableoutcomes}
Consider the problem of assigning high school students to public schools. Some fraction of the universe of potential high school students will be diverted to private schools and may have zero probability of attending public school. However, our goal is still to ensure individual fairness for the whole universe of high school students, not just those attending public school. The issue is treating those students in the public schools similarly to those in the private schools. This scenario is more challenging, as the classifier under control of the public school system is not the sole point of determination for outcomes for the entire universe -- it only controls the outcomes for students attending public schools. 

Imagine that among private schools, there are schools which focus on the humanities and general education, and schools which focus on science, and the assignment procedure between these schools is fair with respect to students' talents in these subjects. We assume the metric captures students preferences over science focus versus general education. 
It is possible, by similarly specializing public schools, for the school district to assign students to public schools in a way that is fair with respect to the entire universe of students.

Lemma \ref{lemma:sameoutcomes} below states that if the behavior of the rest of the system is subset individually fair (eg, the private schools fairly assign students to science or humanities schools), then simply copying that behavior for elements in the subset acted upon by the classifier in question (eg, the public schools use the same logic as the private schools to assign students) will be individually fair.

\begin{lemma}\label{lemma:sameoutcomes}
Consider a subset $V \sim \mathcal{Y}$ and a binary classifier $C^*$ which operates on $U\backslash V$. If $C^*$ is subset individually fair on $U\backslash V$, and if the outcomes of $C^*$ are in the range of the classifier operating on $V$, then there exists a classifier $C$ that is individually fair on $U$.
\end{lemma}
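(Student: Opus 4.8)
The plan is to recognize this as Corollary~\ref{cor:fullclassifier} applied in the direction of extending a fair classifier from the subset $U\setminus V$ up to all of $U$: we are handed a classifier $C^*$ that is fair on $U\setminus V$, and we extend it to all of $U$, reading off the extension's behavior on $V$ as the sought-after $C$.

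The first step is to unpack the hypothesis. Because $C^*\colon (U\setminus V)\times\{0,1\}^*\to\{0,1\}$ is an ordinary per-element classifier, the distribution $\dist{C^*}(w)$ depends only on $w$ and the internal coins, not on where $w$ falls in a permutation $\pi\sim\mathcal{X}(U\setminus V)$; hence the outer expectation in the definition of Subset Individual Fairness is vacuous and the assumption is equivalent to $d(\dist{C^*}(u),\dist{C^*}(v))\le\D(u,v)$ for all $u,v\in U\setminus V$, i.e.\ $C^*$ satisfies ordinary Individual Fairness (Definition~\ref{def:individualfairness}) on $U\setminus V$. The assumption that the outcomes of $C^*$ lie in the range of the classifier on $V$ (automatic for binary outcomes) is what lets the extension actually reproduce those outcome distributions on elements of $V$. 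Now apply Corollary~\ref{cor:fullclassifier} with the ``already-fair subset'' taken to be $U\setminus V$: iterating Algorithm~\ref{alg:fairaddmetric} over the elements of $V$ yields a classifier $C'\colon U\times\{0,1\}^*\to\{0,1\}$ that is individually fair on all of $U$ and has identical behavior to $C^*$ on $U\setminus V$. Set $C$ to be the restriction of $C'$ to $V$.

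To finish, observe that the composed system --- run $C^*$ on the input if it lies in $U\setminus V$ and run $C$ on it if it lies in $V$ --- assigns every $w\in U$ the distribution $\dist{C'}(w)$: this matches $\dist{C^*}(w)$ on $U\setminus V$ by construction of $C'$ and equals $\dist{C}(w)=\dist{C'}(w)$ on $V$ by definition of $C$. Since this output distribution is independent of the ordering and, once $V$ is fixed, of $\mathcal{X}$ and $\mathcal{Y}$, the quantity measured by the Experiment of Definition~\ref{def:partitionclass} for any pair $u,v\in U$ is exactly $d(\dist{C'}(u),\dist{C'}(v))\le\D(u,v)$ by fairness of $C'$, so the composed system is individually fair on $U$, which is the claim. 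The one place I expect to need care in the full write-up is the reduction in the first step: it is precisely the fact that $C^*$ is a per-element classifier --- its behavior on an element not depending on that element's position in the stream --- that makes Subset Individual Fairness collapse to plain fairness on $U\setminus V$ and lets the metric-extension argument of Lemma~\ref{lemma:fairadditions} go through verbatim; the outcome-range hypothesis is the second spot needing a sentence of justification, to license ``copying'' $C^*$'s distributions when the outcome set is richer than $\{0,1\}$.
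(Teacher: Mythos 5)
Your proposal is correct and follows essentially the same route as the paper: the paper's proof also copies $C^*$'s behavior on $U\setminus V$ and then extends fairly to the elements of $V$ by invoking Lemma~\ref{lemma:fairadditions} (of which Corollary~\ref{cor:fullclassifier} is just the iterated form). Your additional remarks on why Subset Individual Fairness collapses to plain individual fairness for a per-element classifier, and on the role of the outcome-range hypothesis, are sound elaborations of points the paper leaves implicit.
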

\begin{proof}
Take $C(w)=C^*(w)$ for all elements for which $C^*$ is defined $(U \backslash V)$. For any element $v$ for which $C^*$ is not defined, choose any distribution over outcomes which satisfies $\D(v,w)\geq d(v,w)$ for all other $w \in U$ with currently defined outputs. As $C^*$ is subset individually fair, valid distributions over outcomes can be found per Lemma \ref{lemma:fairadditions}.
\end{proof}

Extending Lemma \ref{lemma:sameoutcomes} to also work for Cohort Selection requires that the cohort size be adjustable depending on $\mathcal{Y}$ and $C^*$, which may not be possible in practice. Returning to our public versus private school example, if the public and private schools both have equal distributions of science versus humanities talent, and proportional enrollment capacity in specialized schools, then a universally individually fair solution can be attained in the absence of other contstraints. However, as we saw in Section \ref{section:constrainedcohort}, this problem reduces to constrained cohort selection if the enrollment capacity is artificially tilted towards one schooling track over the other. Furthermore, there may be challenges related to quantization which (eg, if the number of students is not evenly divisible among appropriate campus enrollment sizes).

\paragraph{All elements have positive weight in $\mathcal{Y}$}\label{section:positiveweights}
In the next scenario, we show that if there is always `leftover' probability, and the classifier in question is the only classifier which assigns outcomes for a particular task, then we can find an individually fair solution. 

Consider the case of a classifier that is solely responsible for assigning outcomes for a task for the entire universe. For example, students may be offered a choice to try out for the school's soccer team and orchestra which have practice at the same time. We assume that there is a strong social cost to quitting the soccer team, so any student who is accepted to the soccer team will not be eligible for the orchestra. (That is, the system is a task-competitive composition with a strict preference for soccer.) If soccer try-outs are first, but every student still has some positive probability of trying-out for the orchestra, that is, each student arrives to the orchestra `classifier' with some positive probability, then with sufficient understanding of the distribution $\mathcal{Y}$ with which students arrive to the orchestra `classifier', it may be able to appropriately respond to ensure fairness.

In this setting, it is important to understand the default behavior if the classifier for the task is \textit{not} applied. We assume that if the classifier for the task is not applied, then some default behavior is assigned. For example,  in the case of orchestra versus soccer, no student can join the soccer team unless they are classified positively by the soccer classifier. More complicated default behaviors could be imagined, but for our purposes, we just consider this simple case, which corresponds well to settings where a single entity controls outcomes for a particular task. 

Intuitively, when $\mathcal{Y}$ does have equal probability for each element in the universe, we might try to modify our behavior to simulate as if each element appeared with equal probability. Lemma \ref{lemma:positiveweights} below states that as long as each element has some positive weight under $\mathcal{Y}$, we can devise a procedure which simulates each element appearing with equal probability.
\begin{lemma}\label{lemma:positiveweights}
Given an instance of the universe subset classification problem (Definition \ref{def:partitionclass}) where $\mathcal{Y}$ assigns positive weight to all elements $w \in U$, the following procedure applied to any individually fair classifier $C$ which solely controls outcomes for a particular task will result in fair classification under the input distribution $\mathcal{Y}$.

{\setlength{\parindent}{0cm}\textbf{Procedure:} for each $w \in U$, let $q_w$ denote the probability that $w$ appears in $V$. Let $q_{min} = \min_w q_w$. 
For each element $w \in V$, with probability $q_{min}/q_w$ classify $w$ normally, otherwise output the default for no classification. }
\end{lemma}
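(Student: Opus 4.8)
The plan is to show that the described procedure effectively transforms the input distribution $\mathcal{Y}$ into the uniform distribution over single-element appearances, and then invoke the individual fairness of $C$ together with Lemma~\ref{lemma:fairadditions}-style reasoning to conclude. First I would fix an arbitrary pair $u,v \in U$ and compute the probability that $u$ is positively classified under the composed experiment. The key observation is that $u$ contributes to a positive classification only when (a) $u$ actually appears in the sampled subset $V \sim \mathcal{Y}$, which happens with probability $q_u$; (b) the procedure decides to ``classify $w$ normally'' for $w=u$, which happens with probability $q_{min}/q_u$; and (c) $C(u)=1$, which happens with probability $p_u$ (independently of the other choices, since $C$ acts on each element using its own randomness). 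Multiplying these, the probability that $u$ receives a positive outcome is $q_u \cdot (q_{min}/q_u) \cdot p_u = q_{min} \cdot p_u$. Crucially, the factor $q_{min}$ is \emph{the same for every element}, so it cancels out of the fairness comparison.

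The next step is the fairness calculation itself. With the above, we have
\[
\bigl|\E[\mathsf{Experiment}(u)] - \E[\mathsf{Experiment}(v)]\bigr| = |q_{min} p_u - q_{min} p_v| = q_{min}\,|p_u - p_v| \leq |p_u - p_v| \leq \D(u,v),
\]
where the last inequality is exactly the individual fairness of $C$ in isolation (and $q_{min} \leq 1$). This establishes that the procedure solves the Universe Subset Classification Problem under $\mathcal{Y}$. I would also remark on the edge case where $u$ and/or $v$ is mapped to the ``default for no classification'' outcome (outcome $0$), which is already accounted for since that just means the element is not positively classified, and on the role of the hypothesis that $C$ ``solely controls outcomes for a particular task'' — this is what lets us assert that the default behavior contributes nothing to the positive-classification probability, so the only source of positive outcomes is the path (a)--(b)--(c) above.

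I expect the main obstacle to be bookkeeping around what exactly the ``default for no classification'' outcome means and ensuring independence of the three events is genuinely justified by the experiment's structure (the subset draw, the procedure's internal coin flips, and the classifier's randomness $r$ are independent by construction in Definition~\ref{def:partitionclass} and the procedure statement). A secondary subtlety is that $q_u$ is defined as the marginal probability $w$ appears in $V$; since the procedure's decision to classify-or-not is made per element conditioned on $w \in V$, I must be careful that $\Pr[w \in V] \cdot \Pr[\text{classify normally} \mid w\in V]$ correctly equals $q_w \cdot q_{min}/q_w$ and does not secretly depend on which other elements landed in $V$ — but it does not, because the procedure's coin for $w$ depends only on $q_w$ and $q_{min}$, which are fixed constants. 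Once these independence points are nailed down, the rest is the one-line computation above, so the proof is short.

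\begin{proof}
Fix an arbitrary pair $u,v \in U$. By hypothesis, $C$ solely controls outcomes for the task, so an element receives a positive outcome in $\mathsf{Experiment}(\mathcal{S},\mathcal{X},\mathcal{Y},\cdot)$ (with $\mathcal{S}$ the system implementing the stated procedure) only if: it appears in the sampled subset $V \sim \mathcal{Y}$; the procedure elects to classify it normally rather than output the no-classification default; and $C$ classifies it positively. These three events are independent, as $V$ is drawn from $\mathcal{Y}$, the procedure's per-element coin depends only on the fixed constants $q_w$ and $q_{min}$, and $C$ uses independent randomness $r$. Hence
\[
\E[\mathsf{Experiment}(\mathcal{S},\mathcal{X},\mathcal{Y},u)] = q_u \cdot \frac{q_{min}}{q_u} \cdot p_u = q_{min}\, p_u,
\]
and analogously $\E[\mathsf{Experiment}(\mathcal{S},\mathcal{X},\mathcal{Y},v)] = q_{min}\, p_v$. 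Therefore
\[
\bigl|\E[\mathsf{Experiment}(\mathcal{S},\mathcal{X},\mathcal{Y},u)] - \E[\mathsf{Experiment}(\mathcal{S},\mathcal{X},\mathcal{Y},v)]\bigr| = q_{min}\,|p_u - p_v| \leq |p_u - p_v| \leq \D(u,v),
\]
where the final inequality is the individual fairness of $C$ in isolation and $q_{min} \leq 1$. Thus $\mathcal{S}$ is a fair solution to the Universe Subset Classification Problem for $\mathcal{Y}$.
\end{proof}
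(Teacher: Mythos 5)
Your proof is correct and follows essentially the same route as the paper's: both compute that every element is positively classified with probability $q_{min}\,p_w$ (the paper does this for the minimizer and for a generic element, you do it for an arbitrary pair), and both conclude via $q_{min}|p_u-p_v|\leq |p_u-p_v|\leq \D(u,v)$ using the hypothesis that $C$ solely controls the task's positive outcomes. Your added remarks on independence of the subset draw, the procedure's coin, and $C$'s randomness are a harmless elaboration of what the paper leaves implicit.
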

\begin{proof}
Let $u = \argmin_w (q_w) $. Then  
$u$ will be classified positively with probability $p_uq_{min}$ where probability is taken over $\cal{Y}$ and $C$. All other elements $v \in V$ will be classified positively with probability $q_v(q_{min}/q_v)p_v = p_vq_{min}$. As positive classification by $C$ is the only way to get a positive outcome for the task, reasoning about $|p_v-p_u|$ is sufficient to ensure fairness.
Therefore, if $|p_v-p_u| \leq \mathcal{D}(u,v)$, then the distance under this procedure is also $\leq \mathcal{D}(u,v)$. 
\end{proof}

Imagine in the soccer versus orchestra example that the soccer team try-outs were first (and students were required to commit before orchestra try-outs). If spots on the soccer team are granted to each student with maximum probability $50\%$, then the orchestra classifier still has $50\%$ probability left over, even for the most talented soccer player. In such cases, the procedure in Lemma \ref{lemma:positiveweights} will suffice to make the orchestra classifier fair with respect to all elements in $U$, even with the interference of the earlier classifier. Of course, the allocation (the number of expected positive classifications) of the classifier may need to be tuned for improved performance, but fairness can be maintained.

A critical observation of the procedure in Lemma \ref{lemma:positiveweights} is that the behavior of $C$ on a distribution \textit{other than} $\mathcal{Y}$ may significantly violate individual fairness constraints. For example, if $\mathcal{Y}$ assigns different weights to $u,v$ where $\D(u,v)=0$, then $C$ will appear to `split' these equal elements on any distribution $\mathcal{Y}'$ where their weights \textit{are} equal.

\paragraph{Implications of Universe Subset Classification Problems}
There are two important implications of the settings described above.
First, Lemmas \ref{lemma:sameoutcomes} and \ref{lemma:positiveweights} give us the following theorem:
\begin{theorem}\label{theorem:nontrivialy}There exist solutions to the Universe Subset Classification Problem for non-trivial $\mathcal{Y}.$
\end{theorem}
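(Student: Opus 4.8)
The plan is to observe that this theorem is a direct corollary of Lemmas~\ref{lemma:sameoutcomes} and~\ref{lemma:positiveweights}, so the real work is only to exhibit a single non-trivial distribution $\mathcal{Y}$ for which one of those lemmas supplies a fair solution. First I would pin down the meaning of ``non-trivial'': a distribution $\mathcal{Y}$ over subsets of $U$ is trivial only when it places all of its mass on $U$ itself (so that $\mathcal{X}$ carries no information and the problem degenerates to ordinary offline classification), hence it suffices to pick any $\mathcal{Y}$ that assigns positive probability to a proper subset of $U$.

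Next I would take the concrete witness $\mathcal{Y}$ in which each element $w \in U$ is included in $V$ independently with probability $1/2$ (any fixed probability in $(0,1)$ works). This $\mathcal{Y}$ is manifestly non-trivial, and crucially every element has $q_w = \Pr[w \in V] > 0$, so the hypothesis of Lemma~\ref{lemma:positiveweights} is met. Then I would fix any task with metric $\D$ and any classifier $C$ that is individually fair in isolation on $U$ --- one exists, e.g.\ by the linear-program construction of~\cite{dwork2012fairness} or by Corollary~\ref{cor:fullclassifier} --- assumed to be the sole determiner of outcomes for that task, and apply the rescaling procedure of Lemma~\ref{lemma:positiveweights}. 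That lemma already shows each $w$ is positively classified with probability $p_w q_{\min}$ over the randomness of $\mathcal{Y}$ and $C$, so $|p_u q_{\min} - p_v q_{\min}| \le |p_u - p_v| \le \D(u,v)$ and the Lipschitz requirement of Definition~\ref{def:partitionclass} holds; hence the resulting $\mathcal{S}$ solves the Universe Subset Classification Problem for this $\mathcal{Y}$. If a cleaner-feeling witness is preferred, I would instead invoke Lemma~\ref{lemma:sameoutcomes}: choose $\mathcal{Y}$ so that the companion classifier $C^*$ acting on $U \setminus V$ is Subset Individually Fair with outcomes in the range available to $C$, copy $C^*$'s behavior on $U\setminus V$, and extend fairly to the remaining elements via Lemma~\ref{lemma:fairadditions}.

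I do not expect a substantive obstacle. The only thing requiring care is the bookkeeping in Definition~\ref{def:partitionclass} --- that the relevant expectation is taken jointly over $V \sim \mathcal{Y}$, $\pi \sim \mathcal{X}(V)$, and the classifier's internal coins --- together with the observation that, because in this (non-cohort) problem outcomes are assigned to each element independently, the permutation family $\mathcal{X}$ is inert and drops out, so the per-element arguments of Lemmas~\ref{lemma:positiveweights} and~\ref{lemma:sameoutcomes} transfer verbatim. Finally I would note that the same reasoning in fact yields a whole family of non-trivial instances admitting fair solutions --- every $\mathcal{Y}$ of full support via Lemma~\ref{lemma:positiveweights}, and every $\mathcal{Y}$ whose companion classifier is Subset Individually Fair via Lemma~\ref{lemma:sameoutcomes} --- rather than merely the single example the statement demands.
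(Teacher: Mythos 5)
Your proposal is correct and follows essentially the same route as the paper, which derives the theorem directly from Lemmas~\ref{lemma:sameoutcomes} and~\ref{lemma:positiveweights}. The concrete witness (independent inclusion with probability $1/2$) and the observation that $\mathcal{X}$ is inert for per-element classification are reasonable elaborations, but the underlying argument is the same.
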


We emphasize this point because these results imply that augmented classifiers or families of classifiers, which specify which $\mathcal{Y}$ and $\mathcal{X}$ they behave well on, can be used in these difficult settings in practice. However, determining $\mathcal{Y}$ and $\mathcal{X}$ precisely may be difficult in cases where components are controlled by potentially competitive or uncooperative parties and may have significant privacy implications. 

Second, classifiers which appear unfair in isolation may be fair under composition. In fact, we can say something even stronger: in cases where $\mathcal{Y}$ does not provide a uniform selection of elements from $U$ and includes similar individuals in the metric with significantly different probabilities, there exist classifiers which appear unfair in isolation, but are fair for the input distribution $\mathcal{Y}$. Thus any auditing process which doesn't take $\mathcal{Y}$ into account would potentially raise a false alarm, even though a classifier may have been explicitly constructed to function fairly under composition.

\subsection{Extensions to Group Fairness}\label{section:group}
The anecdotal examples we have visited throughout the preceding sections give us some intuition that meaningful solutions to composition problems are inherently difficult without coordination between classifiers or thoughtfully designed compositions. 
We now formally extend these results to group fairness definitions (and discuss several cases where they do not extend), following the generalized Conditional Parity notion of \cite{ritov2017conditional}, which captures popular group fairness definitions in the literature such as Equality of Opportunity and Equalized Odds \cite{hardt2016equality} and Counterfactual Fairness \cite{kusner2017counterfactual}.

Recall Definition \ref{def:conditionalparity} (Section \ref{section:groupdefs}), which states that a predictor $Y$ satisfies conditional parity with respect to a stratification set $\mathcal{Z}$ for protected attributes $\mathcal{A}$ if for all $a_1,a_2 \in \mathcal{A}$ and for all $z \in \mathcal{Z}$:
\[\Pr[Y=1 | \mathbf{a}=a_1, \mathbf{z}=z] = \Pr[Y=1 | \mathbf{a}=a_2, \mathbf{z}=z]\]
In this section, we say that a classifier ``satisfies group fairness'' or is ``group fair'' if it meets this requirement.

In order to draw the analogy from individual fairness composition results to conditional parity composition, we will show that there are many classifiers that satisfy conditional parity in isolation, but fail to satisfy conditional parity under composition.  
We will also show that in some cases, composition of classifiers which satisfy conditional parity may result in systems which nominally satisfy the fairness requirement, but have troubling behavior from a subgroup perspective, and alternatively may result in systems which do not nominally satisfy the fairness requirement, but would satisfy the even stricter notion of individual fairness. 

In many cases in the literature, group-based notions of fairness are used not because they capture group-based incentives or decision-making, but because they are more practical for implementation and measurement. To that end, concerns about the treatment of socially meaningful subgroups are addressed in several lines of work using more granular group requirements in order to provide more meaningful guarantees for sufficiently sized subgroups \cite{kearns2017gerrymandering, hebert2017calibration,kim2018fairness}.

\subsubsection{Functional Composition}
The results for individual fairness for functional composition largely extend to the group fairness setting, with a few caveats due to technicalities of the definition.
\paragraph{Same-Task Functional Composition}
As with individual fairness, we first consider same-task composition. Recall our college admissions example from Section \ref{section:sametask}. Consider a pair of classifiers $C$ and $C'$ which both satisfy conditional parity with respect to the same set of sensitive attributes $\A$ and stratification set $\mathcal{Z}$. That is
\[\Pr[C(u)=1 |\mathbf{a}=a_1,\mathbf{z}=z ]=\Pr[C(u)=1 |\mathbf{a}=a_2,\mathbf{z}=z ]\]
\[\Pr[C'(u)=1 |\mathbf{a}=a_1,\mathbf{z}=z ]=\Pr[C'(u)=1 |\mathbf{a}=a_2,\mathbf{z}=z ]\]
where the probability is taken over the choice of $u$ and the randomness of the classifier $C$, for all $a_1,a_2 \in \mathcal{A}$ and for all $z \in \mathcal{Z}$ for some appropriately chosen set of protected attributes $\mathcal{A}$ and appropriately chosen set of stratification attributes $\mathcal{Z}$. For example, $\mathcal{A}$ may be the set of genders and $\mathcal{Z}$ may be the set of intrinsic qualification levels for college. 

Now, imagine that we apply both $C$ and $C'$ to all members of the universe, will the system satisfy OR fairness? To see it clearly, let us write out the conditional parity constraints above as sums.
For notation convenience, we denote the set of all elements in $U$ such that $\mathbf{a}=a_i$ and $\mathbf{z}=z$ as $U_{a_i,z}$ and denote the probability that $C(u)=1$ as $p_u$, as in previous sections. 
\[\frac{1}{|U_{a_1,z}|}\sum_{u \in U_{a_1,z} }p_u
=
\frac{1}{|U_{a_2,z}|}\sum_{u \in U_{a_2,z} }p_u \tag{6.1} \label{eq:6.1}\]

\[\frac{1}{|U_{a_1,z}|}\sum_{u \in U_{a_1,z} }p'_u
=
\frac{1}{|U_{a_2,z}|}\sum_{u \in U_{a_2,z} }p'_u \tag{6.2} \label{eq:6.2}\]

Now let's consider the OR-fairness requirement:
\[\frac{1}{|U_{a_1,z}|}\sum_{u \in U_{a_1,z} }\Pr[C(u) = 1 \,\vee\, C'(u)=1]
=
\frac{1}{|U_{a_2,z}|}\sum_{u \in U_{a_2,z} }\Pr[C(u) = 1 \,\vee\, C'(u)=1] \tag{6.3} \label{eq:6.3}\]
Since the randomness of the classifiers is independent, we can rewrite this as
\[\frac{1}{|U_{a_1,z}|}\sum_{u \in U_{a_1,z} }p_u+ (1-p_u)p'_u
=
\frac{1}{|U_{a_2,z}|}\sum_{u \in U_{a_2,z} }p_u+ (1-p_u)p'_u \tag{6.4} \label{eq:6.4}\]
Using the fact that $C$ satisfies conditional parity, Equation \ref{eq:6.1}, Equation \ref{eq:6.4} reduces to
\[\frac{1}{|U_{a_1,z}|}\sum_{u \in U_{a_1,z} } (1-p_u)p'_u
=
\frac{1}{|U_{a_2,z}|}\sum_{u \in U_{a_2,z} } (1-p_u)p'_u \tag{6.5} \label{eq:6.5}\]
And then using the property that $C'$ satisfies conditional parity, Equation \ref{eq:6.2}, Equation \ref{eq:6.5} reduces to
\[\frac{1}{|U_{a_1,z}|}\sum_{u \in U_{a_1,z} } p_up'_u
=
\frac{1}{|U_{a_2,z}|}\sum_{u \in U_{a_2,z} } p_up'_u \tag{6.6} \label{eq:6.6}\]
So our key question of consideration becomes characterizing when satisfying
 Equations \ref{eq:6.1} and \ref{eq:6.2} (the independent conditional parity conditions) imply Equation \ref{eq:6.6} (the composed OR fairness conditional parity condition).

\paragraph{When elements with $\mathbf{z}=z$ are treated equally.} 

Any classifier that treats all elements with identical settings of $\mathbf{z}$ identically satisfies Conditional Parity in isolation. 
Under OR composition, such classifiers will also satisfy conditional parity because Equation \ref{eq:6.3} is satisfied, as all elements $u\in U_{a_1,z_i}\cup U_{a_2,z_i}$ receive positive classification with probability $p_u$.

\begin{proposition}\label{prop:conditionalparitysamez}
If a pair of classifiers $C$, $C'$ treat all elements with $\mathbf{z}=z$ equally, and if $C$ and $C'$ satisfy conditional parity in isolation, then the OR of the two satisfies conditional parity under same-task composition. 
\end{proposition}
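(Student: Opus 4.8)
The plan is to observe that the hypothesis ``$C$ and $C'$ treat all elements with $\mathbf{z}=z$ equally'' collapses each of the averages appearing in the OR-fairness condition (Equation~\ref{eq:6.3}) to a single number that depends only on $z$. First I would fix $z \in \mathcal{Z}$ and write $\pi_z$ for the common value $\Pr[C(u)=1]$ shared by every $u$ with $\mathbf{z}=z$, and $\pi'_z$ for the analogous common value of $\Pr[C'(u)=1]$; the assumption that both classifiers treat such elements equally is precisely the statement that these two values exist and are independent of the particular element, hence in particular independent of $\mathbf{a}$. (As noted before the proposition, this already forces each classifier to satisfy conditional parity in isolation, so that part of the hypothesis is redundant here, but it does no harm.)

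Next I would compute the acceptance probability of the OR. Because the internal randomness of $C$ and $C'$ is independent, for every $u$ with $\mathbf{z}=z$ we have
\[
\Pr[C(u)=1 \,\vee\, C'(u)=1] \;=\; 1-(1-\pi_z)(1-\pi'_z),
\]
which again depends only on $z$. Substituting this constant into both sides of Equation~\ref{eq:6.3}: the left-hand side is the average over $U_{a_1,z}$ of that constant and hence equals $1-(1-\pi_z)(1-\pi'_z)$, and the right-hand side is the average of the same constant over $U_{a_2,z}$ and hence equals the same thing. Thus Equation~\ref{eq:6.3} holds for all $a_1,a_2\in\mathcal{A}$; since $z$ was arbitrary, the OR of $C$ and $C'$ satisfies conditional parity under same-task composition.

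I do not expect a real obstacle here; the one point that needs a moment's care is to make sure ``treat equally'' is read as equality of the \emph{marginal} positive-classification probability for each classifier separately (so that the independence of their coin flips is what licenses the product form $1-(1-\pi_z)(1-\pi'_z)$), rather than as some joint condition on the pair $(C,C')$. Once that reading is fixed the computation is immediate, and in fact the same argument shows that any Boolean function of finitely many such ``$\mathbf{z}$-constant'' classifiers is again $\mathbf{z}$-constant and therefore satisfies conditional parity, which is the intuition the surrounding text appeals to.
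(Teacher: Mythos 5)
Your proposal is correct and follows essentially the same route as the paper: the paper likewise observes that when both classifiers treat all elements with a given $\mathbf{z}=z$ identically, every element of $U_{a_1,z}\cup U_{a_2,z}$ receives the same OR-probability, so both sides of Equation~\ref{eq:6.3} equal that common constant. Your write-up merely makes explicit the product form $1-(1-\pi_z)(1-\pi'_z)$ and the (correct) observation that the in-isolation parity hypothesis is redundant, both of which are consistent with the paper's surrounding discussion.
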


Proposition \ref{prop:conditionalparitysamez} states that \textit{equal} individuals (those with the same qualification $\mathbf{z}=z_i$) are treated equally. The same is true for individual fairness. However same-task composition, particularly with a small degree of composition, can result in significant stretches between very \textit{similar} individuals. 
Analogous difficulties arise in the case of group fairness, although they do not specifically violate the group fairness criterion.

For example, consider two groups $a_1$ and $a_2$ as specified in Table \ref{tab:groupsamez}. 
Assume that highly qualified candidates have probability $p_h=0.9$ of acceptance, and low qualification candidates have probability $p_l=0.1$. After a single composition, the difference in acceptance rate for each group,
$|\E_{u \sim a_1}\Pr[u$ accepted at least once$] - \E_{v \sim a_2}\Pr[v$ accepted at least once$]|$, increases, and does not decrease back to the original level until after 15 compositions.

Group $a_2$ quickly approaches nearly 100\% acceptance and group $a_1$ follows more slowly. Although the ratio of the acceptance rates isn't increasing, the absolute gains for group $a_2$ under composition outstrip the absolute gains for group $a_1$. Individual fairness penalizes such absolute increases, although there may be cases in the group setting where the relative difference in measurements is the more appropriate indicator of unfairness.

\begin{table}[H]
    \centering
    \begin{tabular}{|l|l|l|l|}\hline
        \textbf{Qualification} & \textbf{True probability}& \textbf{$a_1$ \%} & \textbf{$a_2$ \%}  \\\hline
        High & $.9$ & 10\% &  85\%\\ \hline 
        Low & $.1$ & 90\% &  15\%\\ \hline 
    \end{tabular}
    \caption{Sample groups with members with qualification high or low. The true probability indicates the probability that a person with a particular qualification will succeed for the task. $a_1$ \% and $a_2$ \% indicate the percentage of each group's members with that qualification level.}
    \label{tab:groupsamez}
\end{table}

\paragraph{When elements with $\mathbf{z}=z$ are not treated equally.}

In contrast to the case above, there is no guarantee that Conditional Parity will be satisfied under 'or' composition when individuals with the same $\mathbf{z}=z$ are not all treated equally. 

There are many natural cases where we might want to treat elements with the same $z$ differently, for example, if the randomness of the environment results in a bimodal distribution for one group, and a unimodal distribution for the other. 
Let us imagine that each $z \in \mathcal{Z}$ represents a range of acceptance probabilities. In each range, individuals are classified as $p_h$, high probability within this range, $p_m$, medium probability within this range, and $p_l$, low probability within this range. Each group may consist of a different mix of individuals mapped to $p_h,p_m,$ and $p_l$. 

Consider the following simple universe: for a particular $z \in \mathcal{Z}$, group $A$ has only elements with medium qualification $q_m$, group $B$ has half of its elements with low qualification $q_l$ and half with high qualification $q_h$. Choosing $p_h=1$, $p_m=.75$, $p_l=.5$, satisfies Conditional Parity for a single application. However, for two applications, the the squares in each group diverge ($.9375\neq.875$):
$1-(1-p_m)^2 \neq \frac{1}{2}(1-(1-p_h)^2) + \frac{1}{2}(1-(1-p_l)^2)$. 
Thus, Conditional Parity is violated. 
Note, however, that many of 
 the individuals with $\mathbf{z}=z$ have been drawn closer together under composition, and none have been pulled further apart. 
 In general, $\sum_{i}x_i = \sum_{i}y_i \nrightarrow \sum_{i}x_i^t = \sum_{i}y_i^t$ for any $t>1$, so this brittleness is not unexpected.

\begin{figure}
\vskip 0.2in
\begin{center}
\centerline{\includegraphics[width=.5\textwidth]{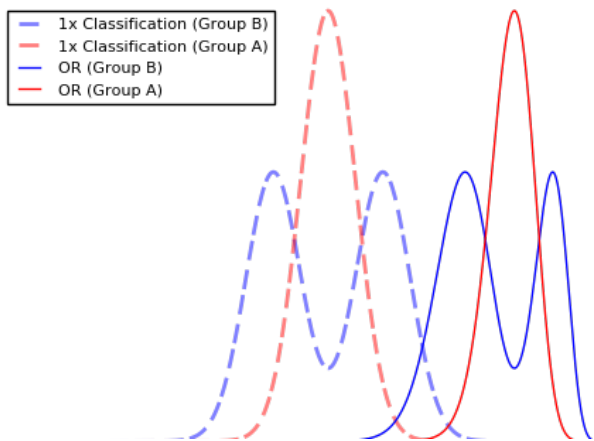}}
\caption{An illustration of the shift in groups from a single classification to the OR of two applications of the same classifier. The OR results in a more marked shift towards 1 in the unimodal population than the bimodal population. Although the two groups originally had the same mean probability of positive classification, this breaks down under OR composition.}
\label{fig:bimodal}
\end{center}
\vskip -0.2in
\end{figure}

In order to satisfy Conditional Parity under OR-composition, the classifier could sacrifice accuracy by treating all individuals with $\mathbf{z}=z$ equally. However, this necessarily discards useful information about the individuals in $A$ to satisfy a technicality.

This simple observation implies that in some cases we may observe failures under composition for conditional parity, even when individual fairness is satisfied. Indeed, notice that an individually fair classifier that treats all elements with $\mathbf{z}=z$ equally satisfies conditional parity, and if all of the probabilities of positive classification are greater than $\frac{1}{2}$, then we have seen that the OR of two such classifiers will also be individually fair. However, as we see above, those same two classifiers will only satisfy conditional parity independently.

\paragraph{Same-task Functional Composition Summary}

Although group fairness definitions make no guarantees on the treatment of individuals, the contrast between how Conditional Parity behaves under OR-composition when individuals with the same value of $\mathbf{z}$ are treated equally or not is worth considering. In some cases we may observe failures under OR-composition for Conditional Parity, even when Individual Fairness is satisfied, and failure to satisfy Individual Fairness when Conditional Parity is satisfied. 
This brittleness extends to other settings like selecting a cohort of exactly $n$ elements and satisfying calibration under composition, and to other logical functions as well as constrained settings.

\paragraph{Multiple Task Functional Composition}
In multiple-task functional composition, classifiers (for potentially distinct tasks) are combined to form a single output with its own fairness considerations. In the group setting, we are also faced with the question of the appropriate choice of protected set and stratification set when more than one task influences an outcome. For example, although protected sets may frequently overlap (eg, race and gender), stratification sets may be very different.

The results in the single outcome setting are very similar to the multiple outcome setting, which we discuss next. Briefly, there are cases both where interactions between different stratification and protected sets result in the predicted unfairness we would expect from the individual fairness results, and cases where no unfairness is detected. However, in the case in which no unfairness is detected, it is not necessarily clear whether this is due to a weakness of the requirements (in particular for socially meaningful subgroups) or a genuinely uninteresting statistical artifact. As the proof techniques and results are nearly identical, we focus our discussion in the multiple-task setting.

\subsubsection{Multiple Task Composition}
For multiple task composition, we consider two issues. First, we show how to extend our results from individual fairness to show that for a large set of tasks and \tiebreakingfunctions, classifiers which are group fair independently can result in unfair \competitivecompositions. Second, we discuss cases where conditional parity based definitions may \textit{fail} to detect multiple task composition problems that are intuitively unfair, and consider how subgroup fairness is impacted by composition.

\paragraph{Extending Individual Fairness Results}
For illustrative purposes, we consider two tasks $T,T'$ with protected attribute sets $\mathcal{A},\mathcal{A'}$ and stratification sets $\mathcal{Z},\mathcal{Z'}$ combined with \competitivecomposition~with a \tiebreakingfunction~$\Tiebreak$ to solve the single-slot composition problem. Concretely, let us consider the tasks of advertising home goods, with protected attributes of race and gender and stratification set denoting interest in purchasing home goods, and advertising jobs in technology, with protected attributes of race and gender, and stratification set denoting applicant qualification. We denote the probabilities of positive classification as $p_u,p_u'$ as in previous sections. Recall $\Tiebreak_u(T)$ denotes the probability of choosing $T$ if classified positively for both $T$ and $T'$.

For the system to satisfy multiple task conditional parity, it must be the case that the probabilities of positive classification for each task satisfy both
\[\frac{1}{|U_{a_1,z}|}\sum_{u \in U_{a_1,z} }p_u(1-p_u')+p_up_u'\Tiebreak_u(T)
=
\frac{1}{|U_{a_2,z}|}\sum_{u \in U_{a_2,z} }p_u(1-p_u')+p_up_u'\Tiebreak_u(T) \]
for all $a_1,a_2 \in \mathcal{A}$ and for all $z \in \mathcal{Z}$, and 
\[\frac{1}{|U_{a_1',z'}|}\sum_{u \in U_{a_1',z'} }p_u'(1-p_u)+p_up_u'\Tiebreak_u(T')
=
\frac{1}{|U_{a_2',z'}|}\sum_{u \in U_{a_2',z'} }p_u'(1-p_u)+p_up_u'\Tiebreak_u(T') \]
for all $a_1',a_2' \in \mathcal{A'}$ and all $z' \in \mathcal{Z'}$.

These two equations simplify, using the conditional parity of the original classifiers, to 

\[\frac{1}{|U_{a_1,z}|}\sum_{u \in U_{a_1,z} }p_up_u'(\Tiebreak_u(T)-1)
=
\frac{1}{|U_{a_2,z}|}\sum_{u \in U_{a_2,z} }p_up_u'(\Tiebreak_u(T)-1) \tag{6.7} \label{eq:6.7}\]
for all $a_1,a_2 \in \mathcal{A}$ and for all $z \in \mathcal{Z}$, and 
\[\frac{1}{|U_{a_1',z'}|}\sum_{u \in U_{a_1',z'} }p_up_u'(\Tiebreak_u(T')-1)
=
\frac{1}{|U_{a_2',z'}|}\sum_{u \in U_{a_2',z'} }p_up_u'(\Tiebreak_u(T')-1) \tag{6.8} \label{eq:6.8}\]
for all $a_1',a_2' \in \mathcal{A'}$ and all $z \in \mathcal{Z'}$.

In order to show failure to satisfy conditional parity under \competitivecomposition, we need to show how to construct $C$ and $C'$ such that the sums in Equations \ref{eq:6.7} and \ref{eq:6.8} above do not balance out, violating the equalities.

\begin{figure}
\begin{center}
\centerline{\includegraphics[width=\textwidth]{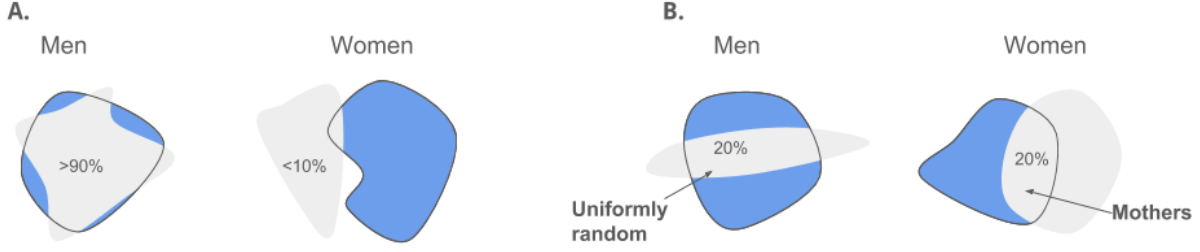}}
\caption{\textbf{A.} When the two tasks are related, one will `claim' a larger fraction of one gender than another, leading to a smaller fraction of men remaining for classification in the other task (shown in blue). Conditional parity will detect this unfairness. \textbf{B.} When the tasks are unrelated, one task may `claim' the same fraction of people in each group, but potentially select a socially meaningful subgroup, eg parents. Conditional parity will fail to detect this subgroup unfairness, unless subgroups, including any subgroups targeted by classifiers composed with, are explicitly accounted for.}
\label{fig:group_related}
\end{center}
\vskip -0.4in
\end{figure}

There are many cases where failing to satisfy conditional parity under \competitivecomposition~is clearly a violation of our intuitive notion of group fairness. For example, let's return to our advertising example where home-goods advertisers have no protected set, but high-paying jobs have gender as a protected attribute. Under composition, home-goods out-bidding high-paying jobs ads for women will clearly violate the conditional parity condition for the job ads (See Figure \ref{fig:group_related}). We formalize this idea in the theorem below.

\begin{theorem}\label{thm:multiconditionalparity} For any two tasks $T,T'$ with protected sets $\mathcal{A},\mathcal{A'}$, stratification sets $\mathcal{Z},\mathcal{Z'}$ such that $\cup_{z \in \mathcal{Z}}u_z = U$ and $\cup_{z \in \mathcal{Z'}}u_z = U$, and \tiebreakingfunction~$\Tiebreak$ such that there exist $a_1,a_2\in \mathcal{A}$, $z \in \mathcal{Z}$, $z' \in \mathcal{Z}$ such that at least one of $U_{a_1,z} \cap U_{z'}$ and $U_{a_2,z} \cap U_{z'}$ is nonempty and 
\[\frac{1}{|U_{a_1,z}|}\sum_{u \in U_{a_1,z}\cap U_{z'}}\Tiebreak_u(T) \neq \frac{1}{|U_{a_2,z}|}\sum_{u \in U_{a_2,z}\cap U_{z'}}\Tiebreak_u(T)\]
there exists a pair of classifiers $C,C'$ which satisfy conditional parity in isolation but not under \competitivecomposition.
\end{theorem}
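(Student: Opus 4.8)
The plan is to construct the witnessing pair $(C,C')$ explicitly: $C'$ is chosen so that the two tasks compete only over the $\mathcal{Z}'$-cell $U_{z'} := \{u \in U : \mathbf{z}'(u) = z'\}$, and $C$ is chosen so that the tie-breaking imbalance assumed in the hypothesis is not washed out in the composed marginal for task $T$.

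First I would let $C'$ bid with probability $1$ on every $u \in U_{z'}$ and with probability $0$ on every $u \in U \setminus U_{z'}$. Since $p'_u$ is constant on each cell of the partition $\mathcal{Z}'$, this $C'$ satisfies conditional parity with respect to $(\mathcal{A}',\mathcal{Z}')$ in isolation, for free. Next, for an arbitrary $C$ I would compute the composed marginal for task $T$: for $u \notin U_{z'}$ the competitor never bids and $\Pr[\mathcal{S}(u)_T=1] = p_u$, while for $u \in U_{z'}$ the competitor always bids and $\Pr[\mathcal{S}(u)_T=1] = p_u(1-p'_u) + p_u p'_u\,\Tiebreak_u(T) = p_u\,\Tiebreak_u(T)$. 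Summing over a cell $U_{a,z}$ and cancelling the contribution $\frac{1}{|U_{a,z}|}\sum_{u\in U_{a,z}}p_u$, which is $a$-independent precisely because $C$ itself satisfies conditional parity (this is the reduction that produced Equation \ref{eq:6.7}), the conditional-parity-for-$T$ requirement of the composed system at cell $z$ becomes: the map
\[ a \;\longmapsto\; \frac{1}{|U_{a,z}|}\sum_{u \in U_{a,z}\cap U_{z'}} p_u\bigl(1 - \Tiebreak_u(T)\bigr) \]
is constant over $a \in \mathcal{A}$. We only need this to fail at $\{a_1,a_2\}$; what happens at other cells is irrelevant, since one violation already defeats conditional parity under \competitivecomposition.

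It then remains to pick $C$ that is conditionally fair in isolation but makes this map differ at $a_1$ and $a_2$. I would start from the constant classifier $C \equiv q$ for some $q \in (0,1)$, which trivially satisfies conditional parity; the map then equals $q\bigl(\tfrac{|U_{a_i,z}\cap U_{z'}|}{|U_{a_i,z}|} - \tfrac{1}{|U_{a_i,z}|}\sum_{u\in U_{a_i,z}\cap U_{z'}}\Tiebreak_u(T)\bigr)$ at $a_i$. By the hypothesis the second terms are unequal, so the map is already non-constant unless the cell-fractions $|U_{a_i,z}\cap U_{z'}|/|U_{a_i,z}|$ happen to differ by exactly the compensating amount. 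In that exceptional case I would exploit the fact that conditional parity constrains only the per-cell \emph{average} of $C$, not its pointwise values: redistribute $p_u$ inside the cell $U_{a_1,z}$ — raise it on $U_{a_1,z}\cap U_{z'}$, lower it on $U_{a_1,z}\setminus U_{z'}$ — keeping the cell-average equal to $q$. The set $U_{a_1,z}\setminus U_{z'}$ is nonempty in exactly this exceptional case (if both cells sat inside $U_{z'}$ the two fractions would both equal $1$ and the map would already be non-constant), so such a redistribution exists, and it changes the value of the map at $a_1$ while leaving it at $a_2$ untouched. Thus $C$ and $C'$ are conditionally fair in isolation but their \competitivecomposition~violates conditional parity for task $T$.

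The main obstacle is exactly the $(1-\Tiebreak_u(T))$/cardinality terms: the naive constant construction, which suffices in the individual-fairness analogue (Theorem \ref{theorem:choiceunfairness}), need not defeat conditional parity on its own, because the mass $|U_{a_i,z}\cap U_{z'}|$ claimed by the competitor can accidentally offset the tie-breaking imbalance. Getting around this is precisely where one uses that conditional parity is an averaged, group-level constraint, which is what licenses redistributing $C$ within a stratification cell. A residual wrinkle is the fully degenerate sub-case in which every $u \in U_{a_1,z}\cap U_{z'}$ and $u \in U_{a_2,z}\cap U_{z'}$ strictly prefers $T$ (so $1-\Tiebreak_u(T)\equiv 0$ and redistributing $C$ does nothing); there the entire effect of the composition lands on task $T'$, as in Lemma \ref{lemma:funnelunfairness}, and needs separate handling.
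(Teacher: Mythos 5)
Your construction is correct in its main line and follows the same skeleton as the paper's proof: both arguments cancel the $a$-independent term $\frac{1}{|U_{a,z}|}\sum_{u\in U_{a,z}}p_u$ using conditional parity of $C$ in isolation, reduce the composed requirement for task $T$ to constancy in $a$ of $\frac{1}{|U_{a,z}|}\sum_{u\in U_{a,z}\cap U_{z'}}p_up'_u(\Tiebreak_u(T)-1)$, and then perturb one classifier to break that constancy at $\{a_1,a_2\}$. The difference is where the perturbation lives. The paper keeps both classifiers constant on their stratification cells (with all probabilities strictly below $1$) and shifts $p'$ uniformly by $\alpha$ on $U_{z'}$; you freeze $C'$ as the indicator of $U_{z'}$ and instead redistribute $C$ inside the single cell $U_{a_1,z}$, exploiting the fact that conditional parity constrains only cell averages. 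Your version buys something real: you correctly observe that the relevant increment involves $1-\Tiebreak_u(T)$ rather than $\Tiebreak_u(T)$, so the cardinality fractions $|U_{a_i,z}\cap U_{z'}|/|U_{a_i,z}|$ can exactly offset the assumed tie-breaking imbalance, and your within-cell redistribution is designed to escape that coincidence. The paper's displayed increment drops this $-1$ term, so it silently lands in the non-compensating case.

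The one place your argument (and, for the same reason, the paper's) does not close is the degenerate corner you flag: if $\Tiebreak_u(T)=1$ for every $u$ in both intersections, then $p_up'_u(\Tiebreak_u(T)-1)\equiv 0$ there, no choice of $C,C'$ of either form disturbs the task-$T$ marginal on the cells $U_{a_1,z}$ and $U_{a_2,z}$, and the hypothesis is still satisfiable since the normalized cardinalities may differ. Pushing the violation onto task $T'$ requires a parity failure across $\mathcal{A}'$-cells, which the stated hypothesis does not supply (it gives nothing if, say, $\mathcal{A}'$ is a singleton). So treat that sub-case as an additional hypothesis to be excluded rather than something to be ``handled separately''; with it excluded, your proof is complete, and it is somewhat more careful than the paper's own perturbation argument.
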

\begin{proof}
We construct $C,C'$ that satisfy conditional parity in isolation, but not when combined with \competitivecomposition for the setting outlined above. First, construct $C,C'$ such that each element with $\mathbf{z}=z$ and each element with $\mathbf{z'}=z'$ is treated equally under $C$ and $C'$, respectively. Furthermore, require that every element has probability of positive classification $<1$.

Assume that the classifiers still satisfy conditional parity under \competitivecomposition. Therefore we have by Equations \ref{eq:6.7} and \ref{eq:6.8}:
\[\frac{1}{|U_{a_1,z}|}\sum_{u \in U_{a_1,z} }p_up_u'(\Tiebreak_u(T)-1)
=
\frac{1}{|U_{a_2,z}|}\sum_{u \in U_{a_2,z} }p_up_u'(\Tiebreak_u(T)-1)\]
for all $a_1,a_2 \in \mathcal{A}$ and for all $z \in \mathcal{Z}$, and 
\[\frac{1}{|U_{a_1',z'}|}\sum_{u \in U_{a_1',z'} }p_up_u'(\Tiebreak_u(T')-1)
=
\frac{1}{|U_{a_2',z'}|}\sum_{u \in U_{a_2',z'} }p_up_u'(\Tiebreak_u(T')-1)\]
for all $a_1',a_2' \in \mathcal{A'}$ and all $z \in \mathcal{Z'}$.

Let's first consider positive classification for $T$. Since by construction each element $u$ with $\mathbf{z}=z$ is treated equally by $C$ (has the same value for $p_u$), we can simplify the equation above to
\[\frac{1}{|U_{a_1,z}|}\sum_{u \in U_{a_1,z} }p_u'(\Tiebreak_u(T)-1)
=
\frac{1}{|U_{a_2,z}|}\sum_{u \in U_{a_2,z} }p_u'(\Tiebreak_u(T)-1) \tag{6.9} \label{eq:6.9}\]
Now let's rewrite the sums in terms of the intersecting sets in $\mathcal{Z'}$, letting $p_{z'}'$ denote the probability of positive classification for elements with $\mathbf{z'}=z'$ by $C'$.
\[\frac{1}{|U_{a_1,z}|}\sum_{z'\in\mathcal{Z'}}\sum_{\substack{u \in U_{a_1,z}\\ \cap U_{z'} }}p_{z'}'(\Tiebreak_u(T)-1)
=
\frac{1}{|U_{a_2,z}|}\sum_{z'\in\mathcal{Z'}}\sum_{\substack{u \in U_{a_2,z}\\ \cap U_{z'} }}p_{z'}'(\Tiebreak_u(T)-1)\]

By assumption, $\exists$ $a_1,a_2,z'$ such that
\[\frac{1}{|U_{a_1,z}|}\sum_{u \in U_{a_1,z}\cap U_{z'}}\Tiebreak_u(T) \neq \frac{1}{|U_{a_2,z}|}\sum_{u \in U_{a_2,z}\cap U_{z'}}\Tiebreak_u(T)\]

Now, let us modify $p_{z'}'$ for all $u \in U_{z'}$ by adding $\alpha$, that is $p_{z'}' = p_u' +\alpha$. (This is possible because by assumption all acceptance probabilities are strictly less than 1.) Thus we add 
\[\frac{1}{|U_{a_1,z}|}\sum_{u \in U_{a_1,z}\cap U_{z'}}\alpha\Tiebreak_u(T) \neq \frac{1}{|U_{a_2,z}|}\sum_{u \in U_{a_2,z}\cap U_{z'}}\alpha\Tiebreak_u(T)\]
breaking the equality in Equation \ref{eq:6.9}, which completes the construction and the proof.
\end{proof}

If (1) the representation is proportional but the \tiebreakingfunction~weights are not or (2) if the representation was not proportional to begin with, then it is easy to violate conditional parity under \competitivecomposition. In the simplest case, where the \tiebreakingfunction~is strict and the same for all individuals, this reduces to different size of intersection for each group. In our advertising example, if more women are targeted for home good advertisements then men, but each person prefers job advertisements, then the system will fail to satisfy conditional parity under \competitivecomposition. 

As in the case of individual fairness, \randomizeandclassify will fix this problem.

\begin{theorem}For any two tasks $T,T'$ with protected sets $\mathcal{A},\mathcal{A'}$ and $\mathcal{Z},\mathcal{Z'}$, if classifiers $C,C'$ satisfy conditional parity independently but not under \competitivecomposition, then \randomizeandclassify of the two classifiers will satisfy conditional parity for both tasks.
\end{theorem}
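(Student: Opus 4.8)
The plan is to mirror the proof of Theorem~\ref{theorem:simplemultiple}, observing that \randomizeandclassify merely rescales each classifier's acceptance probabilities by a fixed constant, and that conditional parity---being an assertion that conditional acceptance rates are equal across protected groups---is preserved under uniform rescaling.

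First I would write down the acceptance probability of the composed system. Fix the task distribution $\mathcal{X} \in \Delta(\mathcal{C})$ that \randomizeandclassify uses. Since the same $\mathcal{X}$ and the same pair of classifiers are applied to every universe element, the event ``$C$ is sampled'' is independent of the individual, and in particular independent of $\mathbf{a}$ and $\mathbf{z}$. Writing $q = \Pr_{t \sim \mathcal{X}}[t = C]$ and $q' = \Pr_{t \sim \mathcal{X}}[t = C']$, the probability that the system assigns $1$ to element $u$ in the coordinate for task $T$ is exactly $q\,p_u$, where $p_u = \E_r[C(u)]$, and likewise $q'\,p_u'$ for $T'$; moreover the system never outputs two $1$'s, and the two tasks occupy distinct output coordinates, so multiple-task conditional parity reduces to checking parity of each coordinate's marginal separately.

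Next I would verify conditional parity for $T$. By Definition~\ref{def:conditionalparity} it suffices to show that for every $z \in \mathcal{Z}$ and all $a_1,a_2 \in \mathcal{A}$, $\frac{1}{|U_{a_1,z}|}\sum_{u \in U_{a_1,z}} q\,p_u = \frac{1}{|U_{a_2,z}|}\sum_{u \in U_{a_2,z}} q\,p_u$. Pulling the constant $q$ out of both group averages, this is precisely Equation~\ref{eq:6.1}, i.e.\ the hypothesis that $C$ satisfies conditional parity in isolation; the $0$-outcome coordinate then follows by complementation, since the outcome space is binary. The argument for $T'$ is identical with $q'$ and $p_u'$ in place of $q$ and $p_u$, using the isolation-conditional-parity of $C'$ (Equation~\ref{eq:6.2}). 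Hence the composed system satisfies conditional parity for both tasks.

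I expect essentially no serious obstacle. The only points needing a word of care are (i) that the task-selection probability does not depend on the individual---hence can be factored out of a group average---which is immediate from Algorithm~\ref{alg:randomizethenclassify}, and (ii) that parity of the per-coordinate marginal is determined entirely by the acceptance probability when outcomes are binary. It is also worth remarking, as in the individual-fairness case, that the hypothesis ``$C,C'$ fail conditional parity under \competitivecomposition'' is never used in the proof: it only motivates the statement. The real content is that \randomizeandclassify \emph{always} preserves conditional parity, exactly paralleling Theorem~\ref{theorem:simplemultiple}.
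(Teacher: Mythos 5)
Your proposal is correct and matches the paper's own proof: both factor the composed acceptance probability as $\Pr[t\sim\mathcal{X}=T]\cdot\Pr[C(u)=1]$, note that the task-selection probability is a constant independent of $\mathbf{a}$ and $\mathbf{z}$, and invoke the isolation conditional parity of each classifier. Your additional remark that the ``fails under task-competitive composition'' hypothesis is never actually used is also consistent with the paper's argument.
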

\begin{proof}
Consider the probability that $S(u)_T=1$ using \randomizeandclassify. 
\[\Pr[S(u)_T=1] = \Pr[t \sim \mathcal{X} = T]\Pr[C(u)=1]\]
As the probability that $t \sim \mathcal{X}$ is identical for all settings of $a\in \mathcal{A},z\in \mathcal{Z}$, and the original classifier $C$ satisfied conditional parity, it follows that \[\Pr[S(u)_T=1|\mathbf{a}=a_1,\mathbf{z}=z] =\Pr[S(u)_T=1|\mathbf{a}=a_2,\mathbf{z}=z] \] 
for all $a_1,a_2\in \mathcal{A}$ and for all $z \in \mathcal{Z}$.

The argument for $S(u)_{T'}$ proceeds symmetrically.
\end{proof}

Thus, we've shown, for many \tiebreakingfunctions~and for many non-identical tasks (either with same $\mathcal{A}$ or different $\mathcal{A}$), that composition can result in  group-level unfairness and that the same strategy that was effective for mitigating this unfairness for individual fairness is also effective for conditional parity based definitions.

\paragraph{Detecting Subgroup Unfairness under Composition with Conditional Parity}
Conditional parity is not always a reliable test for fairness at the subgroup level under composition.
In general, we expect conditional parity based definitions of group fairness to detect unfairness in multiple task compositions reasonably well when there is an obvious interaction between protected groups and task qualification, as observed empirically in \cite{lambrecht2016algorithmic} and \cite{datta2015automated}. However, there are some cases where it will fail to detect what we might intuitively feel to be unfairness at the subgroup level, which we discuss below. 

\begin{definition}[Subgroup Unfairness] Given a protected attribute set $\mathcal{A}$ and a stratification set $\mathcal{Z}$, a predictor $Y$ is subgroup unfair if for any socially meaningful subgroup $S$, $Y$ fails to satisfy conditional parity with respect to the attribute set $\A \cup \{1_S\}$.
\end{definition}

Socially meaningful subgroups can take many forms and will likely depend on context. For example, consider two advertising campaigns, one run by a hospital intended to increase cancer screening in older adults, and one run by a grocery chain aimed at getting subscribers for a new home grocery delivery service. The hospital is primarily targeting older adults and is aware that in the past there have been different levels of outreach to different racial groups, despite similar cancer risk. The grocery service, on the other hand, is primarily targeting women, the default grocery shopper for most households, and is also concerned with preventing racial discrimination. We'll now see how simply satisfying conditional parity can lead to poor outcomes for a \textit{subgroup}, namely older women.

First, we introduce the notion of unrelated tasks.
\begin{definition}[Unrelated tasks]\label{def:unrelatedtasks} Two tasks $T,T'$ are considered unrelated if for all $a\in \mathcal{A}$, $a'\in \mathcal{A}'$, $z \in \mathcal{Z}$ and $z' \in \mathcal{Z'}$
\[\Pr[\mathbf{z}=z | \mathbf{a}'=a', \mathbf{z}'=z'  ] = \Pr[\mathbf{z}=z]\]
\[\Pr[\mathbf{z}'=z' | \mathbf{a}=a, \mathbf{z}=z] = \Pr[\mathbf{z}'=z']\]
where probability is taken over selection of members of the universe.
\end{definition}

Definition \ref{def:unrelatedtasks} captures the intuition that two tasks are \textit{unrelated} if the protected set and stratification membership for the either task is not predictive of quality for the other task. Returning to our groceries versus cancer screening public service announcement example, if the two advertisers use conditional parity with the protected set $\{$race$\}$, then it's clear that the combination of protected attribute and quality for one task isn't indicative of quality for the other task, as there are equal numbers of men and women of each race at each age (see Figure \ref{fig:psa}). 

\begin{figure}
\vskip 0.2in
\begin{center}
\centerline{\includegraphics[width=90mm]{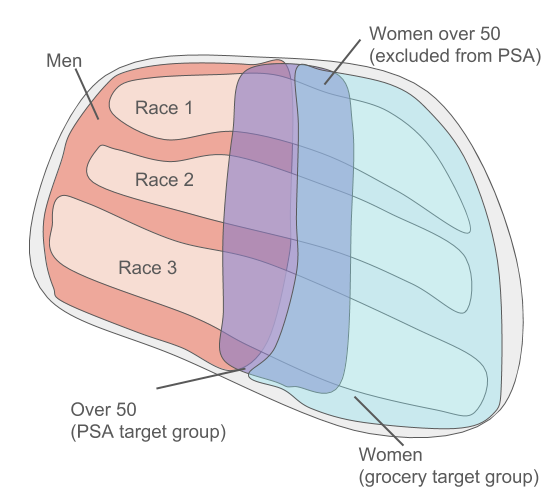}}
\caption{There are equal numbers of men and women over age 50 and of each race. The public service announcement classifier (ensuring parity based on race) and the grocery classifier, targeting primarily women, will not interfere with each other from the perspective of Conditional Parity. However, women over 50 are excluded from the public service announcement when grocery targeting wins.}
\label{fig:psa}
\end{center}
\vskip -0.2in
\end{figure}

\begin{lemma}\label{lemma:unrelatedgroup}
If two tasks $T,T'$ are \textit{unrelated}, and  $\Tiebreak_u(T)=\Tiebreak_v(T)$ for all $u,v \in U$, then any pair of classifiers $C,C'$ that treat each universe element with qualification $\mathbf{z}=z$ equally will always satisfy conditional parity under \competitivecomposition.
\end{lemma}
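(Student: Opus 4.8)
The plan is to verify directly the two conditional-parity conditions for task-competitive composition, Equations~\ref{eq:6.7} and~\ref{eq:6.8}, from the hypotheses. Interpret the hypothesis in the natural way (mirroring the construction in the proof of Theorem~\ref{thm:multiconditionalparity}): $C$ assigns the same probability to all elements sharing a $\mathbf{z}$-value and $C'$ assigns the same probability to all elements sharing a $\mathbf{z}'$-value. Since $\Tiebreak_u(T)$ is constant across $U$, write $\theta$ for the common value, so that $\Tiebreak_u(T') = 1-\theta$ is also constant. Write $p_z$ for the common value of $p_u$ over $u \in U_z$ and $p'_{z'}$ for the common value of $p'_u$ over $u \in U_{z'}$, and let $z'(u)$ denote the $\mathbf{z}'$-value of $u$. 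Under these substitutions the left side of Equation~\ref{eq:6.7} becomes $(\theta-1)\,p_z \cdot \frac{1}{|U_{a_1,z}|}\sum_{u \in U_{a_1,z}} p'_{z'(u)}$, and likewise for $a_2$, so the whole claim reduces to showing the inner average $\frac{1}{|U_{a_1,z}|}\sum_{u \in U_{a_1,z}} p'_{z'(u)}$ does not depend on the protected value.

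The key step is to re-index the inner sum by $z' \in \mathcal{Z}'$: $\frac{1}{|U_{a_1,z}|}\sum_{u \in U_{a_1,z}} p'_{z'(u)} = \sum_{z' \in \mathcal{Z}'} p'_{z'} \cdot \frac{|U_{a_1,z}\cap U_{z'}|}{|U_{a_1,z}|} = \sum_{z' \in \mathcal{Z}'} p'_{z'}\,\Pr[\mathbf{z}'=z' \mid \mathbf{a}=a_1,\mathbf{z}=z]$, where the last equality just unpacks the conditional probability under uniform selection of universe elements. Now the "unrelated tasks" hypothesis (Definition~\ref{def:unrelatedtasks}) gives $\Pr[\mathbf{z}'=z' \mid \mathbf{a}=a_1,\mathbf{z}=z] = \Pr[\mathbf{z}'=z']$, which does not involve $a_1$; hence the inner average equals $\sum_{z'} p'_{z'}\Pr[\mathbf{z}'=z']$, the same for $a_1$ and $a_2$, and Equation~\ref{eq:6.7} holds. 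Equation~\ref{eq:6.8} follows by the symmetric computation, using instead that $C'$ is constant on $\mathbf{z}'$-classes, that $\Tiebreak_u(T')=1-\theta$ is constant, and the second identity in Definition~\ref{def:unrelatedtasks}, namely $\Pr[\mathbf{z}=z \mid \mathbf{a}'=a',\mathbf{z}'=z'] = \Pr[\mathbf{z}=z]$.

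There is no real obstacle here; essentially all the work is in fixing notation and performing the re-indexing correctly. The one conceptual point worth stating explicitly is \emph{why} the hypotheses are exactly the right ones: because both classifiers collapse each stratification class to a single probability, the only thing that could break parity across $a_1,a_2$ for task~$T$ is a difference in the within-$(a,z)$ distribution of the \emph{other} task's stratification variable, and "unrelatedness" is precisely the statement that this distribution is identical for every protected group (and the constant tie-breaking prevents the tie-breaker itself from reintroducing a dependence). I would close with a one-line remark that this lemma is the companion to Theorem~\ref{thm:multiconditionalparity}: it isolates the regime in which conditional parity is preserved under competitive composition, and it is exactly this regime in which the subgroup pathology of Figure~\ref{fig:psa} (e.g.\ excluding older women) can occur with conditional parity raising no alarm.
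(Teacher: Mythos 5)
Your proposal is correct and takes essentially the same route as the paper's proof: both reduce the claim (after cancelling the constant tie-breaking probability and the constant per-$\mathbf{z}$-class value of $C$) to showing that the average of $p'_u$ over $U_{a,z}$ is independent of $a$, and both establish this by re-indexing that sum over $z' \in \mathcal{Z}'$ and invoking unrelatedness to equate the fractions $|U_{a,z}\cap U_{z'}|/|U_{a,z}|$ across protected values. The only cosmetic difference is that you start from the already-simplified parity conditions for competitive composition rather than re-deriving $\Pr[S(u)_T=1]$ from scratch, and your concluding remark about why these hypotheses are exactly the right ones matches the paper's discussion surrounding the lemma.
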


Lemma \ref{lemma:unrelatedgroup} is interesting to us because in our example of groceries versus cancer public service announcements above, we have seen a case where unrelated tasks interact to cause subgroup unfairness, and yet the Lemma tells us conditional parity will be satisfied anyway.

\begin{proof}
We want to show that if two tasks are unrelated and the \tiebreakingfunction~is the same for all individuals, then if two classifiers satisfy conditional parity in isolation, they will satisfy conditional parity under \competitivecomposition. Equivalently, if
\[\Pr_{C, u \sim U_{a_1,z}}[C(u)=1 |\mathbf{a}=a_1,\mathbf{z}=z ]=\Pr_{C, u \sim U_{a_2,z}}[C(u)=1 |\mathbf{a}=a_2,\mathbf{z}=z ]\]
\[\Pr_{C', u \sim U_{a_1,z}}[C'(u)=1 |\mathbf{a}'=a_1',\mathbf{z}'=z' ]=\Pr_{C', u \sim U_{a_2,z}}[C'(u)=1 |\mathbf{a}'=a_2',\mathbf{z}'=z' ]\]
for all $a_1,a_2 \in \mathcal{A}$ and for all $a_1',a_2' \in \mathcal{A}'$ and for all $z \in \mathcal{Z}$ and for all $z' \in \mathcal{Z'}$ then
\[\Pr[S(u)_T=1 |\mathbf{a}=a_1,\mathbf{z}=z ]=\Pr[S(u)_T=1 |\mathbf{a}=a_2,\mathbf{z}=z ]\]
\[\Pr[S(u)_{T'}=1 |\mathbf{a}'=a_1',\mathbf{z}'=z' ]=\Pr[S(u)_{T'}=1 |\mathbf{a}'=a_2',\mathbf{z}'=z' ]\]
for all $a_1,a_2 \in \mathcal{A}$ and for all $a_1',a_2' \in \mathcal{A}'$ and for all $z \in \mathcal{Z}$ and for all $z' \in \mathcal{Z'}$, where probability is taken over the randomness of the composed system and the choice of individual in each protected set and stratification set setting. Here we have used $\Pr_C[E]$ to denote the probability of the event $E$ taken over the randomness of $C$.

First let us consider $\Pr[S(u)_T=1]$.

As $\Tiebreak_u(T)$ is the same for all $u \in U$, we replace this with the constant $\rho$ for simpler notation. 

\[\Pr_{\Tiebreak,C,C'}[S(u)_T=1 ] = \rho\Pr[C(u)=1 \wedge C'(u)=1] + \Pr[C(u)=1 \wedge C'(u)=0]\]

where probability is taken over the randomness of the classifiers and the \tiebreakingfunction.

As the random bits of the two classifiers $C$ and $C'$ are independent, we can write
\[\Pr[C(u)=1 \wedge C'(u)=0 ] = \Pr[C(u)=1]\Pr[C'(u)=0 ]\]
\[\Pr[C(u)=1 \wedge C'(u)=1] = \Pr[C(u)=1] \Pr[C'(u)=1]\]
for each part of the sum.

Now we want to reason about $\Pr[S(u)_T=1 ] $ for  particular settings of $\mathbf{a}$ and $\mathbf{z}$.
Let us write out these conditions for each protected set and stratification pair as sums to see more clearly. 
\[\frac{1}{|U_{a_1,z}|}\sum_{u \in U_{a_1,z}}\Pr[C(u)=1]\Pr[C'(u)=0] =\frac{1}{|U_{a_2,z}|}\sum_{u \in U_{a_2,z}}\Pr[C(u)=1]\Pr[C'(u)=0] \]
\[\frac{1}{|U_{a_1,z}|}\sum_{u \in U_{a_1,z}}\Pr[C(u)=1]\Pr[C'(u)=1] =\frac{1}{|U_{a_2,z}|}\sum_{u \in U_{a_2,z}}\Pr[C(u)=1]\Pr[C'(u)=1] \]
where probability is taken over the randomness of the classifiers.

Using the usual notation of $p_u,p_u'$ to simplify, we have  

\[\frac{1}{|U_{a_1,z}|}\sum_{u \in U_{a_1,z}}p_u(1-p_u') =\frac{1}{|U_{a_2,z}|}\sum_{u \in U_{a_2,z}}p_u(1-p_u')\]
\[\frac{1}{|U_{a_1,z}|}\sum_{u \in U_{a_1,z}}p_up_u' =\frac{1}{|U_{a_2,z}|}\sum_{u \in U_{a_2,z}}p_up_u' \]

By our assumption that each element with $\mathbf{z}=z$ is treated equally in $C$, we can simplify the above to 

\[\frac{1}{|U_{a_1,z}|}\sum_{u \in U_{a_1,z}}(1-p_u') =\frac{1}{|U_{a_2,z}|}\sum_{u \in U_{a_2,z}}(1-p_u')\]
\[\frac{1}{|U_{a_1,z}|}\sum_{u \in U_{a_1,z}}p_u' =\frac{1}{|U_{a_2,z}|}\sum_{u \in U_{a_2,z}}p_u' \]
Recall that the two tasks are unrelated, so the qualification of an element $u$, that is which $z' \in \mathcal{Z'}$ it lands in, is unrelated to its memberships in $\mathcal{A}$ and $\mathcal{Z}$. Thus the share of each $\mathbf{z'}=z'$, and therefore each $p_u'$, for $a_1$ and $a_2$ is the same, satisfying the equality. To see this more clearly, we split up the sums by membership in $\mathcal{Z}'$.
\[\frac{1}{|U_{a_1,z}|}\sum_{z' \in \mathcal{Z}'}\sum_{u \in U_{a_1,z}\cap U_{z'}}p_u' =\frac{1}{|U_{a_2,z}|}\sum_{z' \in \mathcal{Z}'}\sum_{u \in U_{a_2,z}\cap U_{z'}}p_u' \]
By our assumption of unrelatedness, $\frac{1}{|U_{a_1,z}|}*|U_{a_1,z}\cap U_{z'}| = \frac{1}{|U_{a_2,z}|}*|U_{a_2,z}\cap U_{z'}| $,
call these values $t_{a,z,z'}$ for convenience. Now we have that 
\[\sum_{z' \in \mathcal{Z}'}t_{a_1,z,z'}p_u' =\sum_{z' \in \mathcal{Z}'}t_{a_2,z,z'}p_u' \]
and the equality clearly holds.

The argument for $\Pr[S(u)_{T'}=1 ] $ proceeds analogously. 
\end{proof}

The key point of this lemma is that such classifiers will never fail to satisfy conditional parity under \competitivecomposition, even though in some cases a subgroup is clearly treated poorly, as in the groceries versus public service announcement example.

Notice that many pairs of individually fair classifiers meet the requirements for Lemma \ref{lemma:unrelatedgroup}. In the previous discussion of individual fairness, we also observed how \competitivecomposition, even with equal preferences for all universe elements, results in significant violations of individual fairness. Indeed, the characterization of Lemma 
\ref{lemma:unrelatedgroup} is incomplete, and other settings may similarly not violate conditional parity, but still intuitively be unfair. Of particular concern in practice is the possibility that classifiers may \textit{learn} to exclude certain well-defined subgroups in order to achieve conditional parity by \textit{simulating} unrelatedness.

Recall our simple advertising system with two types of advertisers, employers and home-goods advertisers. 
If, in response to gender disparity caused by task-competitive composition, classifiers iteratively adjust their bids to try to achieve Conditional Parity, they may unintentionally \textit{learn} themselves into a state that satisfies Conditional Parity with respect to gender, but behaves poorly for a socially meaningful subgroup. (See Figure \ref{fig:group_unrelated}). For example, let's imagine that home goods advertisers aggressively advertise to women who are new parents, as their life-time value to the advertiser ($\mathcal{Z}$) is the highest of all universe elements. A competing advertiser for  jobs, noticing that its usual strategy of recruiting all people with skill level  $\mathbf{z'}=z'$ equally is failing to reach enough women, bids more aggressively  on women. 
By bidding more aggressively, the advertiser increases the probability of showing ads to women (for example by outbidding low-value competition), but not to women who are bid for by the home goods advertiser (a high-value competitor), resulting in a high concentration of ads for women who are {\em not} mothers, while still failing to reach women who {\em are} mothers. Furthermore, the systematic exclusion of mothers from job advertisements can, over time, be even more problematic,  
as it may contribute to the stalling of careers.
In this case, the system discriminates against mothers without necessarily discriminating against fathers.

\begin{figure*}
        \centering
        \begin{subfigure}[b]{0.475\textwidth}
            \centering
            \includegraphics[width=\textwidth]{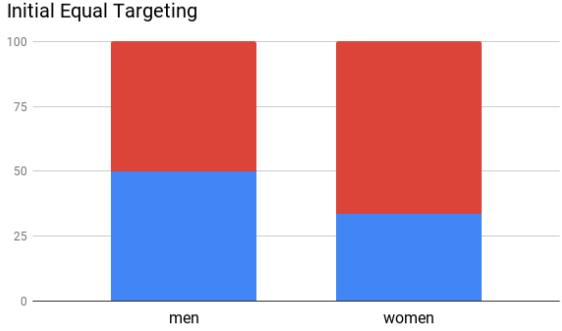}
            \caption{Initial equal targeting of qualified men and women results in violation of conditional parity, as there are unequal rates of ads shown (blue).}    
            \label{fig:initialsettingscoarse}
        \end{subfigure}
        \hfill
        \begin{subfigure}[b]{0.475\textwidth}  
            \centering 
            \includegraphics[width=\textwidth]{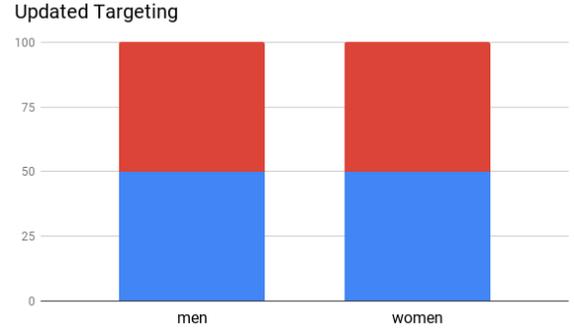}
            \caption
            {By increasing the targeting of women, the jobs advertiser ``fixes'' conditional parity at the coarse group level.}    
            \label{fig:updatedsettingscoarse}
        \end{subfigure}
        \vskip\baselineskip
        \begin{subfigure}[b]{0.475\textwidth}   
            \centering 
            \includegraphics[width=\textwidth]{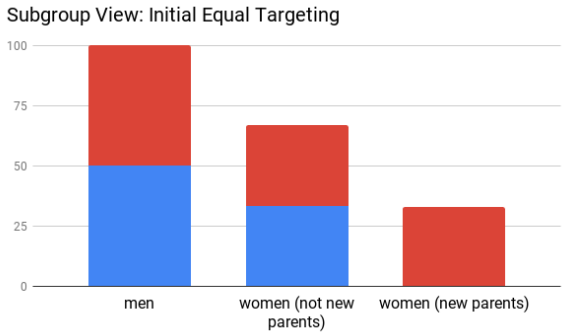}
            \caption{At the subgroup level, it's clear that the lack of conditional parity is due to ``losing'' all of the new parent women to the home-goods advertiser.}    
            \label{abs:fig:initialsettingssub}
        \end{subfigure}
        \quad
        \begin{subfigure}[b]{0.475\textwidth}   
            \centering 
            \includegraphics[width=\textwidth]{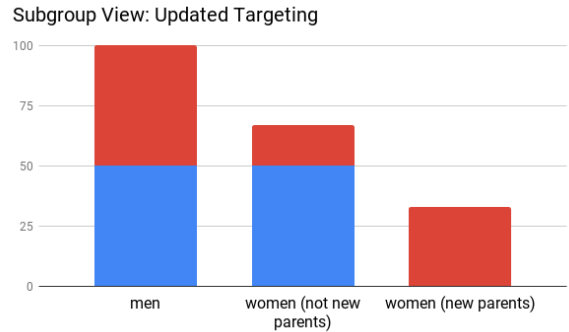}
            \caption{New targeting strategy increases ads shown to non new-parent women, but continues to exclude new parent women.}   
            \label{abs:fig:updatedsettingscoarse}
        \end{subfigure}
        \caption
        {Home-goods advertisers aggressively target mothers, out-bidding the jobs advertiser. When the jobs advertiser bids more aggressively on `women' (b) the overall rate of ads shown to `women' increases, but mothers may still be excluded (d), so $\Pr[\text{ad }| \text{qualified, woman}]> \Pr[\text{ad }| \text{ qualified, mother}]$.} 
        \label{fig:group_unrelated}
    \end{figure*}

Excluding subgroups is not specifically a problem of composition. It is certainly possible that a malicious advertiser could take the same approach even without composition coming into the picture. However, we stress this potential for subgroup exclusion because subgroups are likely to be targeted or have higher competition in practice, and predicting or identifying all such possible subgroups may be difficult. Furthermore, the attributes used to define these subgroups may be unavailable to some learning procedures, exacerbating the problem of detection. Practitioners whose previous strategies (treat everyone of equal qualification equally) may fall apart under composition, leading them to pursue strategies that lead to such subgroup unfairness, even though their strategy and statistical results are nominally fair. 

Although problematic (large) subgroup semantics are part of the motivation for \cite{kearns2017gerrymandering,hebert2017calibration}, the danger of composition is that the features describing this subset may be missing from the feature set of the jobs classifier, rendering the protections proposed in \cite{kearns2017gerrymandering} and \cite{hebert2017calibration} ineffective.
In particular, we expect sensitive attributes like parental status are unlikely to appear (or are illegal to collect) in employment-related training or testing datasets.

\subsubsection{Dependent Composition}
The extension of Individual Fairness results for dependent compositions have several of the same caveats we have seen in the extensions of functional composition and multiple-task composition. We do not belabor these points, and instead focus on the more interesting aspects of the extensions.

\paragraph{Cohort Selection}
The Cohort Selection Problem in the group setting is subtly different from the individual fairness setting, in that we want to maintain an equivalence in probability, rather than preventing an increase in distance between probability distributions. As we saw previously for same-task functional composition, these problems don't always align. 

\paragraph{The Offline Cohort Selection Problem}
\begin{enumerate}
    \item \textbf{When all elements with $\mathbf{z}=z$ are treated equally.} In this case, the extension of \permutethenclassify to the group setting is straightforward. For each permutation we consider, the probability that a given element appears in a particular location in the ordering is independent of its protected attributes, and so the equivalence is maintained.
    \item \textbf{When elements with $\mathbf{z}=z$ are not treated equally.} A simple counterexample suffices to show that \permutethenclassify will not satisfy conditional parity in this case. Consider the universe with three elements $a,b_1,b_2$. Consider a classifier which satisfies conditional parity across the groups $\{a\}$ and $\{b_1,b_2\}$ by accepting $a$ with probability $0.75$, and $b_1$ with probability $1$, and $b_2$ with probability $0.5$. That is $\E[C(a)]=0.75]$, $\E[C(b_1)]=1$, and $\E[C(b_2)]=0.5$. Under \permutethenclassify with $n=1$, the probability of acceptance for $a$ is $0.31$, whereas the probability for acceptance for the group of $\{b_1,b_2\}$ is $0.34$, and the equivalence between the groups is broken, even though all individual elements are pulled closer together. 
\end{enumerate}

\paragraph{Online Cohort Selection}
The most interesting point in the extensions in the online setting is that in some cases Statistical Parity, which  corresponds to conditional parity with $|\mathcal{Z}|=1$, can be satisfied when individual fairness or more general settings of Conditional Parity do not have fair solutions. Consider Theorem \ref{thm:adversarialunknownlength}, which states that if the ordering of the elements of the universe is adversarial and the stream is of an unknown length, that we cannot select exactly $n$ elements and satisfy individual fairness. Indeed, this extends to the more interesting cases of conditional parity in a straightforward way, but not to statistical parity if the desired proportions are known.

\begin{theorem}\label{conditionalparityunknownlength} For any ordering, there exists a solution to the online or offline cohort selection problem for Statistical Parity for unknown length as long as the required proportion of the output for each protected attribute setting is known.
\end{theorem}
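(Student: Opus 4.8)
The plan is to exploit the fact that statistical parity --- Definition~\ref{def:conditionalparity} with $|\mathcal{Z}|=1$ --- constrains only the \emph{average} selection rate inside each protected class, leaving us completely free to choose \emph{which} members of a class we take. Write $\mathcal{A}=\{a_1,\dots,a_m\}$ and $U_{a_j}$ for the set of universe elements with $\mathbf{a}=a_j$. For a cohort of size $n$, statistical parity forces $\Pr[\text{selected}\mid\mathbf{a}=a_j]$ to be the same for every $j$, hence equal to $n/|U|$, and therefore forces exactly $n\,|U_{a_j}|/|U|$ elements to be drawn from $U_{a_j}$. The hypothesis of the theorem is precisely that this target count --- equivalently the output proportion $q_j:=|U_{a_j}|/|U|$ --- is known in advance, so the per-class quota $n_j:=n q_j$ is determined \emph{without} knowing $|U|$. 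That single missing ingredient is exactly what powers the impossibility of Theorem~\ref{thm:adversarialunknownlength}.

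Given the quotas, the algorithm is immediate. Maintain a counter $c_j$ (initially $0$) for each class; on the arrival of an element $u$ with $\mathbf{a}(u)=a_j$, select $u$ and increment $c_j$ if $c_j<n_j$, and otherwise reject $u$. Each decision is final and depends only on the element in hand, so this is a legal online rule; it is oblivious to the stream length and to the (possibly adversarial) ordering, which handles the online/unknown-length case. For the offline case the same rule works verbatim, or one may simply take a uniformly random $n_j$-subset of each $U_{a_j}$.

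Correctness has two parts. First, since $n\le|U|$ we have $n_j=n\,|U_{a_j}|/|U|\le|U_{a_j}|$, so every quota is filled exactly and the output has size $\sum_j n_j=n$. Second, $\Pr[\text{selected}\mid\mathbf{a}=a_j]$ equals the average, over a uniformly random member of $U_{a_j}$, of that member's selection probability; since exactly $n_j$ of the $|U_{a_j}|$ members are selected --- for \emph{every} ordering --- this average is $n_j/|U_{a_j}|=n q_j/|U_{a_j}|=n/|U|$, the same for all $j$, i.e.\ statistical parity holds. If some $n q_j$ fails to be an integer one replaces the fixed quotas by a randomized rounding with $\E[c_j]=n q_j$ and $\sum_j c_j=n$; neither the size count nor the parity computation is affected.

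The point I expect to need the most care is explaining why this does \emph{not} contradict Theorem~\ref{thm:adversarialunknownlength}. The rule gives different individuals wildly different selection probabilities: under an adversarial order, the element placed first within its class is taken with probability $1$ and the one placed last with probability $0$, so the mechanism is neither individually fair nor conditionally fair for any stratification that separates such individuals. Those stronger guarantees would require a uniform \emph{within-class} marginal equal to $n/|U|$, and that is exactly what an unknown stream length forbids, since the decision on the $i$-th arrival is fixed once $i$ is known while the target marginal depends on the unknown $|U|$. Statistical parity survives only because it averages over the whole class, and that average is pinned down the moment the quotas $n_j$ are --- which is why the hypothesis that the output proportions are known is indispensable; without it one could not fix the $n_j$, and a per-class instance of the Theorem~\ref{thm:adversarialunknownlength} argument would re-establish impossibility.
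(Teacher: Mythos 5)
Your proposal is correct and is essentially the paper's own argument: the paper's proof likewise fixes the per-class quota $p_a n$ from the known proportions and simply selects the first $p_a n$ arriving elements with each attribute setting, noting afterward (as you do) that this satisfies statistical parity while badly violating individual fairness and richer conditional-parity requirements under adversarial ordering. Your additional remarks on non-integral quotas and on why this does not contradict Theorem~\ref{thm:adversarialunknownlength} are consistent elaborations of the same approach rather than a different route.
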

\begin{proof}
Consider the system which knows that a $p_{a}$ fraction of all elements chosen should have a particular protected attribute setting $a$. In order to satisfy statistical parity in the online setting, the system simply selects the first $p_an$ elements with protected attribute setting $p_a$. The system will select the desired $p_a$ fraction for each protected attribute setting $a$. 
\end{proof}

This solution will clearly violate individual fairness, as well as many variants of conditional parity. To see how this technique fails for conditional parity more generally, imagine that an adversary ordered men from highest qualification to lowest, and women from lowest to highest. This system would select the most qualified men and the least qualified women, clearly violating conditional parity for a stratification set relating to job qualification, but not violating Statistical Parity. 

Although this case is  contrived, it's important to notice that such a system can appear to be fair (if one is satisfied with statistical parity as a notion of fairness), but clearly results in undesirable long-term effects. In our example above, deliberately hiring under-qualified women (as opposed to the qualified women later in the stream) can poison future decisions, and be used to justify hiring fewer women in the future.\footnote{This particular problem was called out in the original `Catalog of Evils' in \cite{dwork2012fairness}.} 

Furthermore, it may be difficult to determine that the ordering is adversarial when the relevant subgroup attributes are missing. For example, a system may assume that the ordering of its inputs is drawn uniformly at random because the distribution of observed attributes is statistically indistinguishable from random. However, an adversary may still be able to manipulate the ordering to benefit or harm socially meaningful subgroups which are not explicitly described by the feature set of a particular system. For example, an adversary may select an ordering which appears random with respect to talent and gender, but places parents later in the ordering than non-parents. Without a clear signal of parental status, a system will have difficulty determining that the ordering can have negative consequences on the parent subgroup. 

In practice, many cases of seemingly adversarial ordering will be difficult to identify and may not even arise through malicious intent. For example, imagine that a bank, in an effort to fairly process loan applications, has used a fair classifier to assign individuals to descriptive `bins' which indicate their probability of repaying a loan. Loan officers interact with applications through the loan review tool, which only displays the relevant bin information. The loan review tool is based on spreadsheet software and sorts the applications by last name by default. Even though the last name is not displayed to the loan officer (only the descriptive bin), the resulting system will still be unfair if a limited number of loans are available, as loans are more likely to be granted to the first applications processed (Adams and Alvarez) rather than the last applications processed (Zhang and Zou). 

\paragraph{Subset Classification Problem}
\paragraph{Positive weight in $\mathcal{Y}$ for all elements.} Consider a distribution $\mathcal{Y}$ over subsets of $U$ in which at least one element $u \in U$ is contained in each protected attribute and stratification pair with positive weight.
In this case, the same procedural adjustment proposed in Section \ref{section:positiveweights} will suffice, as each element $w$ will be selected with probability $q_{min}p_w$, where $p_w$ is the probability of selection with uniform inputs. Thus, the equality $\frac{q_{min}}{|U_{a_1,z}|}\sum_{u \in U_{a_1,z}}p_u = \frac{q_{min}}{|U_{a_2,z}|}\sum_{u \in U_{a_2,z}}p_u$ holds.

\paragraph{Comparable outcomes}
Unfortunately, the extension for the procedure for comparable outcomes is not as straightforward. Consider again the universe we described above with $a$ classified positively with probability $0.75$, and $b_1$ with probability $1$, and $b_2$ with probability $.5$. Take $U\backslash V$ to be $\{a,b_1,b_2\}$ and $V$ to be $\{a,b_1\}$. If the system uses such a classification procedure on $U \backslash V$, it will satisfy conditional parity. However, the same classification procedure applied to $V$ alone will not satisfy conditional parity. This setting can still be handled with a slightly stronger requirement on the behavior on $U \backslash V$.

\begin{lemma}\label{lemma:sameoutcomesconditionalparity}
Consider a distribution over subsets $\mathcal{Y}(U)$, and a classifier $C^*$ which operates on $U\backslash \mathcal{Y}(U)$. If $C^*$ satisfies conditional parity when applied to $U$, and if all outcomes of $C^*$ are in the potential outcome space of the classifier operating on $V$, then then there exists a classifier $C'$ such that the system which applies $C^*$ to $U \backslash V$ and $C'$ to $V$ satisfies conditional parity.
\end{lemma}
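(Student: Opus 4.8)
The plan is to reuse the construction of Lemma~\ref{lemma:sameoutcomes} essentially verbatim and to observe that the ``slightly stronger requirement'' on $C^*$ is exactly what makes it go through for conditional parity rather than individual fairness. Note first that the hypothesis ``$C^*$ satisfies conditional parity when applied to $U$'' means $C^*$ comes equipped with a well-defined acceptance probability $p_w^* = \E[C^*(w)]$ for \emph{every} $w \in U$ --- including the elements of the subset $V$ on which $C^*$ does not actually act --- and that for all $a_1,a_2 \in \mathcal{A}$ and all $z \in \mathcal{Z}$,
\[
\frac{1}{|U_{a_1,z}|}\sum_{w\in U_{a_1,z}}p_w^* \;=\; \frac{1}{|U_{a_2,z}|}\sum_{w\in U_{a_2,z}}p_w^*.
\]

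First I would simply copy $C^*$ onto $V$: set $C'(w) := C^*(w)$ for every $w \in V$. This is a legitimate choice of $C'$ precisely because, by hypothesis, every outcome produced by $C^*$ lies in the potential outcome space of the classifier operating on $V$; in the general (non-binary) outcome setting this feasibility is what Lemma~\ref{lemma:fairadditions} supplies, exactly as in the individual-fairness analogue. With this choice the composed system --- which applies $C^*$ on $U\setminus V$ and $C'$ on $V$ --- assigns to each $w \in U$ the outcome distribution $\dist{C^*}(w)$, so its conditional distribution given $(\mathbf{a},\mathbf{z}) = (a,z)$ is identical to that of $C^*$ on $U$, and conditional parity of the composed system follows immediately from the displayed identity. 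If $V$ is drawn from $\mathcal{Y}$, the same copy rule applies for each realization of $V$ (we just restrict it to $w\in V$), and since the per-element outcome distributions of the composed system equal $\dist{C^*}(\cdot)$ regardless of which $V$ is drawn, averaging over $\mathcal{Y}$ preserves conditional parity.

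The real content, and the step I would flag as the crux, is not a calculation but the necessity of the strengthened hypothesis: as the counterexample preceding the lemma shows, assuming only that $C^*$ satisfies conditional parity \emph{on $U\setminus V$} (the direct analogue of the subset-individual-fairness hypothesis in Lemma~\ref{lemma:sameoutcomes}) is not enough --- copying then leaves the composed system's group averages over the $B$-type strata skewed by whatever $C^*$ happened to over- or under-accept within $V$. Requiring conditional parity of $C^*$ on all of $U$ pins those within-$V$ totals to values consistent across protected groups, which is exactly the invariant the copy construction needs. A secondary, routine obstacle is the outcome-space feasibility of $C'$ in the general-outcome setting, discharged exactly as in Lemma~\ref{lemma:fairadditions}; and it is worth noting in passing that, unlike Proposition~\ref{prop:conditionalparitysamez}, here $C'$ in isolation on $V$ need not itself satisfy conditional parity --- only the composed system does.
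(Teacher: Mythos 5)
Your proposal is correct and matches the paper's proof, which is exactly the one-line construction ``take $C'$ to be identical to $C^*$'' followed by the observation that the composed system then inherits conditional parity from $C^*$ on $U$. Your additional remarks on why conditional parity on all of $U$ (rather than only on $U\setminus V$) is the needed strengthening, and on outcome-space feasibility, are consistent with the paper's surrounding discussion but go beyond what its proof states explicitly.
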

\begin{proof}
Take $C'$ to be identical to $C^*$. As $C^*$ satisfies conditional parity when applied to $U$, the combination of $C'$ applied to $V $ and $C^*$ applied to $U \backslash V$ also satisfies conditional parity.
\end{proof}

\subsubsection{Group Fairness: Summary}
In this section, we've shown that composition issues are not merely artifacts of the individual fairness definition, and indeed are observed in many natural settings for group fairness definitions. 

We have shown that classifiers which appear to satisfy group fairness properties in isolation may not compose well with other fair classifiers, that the signal provided by group fairness definitions under composition is not always reliable, and that composition requires additional considerations for subgroup fairness. In particular, even if one is satisfied with a notion of actuarial fairness at the group (or individual) level, we have shown that no guarantees can be made under composition.
A promising direction for future work is the augmentation of classifiers  group fairness for large, intersecting, groups~\cite{kearns2017gerrymandering, hebert2017calibration}, as well as classifiers with Individual Fairness for large subgroups) \cite{kim2018fairness},
to incorporate contextual information, with the goal of improving composition.

\subsection{Summary of Composition Results}\label{section:summary}
Now that we have all of the core results in place, we can make several observations about what it means to have a `fair' classifier.

\paragraph{Asserting fairness or unfairness requires context.} As we've seen in the preceding sections, there are many cases where classifiers are either individually fair or  satisfy conditional parity in isolation, but fail to satisfy these definitions under composition. Furthermore Lemma \ref{lemma:positiveweights} highlights a very natural setting where a classifier which appears unfair in isolation is the right choice for constructing a fair system with composition. In particular, classifiers which seem to heavily rely on attributes ``inappropriate'' for the task (like parental status or sexual orientation), may specifically be doing so in order to prevent composition failures with other classifiers legitimately targeting based on these features. Conversely, classifiers which seem to be free of influence from ``inappropriate'' attributes in isolation may fail to provide the same protections under composition. In either direction, it's clear that certifying a classifier as fair or stating unequivocally that it is unfair requires significant understanding of the composition context in which the classifier will be employed. 

\paragraph{Augmented classifiers.} Given the need for additional context, one possible path forward is to create augmented classifiers, which provide additional information about their anticipated input distributions ($\mathcal{Y}$), operating mode and ordering ($\mathcal{X}$), and expected post-processing. In cases where a particular input ordering is not fixed in advance, we could also imagine a family of classifiers parametrized by $\mathcal{Y}$ from which the appropriate classifier may be selected at a later point when more information about $\mathcal{Y}$ is available. However, such additional information about expected input or output distributions may have significant privacy implications in cases where the entire output distribution may not have been available to other parties in the past.

\subsection{Conclusions and Future Work}\label{section:futurework}

\begin{conclusion} There is no guarantee of fairness under composition, fairness under post-processing, or resilience of fairness to arbitrary side information, for either individual fairness or group fairness.
\end{conclusion}

We have shown that na\"{\i}ve composition of fair classifiers can result in unfairness, both for individual fairness and a large class of group fairness definitions. We have also shown mechanisms to mitigate this unfairness in several settings. Finally, we concluded by suggesting that fairness is not a property of classifiers in isolation, and that to construct fair systems in practice augmented definitions of fairness with sufficient context for fair composition are desirable.

\begin{conclusion}
Fairness violations can be corrected.
\end{conclusion}
Unlike privacy, where a breach of privacy must be considered permanently irreparable, fairness is far more robust to mistakes. 
In many cases, we can remedy unfairness after the fact, and the harm does not have to be permanent.
Consider the difference between a breach of a credit reporting company's databases and a free school lunch program. When the credit reporting agency loses control of sensitive data, the best they can do is try to limit the \textit{impact} of the privacy loss, either by providing monetary compensation or credit monitoring. On the contrary, operating under the assumption that all students are equally deserving of lunch, a free or subsidized school lunch program \textit{repairs} the underlying unfairness of access to lunch money or lunch from home. In this case, the unfairness of access to lunch is repaired entirely, which is impossible with privacy loss.
Given that it is possible, either through coordinated algorithmic solutions or external interventions, to remedy unfairness in the system, it makes sense to consider not only the behavior of each component of the system, but the system as a whole.

\begin{conclusion}
Auditing and definition choices must take composition into account.
\end{conclusion}
Throughout this work, we have shown that the {\it choice of outcomes} on which to enforce fairness is critical to constructing systems which reflect the true intent of the original fairness requirement. Furthermore, we have shown that in some cases group fairness definitions may behave unexpectedly under composition. Thus, any choice of auditing or enforcement must not only carefully consider the points at which fairness is measured or enforced, but whether those conclusions will hold under composition.

\subsubsection{Future Work}
We see several directions for future work. 
First, there are likely many more mechanisms for fair composition with or without coordination in training procedures for the problems we described. In particular, investigating alternatives for \randomizeandclassify that improve allocation and have more practical utility guarantees will likely be necessary for practical adoption. We also did not explicitly show mechanisms for fair composition for constrained cohort selection, for example, in assigning students to public and private schools with limited flexibility in campuses and with potential conflicts in student preferences, which are likely to be common problems. Even if perfectly fair solutions cannot be found, there may be acceptable relaxations.

Augmented classifiers provide another avenue for exploration; how can we specify the requirements, and what are the privacy implications of the additional information that they require? Fairness at the expense of a total loss of privacy is unlikely to be an acceptable solution in practice, so understanding how tradeoffs must be made and whether parties with existing access to private information can enforce fairness is an important question to answer.

A number of the impossibility results, in particular those of constrained cohort selection, could be addressed by requiring similarity over other measures. For example, we could require that similar individuals have similar ``rewards to effort.'' 
There are many potential alternatives to explore in the economics literature.

This paper largely ignores the problem of generalization, as our results are primarily negative. However, it is important to understand the generalization properties of the constructions proposed and to understand how generalizable metrics for individual fairness can be learned and represented.


\section{Multiple Task Fairness - Empirical Intuition}\label{section:empirical}
To more clearly illustrate the potential for unfairness in realistic \competitivecompositions, we devised a simple empirical setting. 
For our motivating example, we'll consider the problem of inviting students to a seminar or a free pizza lunch offered in the same time slot on opposite sides of campus (the graduate student's dilemma). As no student can attend both, the goal is to design a system that fairly allocates at most a single invitation (for either event) to each student.

We generated a  sample data set of 100 students, each with pizza and seminar intrinsic qualifications drawn independently from $\mathcal{N}(0.5,0.25)$, that is $q_{u,p} \sim \mathcal{N}(0.5,0.25)$ and independently $q_{u,s} \sim \mathcal{N}(0.5,0.25)$.\footnote{Any values exceeding 1 or less than zero were clamped to keep all distances less than or equal to 1. If we instead discarded these values, we would have fewer equal pairs and fewer values in the extremes. Although the impact is observable empirically, the effect is not significant enough as to impact the overall trends or results.} 
We considered differences in intrinsic qualification to be each pair's true distance under the metrics for pizza and seminar, that is $\mathcal{D}_p(u,v) = |q_{u,p} - q_{v,p}|$ and $\mathcal{D}_s(u,v) = |q_{u,s} - q_{v,s}|$ respectively. 
Using these metrics, we learned two fair classifiers for each task by solving a linear program maximizing a simple objective function for each task as specified in \cite{dwork2012fairness}. 
We designed our objective functions to maximize the qualification of the recipients of invitations, while keeping to an expected number of invitations of at most $t_s=30$ and $t_p=40$ for the two tasks. We then composed the two classifiers in several compositions, the results of which are discussed below and summarized in Table \ref{tab:compositionempiricalresults}.

\begin{table}[H]
    \centering
    \begin{tabular}{|l|l|l|l|l|}\hline
        \textbf{Composition Type} & \textbf{Task} & \textbf{\% pairs in} & \textbf{Average } & \textbf{Max }  \\ 
         &  & \textbf{violation} & \textbf{violation} & \textbf{violation}  \\\hline
        $(^*)\Tiebreak(u,pizza)=0$ & Pizza & 23.0\% & 0.061 & 0.35\\ \hline 
        $(^*)\Tiebreak(u,pizza)=0$ & Seminar & - & - & -\\ \hline \hline
        $(^*)\Tiebreak(u,pizza)=1$ & Pizza & - & - & -  \\ \hline
        $(^*)\Tiebreak(u,pizza)=1$ & Seminar & 19.76\% &0.069&0.387  \\ \hline \hline
        $\Tiebreak(u,pizza)=.5$ & Pizza & 20.5\% & 0.030 & 0.173 \\ \hline
        $\Tiebreak(u,pizza)=.5$ & Seminar & 14.0\% & 0.034 & 0.185\\ \hline \hline
        $\Tiebreak(u,pizza)=q_{u,p}$ & Pizza & 41.8\% & 0.032 & 0.115 \\ \hline
        $\Tiebreak(u,pizza)=q_{u,p}$ & Seminar & 21.0\% & 0.067 & 0.413 \\ \hline \hline
        $\Tiebreak(u,pizza)=q_{u,s}$ & Pizza & 25.0\% & 0.043 & 0.284 \\ \hline
        $\Tiebreak(u,pizza)=q_{u,s}$ & Seminar & 16.3\% & 0.036 & 0.23 \\ \hline
    \end{tabular}
    \caption{Summary of composition data for our small empirical example averaged over 100 randomly generated universes of size 100. ($^*$ indicates averaged over 150 trials) \textbf{\% pairs in violation} is the fraction of pairs which whose distances increased under the composition exceeded their distances under the metric. 
    The \textbf{average violation} is the average difference between the distance under the composition and in the original metric. Note that this value is not fractional, and so \textit{underestimates} the relative increase in distance. For example, a 0.01 increase in distance for a pair originally at distance 0.25 is equivalent to a $4\%$ relative increase in distance. The \textbf{maximum violation} is the maximum difference between the distance under the composition and in the original metric, which again, is not fractional. The maximum reported is the average of all maximums observed, not the maximum of all observed. (The maximum exceeded 0.55).}
    \label{tab:compositionempiricalresults}
\end{table}

\textbf{Strict Preference (strict ordering):} Recall that $\Tiebreak(u,pizza)$ is the probability of selecting pizza if a positive classification is received for both pizza and seminar. The first two compositions considered strict ordering or strict preference. For these cases, note that the strictly preferred task or the first task in the ordering has no fairness violations, as its outcomes  are equivalent to the setting where the classifier was run independently.\footnote{The difference in the values reported for the two strict compositions are due to the asymmetric $t_p$ and $t_s$. If we had used identical $t_p$ and $t_s$, we would have expected nearly identical classifiers (given that we used the same distribution of qualifications).} In the Figures \ref{fig:pizzawithoutseminar} and \ref{fig:pizzawithseminar}, we illustrate the probability that an individual with a particular $q_{u,p},q_{u,s}$ pair is invited for pizza with the intensity of the color. The difference in the pizza allocation between the independent classification and the strictly ordered composition (with seminar invitations issued first) is clearly visible in the change of color intensity. 
\begin{figure}[H]
    \centering
    \includegraphics[width=0.7\textwidth]{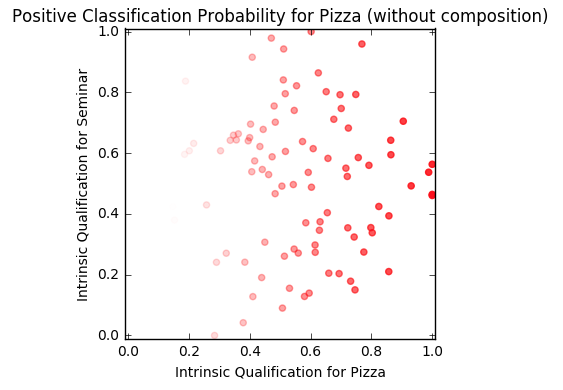}
    \caption{The intensity of the color shows the probability that an individual will be classified positively for pizza when seminar invitations are not considered. The color intensity fades from right to left as qualification for pizza decreases. Note that the vertical line through each $x-$coordinate includes individuals who are equally qualified for pizza with different qualifications for seminar, and that the color intensity is the same along the whole line. } \label{fig:pizzawithoutseminar}
\end{figure}

\begin{figure}[H]
    \centering
    \includegraphics[width=0.7\textwidth]{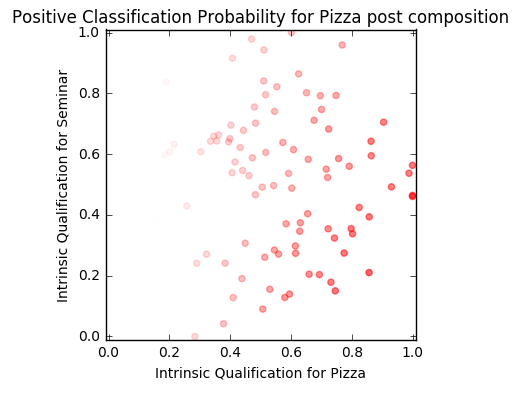}
    \caption{Compared with Figure \ref{fig:pizzawithoutseminar}, the impact of first inviting to seminar and then to pizza in a strict order is clearly visible. We can see that instead of having equal color along vertical lines, there is a significant lightening of the color as seminar qualification increases. That is, the color intensity fades from right to left \textit{and} from bottom to top, rather than only right to left.} \label{fig:pizzawithseminar}
\end{figure}



If the ordering of the preference is switched, we can see the same pattern for seminar invitations in Figures \ref{fig:seminarnocomp} and \ref{fig:seminarchoicecomp}.

\begin{figure}[H]
    \centering
    \includegraphics[width=0.7\textwidth]{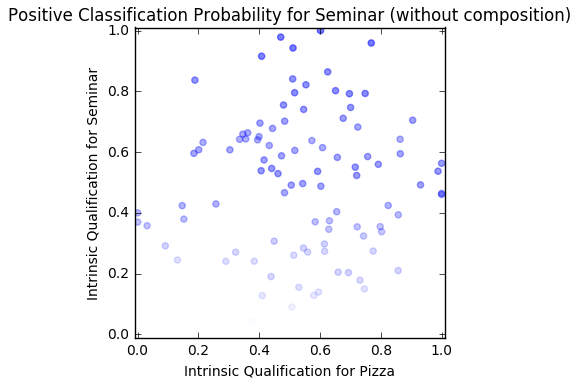}
    \caption{The intensity of blue color indicates the probability of invitation to seminar without composition. Note that the color intensity fades only from top to bottom as qualification for seminar decreases. Each horizontal line corresponds to a particular qualification for seminar, and thus has the same color intensity across the line.} \label{fig:seminarnocomp}
\end{figure}

\begin{figure}[H]
    \centering
    \includegraphics[width=0.7\textwidth]{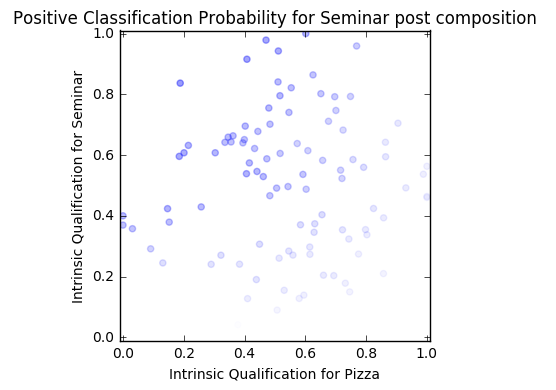}
    \caption{In contrast to Figure \ref{fig:seminarnocomp}, each horizontal line does not have the same color intensity, and intensity now fades from bottom to top \textit{and} left to right.} \label{fig:seminarchoicecomp}
\end{figure}

\textbf{Nontrivial Preferences:} If we consider instead a \competitivecomposition where the preference for pizza and seminar are equal, we see a less dramatic, but two-sided impact as \textit{both} tasks now have pairs with distance violations, as illustrated in Figures \ref{fig:pizzaiq} and \ref{fig:seminariq}.

\begin{figure}[H]
    \centering
    \includegraphics[width=0.7\textwidth]{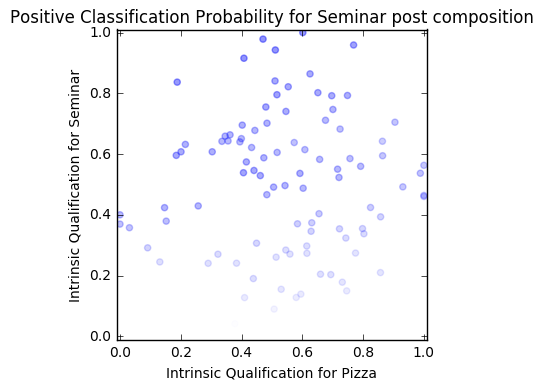}
    \caption{Task-competitive composition with $\Tiebreak(u,pizza)=.5$. Now we see the intensity of color fade in a less dramatic, diagonal pattern, which analogously appears in Figure \ref{fig:pizzaequalchoicecomp}. } \label{fig:seminarequalchoicecomp}
\end{figure}

\begin{figure}[H]
    \centering
    \includegraphics[width=0.7\textwidth]{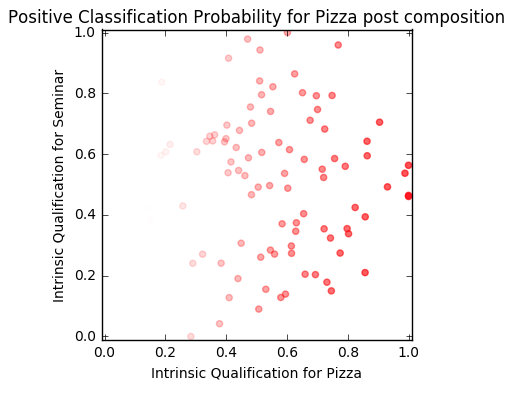}
    \caption{Task-competitive composition with $\Tiebreak(u,pizza)=.5$. The intensity of color fades in a less dramatic, diagonal pattern. } \label{fig:pizzaequalchoicecomp}
\end{figure}

Figures \ref{fig:pizzaequalchoicecomp} and \ref{fig:seminarequalchoicecomp} are the first setting we've seen where the composition results in unfairness for \textit{both} tasks, not just one or the other. As noted in Table  \ref{tab:compositionempiricalresults}, 
the maximum violations are smaller than in \competitivecomposition, but occur in both tasks, and the total number of pairs impacted is still significant.

Finally, we examine the \tiebreakingfunction which, when both pizza and seminar are options, selects pizza with probability equal to the intrinsic qualification for pizza. That is, if a person has qualification $q_{u,p}$, then their preference is $\Tiebreak(u,pizza)=q_{u,p}$. In figures \ref{fig:pizzaiq} and \ref{fig:seminariq} below, we see the same problem of color variation changing over vertical (or horizontal) lines, but also stretch in the scale of the intensity variation for pizza. 

\begin{figure}[H]
    \centering
    \includegraphics[width=0.7\textwidth]{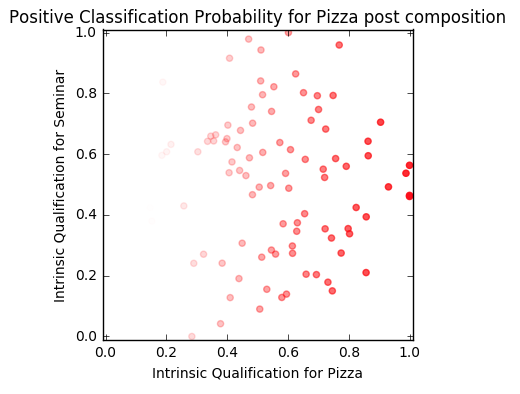}
    \caption{Task-competitive composition with $\Tiebreak(u,pizza)=q_{u,p}$. As in the previous compositions, we see a decrease in intensity from bottom to top. However, there is also a stretch in the intensity of the color, particularly visible when comparing the leftmost and rightmost elements. In particular, we see that the saturation on the right-hand side is more intense, and the saturation on the left-hand side is less intense than Figure \ref{fig:pizzawithoutseminar}.} \label{fig:pizzaiq}
\end{figure}

\begin{figure}[H]
    \centering
    \includegraphics[width=0.7\textwidth]{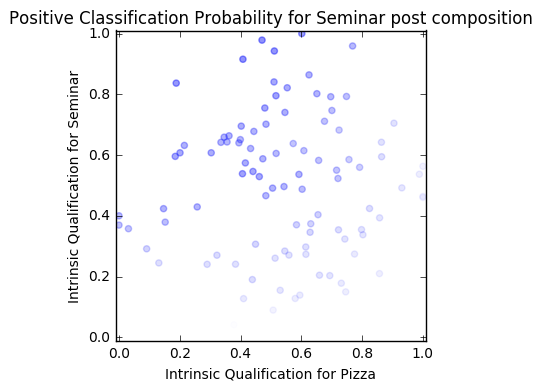}
    \caption{Task-competitive composition with $\Tiebreak(u,pizza)=q_{u,p}$. As in the previous compositions, we see that the intensity fades both from top to bottom and left to right.} \label{fig:seminariq}
\end{figure}

Our simple experiment gives good intuition for how easily a simple composition can result in unintended unfairness. Furthermore, as shown in Table \ref{tab:compositionempiricalresults}, the magnitude of the violations gives us intuition that a small $\varepsilon,\delta$ approximate definition is unlikely to fix the problem, as maximum violations are routinely larger than $0.2$.

\subsection{Fair Composition}
If we run \randomizeandclassify with pizza and seminar tasks each having probability $0.5$ in $\mathcal{X}$, we should see a reduction in expected utility of about 50\%, as each classifier only gets access to approximately half of the candidates. Empirically, we observed this to be about $50\%$ over $15$ trials. However, we can adjust the learned classifiers to try to compensate for this. Because our utility functions are very simple in our setting (giving out pizza or seminar invitations always has positive utility), these adjustments can be very simple. For example, we could increase the probability of positive classification in each classifier by $10\%$ for each candidate (with a maximum of $100\%$). This modification reduces the loss, as expected, to about $40\%$. Admittedly, our experimental setting has a very simple objective function, so more complicated settings may require more nuanced modifications to their training procedures to improve performance under \randomizeandclassify.
\end{document}